\titleformat{\subsubsection}[runin]{\normalfont\normalsize\bfseries}{}{0pt}{}
\newtheorem{thm}{Theorem}[section]
\newtheorem{lem}[thm]{Lemma}
\newtheorem{cor}[thm]{Corollary}
\newtheorem{prop}[thm]{Proposition}
\newtheorem{fct}[thm]{Fact}
\newtheorem*{claim*}{Claim}
\theoremstyle{definition}
\newtheorem{dfn}[thm]{Definition}
\newtheorem{rem}[thm]{Remark}
\newcommand*{\N}{\ensuremath{\mathbb{N}}}
\newcommand*{\Hyp}{\ensuremath{\mathcal{H}}}
\newcommand{\Ld}[1]{\ensuremath{\textsc{L}\left( #1 \right)}}
\newcommand{\Iso}[1]{\ensuremath{\mathbf{Iso}\left( #1 \right)}}
\newcommand{\VC}[1]{\ensuremath{\textsc{VC}\left( #1 \right)}}
\newcommand{\Fiso}[2]{\mathbf{Iso}_{#1} \left( #2 \right)}
\title{On statistical learning of graphs}
\author{
  Vittorio Cipriani\\\vspace{-0.3cm}
  Vienna University of Technology, Wiedner Hauptstraße 8–10/104, 1040, Vienna, Austria\\
  \texttt{vittorio.cipriani17@gmail.com}
  \and
  Valentino Delle Rose\\ \vspace{-0.3cm}
  ANVUR (Italian National Agency for the Evaluation of the University and Research Systems), Via Ippolito Nievo 35, 00153, Roma, Italy\\
  \texttt{valentino.dellerose@anvur.it}
  \and
  Luca San Mauro\\\vspace{-0.3cm}
  University of Bari (Department of Humanistic Research and Innovation), Piazza Umberto I, 1, 70121, Bari, Italy\\
  \texttt{luca.sanmauro@gmail.com}
  \and
  Giovanni Soldà\\\vspace{-0.3cm}
Ghent University (Deparment of Mathematics: Analysis, Logic and Discrete Mathematics), Krijgslaan 281, 8 9000, Ghent, Belgium\\
  \texttt{giovanni.a.solda@gmail.com}
  }
\date{}
\begin{document}

\maketitle

\begin{abstract}
We study \emph{PAC} and \emph{online learnability} of hypothesis classes formed by copies of a countably infinite graph $G$, where each copy is induced by permuting $G$’s vertices. This corresponds to learning a graph’s labeling, knowing its structure and label set. We consider classes where permutations move only finitely many vertices. Our main result shows that PAC learnability of all such finite-support copies implies online learnability of the full isomorphism type of $G$, and is equivalent to the condition of \emph{automorphic triviality}. We also characterize graphs where copies induced by swapping two vertices are not learnable, using a relaxation of the extension property of the infinite random graph. Finally, we show that, for all $G$ and $k>2$, learnability for $k$-vertex permutations is equivalent to that for $2$-vertex permutations, yielding a four-class partition of infinite graphs, whose complexity we also determine using tools coming from both descriptive set theory and computability theory.
\end{abstract}

\tableofcontents

\section{Introduction}
Graphs are pervasive in computer science and mathematics, as they naturally represent data carrying some structural content. In this paper, we borrow concepts and tools from statistical learning theory to address the problem of \emph{learning countably infinite graphs}. In a nutshell, we will formulate the problem of learning a given graph $G$ by analyzing hypothesis classes consisting of copies of $G$ that are produced by simply permuting the \lq \lq labels\rq\rq\ of its vertices. More formally, following an established convention in computable structure theory (see, e.g.~\cite{Montalban}), we assume that the vertex set of $G$ consists of the set $\N$ of natural numbers and let $\Iso{G}$ denote the family of all \emph{presentations} (or, simply \emph{copies}) of $G$, namely of those isomorphic copies of $G$ which are induced by some permutation of $\N$. We then examine the learnability of natural classes $\Hyp \subseteq \Iso{G}$, either in the PAC or in the online fashion.

To the best of our knowledge, statistical learning theory has primarily concentrated on functions, leaving the study of more complex structures like graphs essentially untouched (a recent notable exception is \cite{Coregliano2024HigharityPL}, which develops a theory of high-arity PAC learning going way beyond the traditional setting). However, as we shall argue in the following paragraphs, looking into the learnability of infinite graphs is a valuable goal.
Specifically, we will see how themes from both classical statistical learning and computable structure theory  naturally emerge in our proposed framework.

\subsubsection*{Related work and motivations}
A central problem in statistical learning theory is that of determining when a given hypothesis class can be learned by an algorithm that has access only to a finite amount of information about an unknown target function within that class. To formalize this problem, several frameworks have been developed, with \emph{PAC learning} and \emph{online learning} (see \Cref{sec:stat-learning}) among the most widely used. A fundamental insight across these frameworks is that a learnable hypothesis class must exhibit some \lq\lq common feature\rq\rq\  among its hypotheses. A major theoretical question, then, is to identify which features ensure learning and which, on the contrary, hinder it. For example, it is well-known that the class of all axis-aligned rectangles in $\mathbb{R}^2$ can be learned in the PAC model, while the class of all convex polygons cannot (see, e.g.~\cite{Kearns-Vazirani}). Our work continues along this line of inquiry, by focusing on hypothesis classes whose  common feature is, roughly speaking, that they encode the very same structure in different ways (meaning that they are presentations of the same graph).

Studying structures up to a specified notion of similarity is also a common thread in \emph{computable structure theory}, the area of mathematical logic that studies the algorithmic complexity of mathematical structures (see \cite{Montalban} for a comprehensive overview of this subject). One of the most popular measures of complexity of a given structure $\mathcal{S}$ is its \emph{degree spectrum}, denoted $\mathbf{DgSp}(\mathcal{S})$, which is defined as the collection of Turing degrees of all its isomorphic copies---in fact, given that computable structure theory often assumes that structures have domain $\mathbb{N}$, one may also regard the degree spectrum of $\mathcal{S}$ as the collection of Turing degrees of elements from $\Iso{S}$. The framework we propose here provides an alternative approach to measuring the complexity of the copies of a graph, employing tools from statistical learning theory.

% want a measure ofcomplexity that is independent of the particular !-presentation

%The complexity of a structure is often studied up to \emph{isomorphism} or, as in \cite{MR2664124}, up to \emph{automorphism}. As the authors motivate, automorphisms allow the study of the symmetries of a structure and, to measure how complicated these automorphisms are, they introduce the notion of \emph{automorphism spectrum} of a structure, namely the set of Turing degrees of all of its automorphic copies (except the trivial copy given by the identity). The framework we propose in this work can be viewed as an alternative method for measuring the complexity of the automorphic copies of a graph using tools from statistical learning theory.

Our approach is also related to the emerging field of \emph{online structure theory} (see \cite{foundations1,foundations2}). While traditional computable structure theory operates in an offline setting, where a Turing machine has unrestricted access to a structure, online structure theory deals  with algorithmic
situations in which the input arrives one bit at a time but decision has to be
made instantly (a similar constraint applies to online learning). In \cite{foundations1,foundations2} the authors provide compelling motivations as to why  online structures are best understood as those structures with domain $\mathbb{N}$ and having 
operations and relations which are primitive recursive.

Finally, we examine the complexity of determining whether certain classes of graphs are learnable within a given paradigm, aiming to quantify the difficulty of the learning process. To this end, we draw on tools from \emph{descriptive set theory}—the field studying the complexity and structure of definable sets in topological spaces. In particular, we use \emph{Wadge reducibility}, which provides a framework for comparing the relative complexity of subsets of such spaces. The complexity of learnable hypothesis classes has been studied in the literature, though only within the realm of computable hypothesis classes. To the best of our knowledge, this is the first work to analyze the complexity of learnable classes from a topological perspective, without restricting to computability. That said, we will also examine the computable case in due course. Prior studies have focused on settings such as inductive inference (see \cite{beros, bberos}) and (variants of) PAC learning (see \cite{decidingVC, calvert, Sterkenburg2022-STEOCO-5, contfeatures}); to our knowledge, this is the first to consider the complexity of online learnable hypothesis classes as well. %Also we mention that  This investigation aligns with prior work in the literature. In  \cite{decidingVC} the author studied, in the general setting of function learning, the complexity of PAC learnable hypothesis classes, while in \cite{calvert} the study concerned specific classes given by computable enumerations of computable $\Pi_1^0$ classes. Related issues have also been explored in the context of 

% \lq\lq simple\rq\rq\ structures in the online context correspond to primitive recursive one with domain the natural numbers. 

% a scenario where a learner must determine whether two vertices are connected by an edge, given only a finite portion of the graph. 

% while computable structure theory deals with an \emph{offline} situation, where the Turing machine may access the desired structure with no resource bounds, in our framework, a learner must say wether two vertices are connected by an edge with the knowledge of just a finite portion of the graph. This scenario is the one studied by online structure theory, which focuses on structures that are, intuitively speaking, computable \emph{without delay}. In \cite{foundations1,foundations2} the authors provide several motivations on why \lq\lq simple\rq\rq\ structures in the online context correspond to primitive recursive one with domain the natural numbers. 

\subsubsection*{Our results} The results presented in this work fully determine the landscape of online and PAC learning of families of presentations of countable graphs induced by permutations of finite support, as depicted in \Cref{fig:landscape} below. In particular, we denote by $\Fiso{k}{G}$ the restriction of $\Iso{G}$ to the presentations of $G$ induced by those permutations which move at most $k$ many vertices. We prove the following:
\begin{itemize}
    \item \Cref{thm:at-implies-online} and \Cref{thm:finite-pac-implies-at} establish that, for all graphs $G$, $\bigcup_k \Fiso{k}{G}$ is PAC learnable if and only if $\Iso{G}$ is online learnable, with both these conditions being equivalent to the fact that $G$ is \emph{automorphically trivial} (see \Cref{definition:automorphicallytrivial} below). 
    \item \Cref{thm:ar-equals-anl} characterizes \emph{absolutely non-learnable graphs}, namely those graphs in which not even $\Fiso{2}{G}$ is PAC learnable, by means of the notion of \emph{almost randomness} (introduced in \Cref{def:almost-random}). This allows us to conclude that the random graph is absolutely non-learnable. 
    \item We show that, for any graph $G$, PAC learnability of $\Fiso{2}{G}$ implies that $\Fiso{k}{G}$  is PAC learnable, for every $k \in \N$ (\Cref{thm:2kPAC}). Also, an analogous result holds for the online learning case (\Cref{thm:2konline}). As a consequence, we can divide all graphs into the following four classes:
    \begin{enumerate}
        \item \emph{online learnable} graphs, i.e.~those graphs $G$ with $\Iso{G}$ online learnable;
        \item \emph{weakly online learnable} graphs, i.e.~those graphs $G$ for which $\Fiso{k}{G}$ is online learnable, for every $k \in \N$;
        \item \emph{weakly PAC learnable} graphs, i.e.~those graphs $G$ such that, for every $k \in \N$, $\Fiso{k}{G}$ is PAC learnable;
        \item \emph{absolutely non-learnable} graphs, i.e.~those graphs $G$ with $\Fiso{2}{G}$ not PAC learnable.
    \end{enumerate} 
    \item Clearly, it holds that: $\text{online learnable} \Rightarrow \text{weakly online learnable} \Rightarrow \text{weakly PAC learnable}.$ We prove  that, in general, none of the converse implications hold (\Cref{prop:kequiv} and \Cref{prop:onlinepacdifferent}).
\item  \Cref{thm:firstcomplexity} establishes that (weakly) online learnable and weakly PAC learnable graphs are $\boldsymbol{\Sigma}_2^0$-complete while absolutely non-learnable graphs are $\boldsymbol{\Pi}_2^0$-complete. 
\Cref{thm:firstcomplexityD2} and \Cref{thm:secondcomplexityD2} prove that the class of graphs that are weakly online but not online learnable and the class of graphs that are weakly PAC learnable but not weakly online learnable are $D_2(\boldsymbol{\Sigma}_2^0$)-complete: the definitions of such complexity classes will be given in \Cref{sec:complexityprel}. The effective versions of these theorems will be proved in \Cref{theorem:firsteffectivitazion}
%, which encompasses the effective analogs of the theorems just discussed, strengthens \cite[Theorem 4.1]{decidingVC} by identifying a proper subclass of computable hypothesis classes that are PAC learnable (defined in \Cref{sec:complexitycomputable}), yet whose complexity matches that of the general case.
\end{itemize}

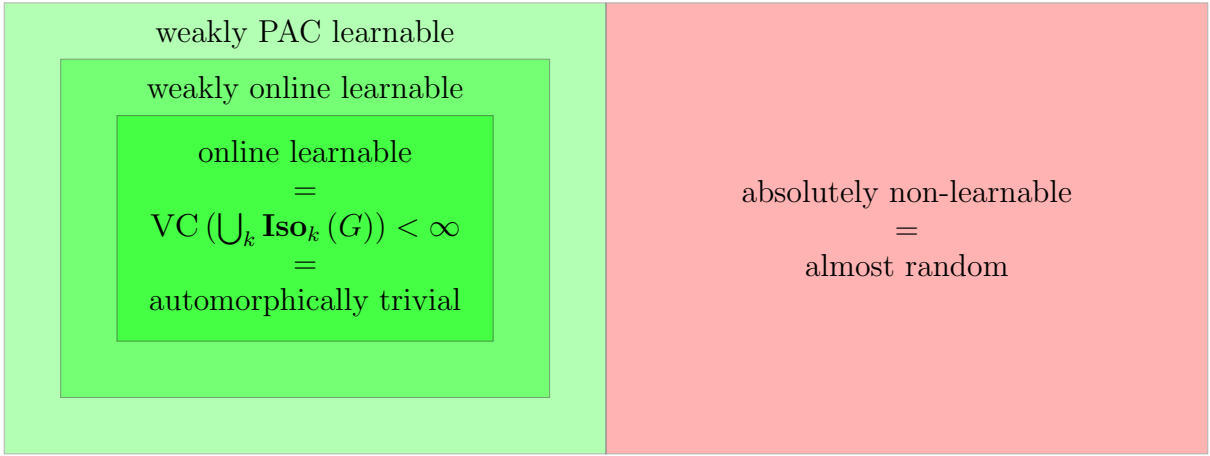
\begin{figure}[h]
    \centering
    \begin{tikzpicture}
% Outer Rectangle - "Weakly PAC learnable"
\node[draw, rectangle, minimum width=8cm, minimum height=6cm, fill=green, opacity=0.3] (outer) at (0,0) {};
\node[below=0.1] at (outer.north) {\large weakly PAC learnable};

% Middle Rectangle - "Weakly online learnable"
\node[draw, rectangle, minimum width=6.5cm, minimum height=4.5cm, fill=green, opacity=0.35] (middle) at (0,0) {};
\node[below=0.1] at (middle.north) {\large weakly online learnable};

% Inner Rectangle - "online learnable = PAC learnable = automorphically trivial"
\node[draw, rectangle, minimum width=5cm, minimum height=3cm, fill=green, opacity=0.4] (inner) at (0,0) {};
\node at (inner) {\parbox{4.5cm}{\centering \large online learnable \\ $=$ \\ $\VC{\bigcup_k \Fiso{k}{G}} < \infty$ \\ $=$ \\ automorphically trivial}};

% Right Section - "absolutely non-learnable = almost random"
\node[draw, rectangle, minimum width=8cm, minimum height=6cm, fill=red, opacity=0.3] (right) at (8,0) {};
\node at (right) {\parbox{6cm}{\centering \large absolutely non-learnable \\ $=$ \\  almost random}};

\end{tikzpicture}
 
    \caption{The landscape of statistical learning of graphs.}
    \label{fig:landscape}
\end{figure}

Here are some further comments on our results. By \cite[Theorem 3.1]{notuniversalgraphs} automorphically trivial graphs are precisely those that are \emph{punctually categorical} graphs---that is, those having a  unique primitive recursive copy with domain $\mathbb{N}$, up to primitive recursive isomorphism. The coincidence of these seemingly unrelated notions---punctual categoricity, which is an effective notion, and online/PAC learnability, in which no effective requirement is imposed---is unexpected. Nonetheless, our findings may serve as further evidence that primitive recursiveness is an apt model for capturing online scenarios. We highlight that the equivalence between online (PAC) learnable graphs and automorphically trivial graphs may also lead to the wrong conclusion that if a graph $G$ is sufficiently \lq\lq symmetric\rq\rq\, then $G$ (or rather, a suitable collection of copies of $G$) should be learnable.
However, this intuition fails: \Cref{sec:abs-non-learnable} implies that \lq\lq highly symmetric\rq\rq\ graphs, such as the \emph{countably infinite random graph}\footnote{Here, we refer to the well-known fact that the random graph $\mathcal{R}$ is \emph{homogenous}, i.e., every isomorphism between two induced subgraphs of $\mathcal{R}$ can be extended to an automorphism \cite{Erdos}.}, are absolutely non-learnable.

Finally, it is worth noticing that the study of what we called \lq\lq weak\rq\rq\ notions of learnability is readily motivated by the equivalence between learnability of the whole isomorphism type and automorphic triviality. Indeed, in view of this correspondence, the learning process seems to be facilitated by more local properties than isomorphism, which, by contrast, is \lq\lq global\rq\rq\ in nature.

%Finally, it is worth noticing that the study of weak online (PAC) learnability is naturally motivated by the equivalence between (online) PAC learnable graphs and automorphically trivial graphs. Indeed, this correspondence suggests that to have a deeper understanding of the complexity of statistical learning of graphs one should also consider notions that are more  \lq\lq sensitive\rq\rq\ to \lq\lq local\rq\rq\ phenomena than isomorphism. In fact, isomorphism captures \emph{global} properties of the graph while the learning process may be greatly affected by \emph{local} properties, as at each step, the learner has access only to a finite part of the graph. Hence, it is natural to consider what we call the \emph{weak} variant of online (PAC) learning, where the hypothesis class is restricted to \lq\lq simpler\rq\rq\ copies, i.e., those generated from permutation with finite support.

\subsubsection*{Structure of the paper} In \Cref{sec:pre} we give the graph theoretical notations used in the paper, we introduce our framework for online and PAC learning of graphs. In \Cref{sec:online-learnable} we prove the equivalence between online learnable, PAC learnable and automorphically trivial graphs and in \Cref{sec:abs-non-learnable} we characterize the absolutely non-learnable graphs in terms of almost-random graphs. \Cref{sec:finitesupport} introduces the notions of weak online and weak PAC learnability and studies their properties. Finally, in \Cref{sec:complexity},  after giving all the necessary preliminaries about descriptive set theory and computability theory, we give a complete picture of the Wadge complexity of classes of graphs learnable in any of the learning paradigm mentioned above discussing also the effective analog.

\section{Preliminaries} \label{sec:pre}
\subsection{Graph theoretical notation}
Unless explicitly specified, we consider only graphs which are countable, undirected and simple: in particular, for any graph $G$, we always assume, without loss of generality, that its set of vertices $V(G)$ coincides with the set $\N$ of natural numbers. For a vertex $v \in V(G)$ we let $N(v):=\{w : (v,w) \in E(G)\}$ be its set of \emph{neighbors}. Two examples of countably infinite graphs that will play a role in the next pages are the \emph{infinite clique} $K_\N$ (namely the graph in which any pair of vertices is an edge) and its complement, the \emph{infinite anticlique} $\overline{K_\N}$. Given two graphs $G$ and $H$ we denote by $G \oplus H$ the \emph{disconnected union} of $G$ and $H$, i.e., the graph consisting of a copy of $G$ and $H$ with no edge connecting them.

 We denote by $S(\N)$ the \emph{symmetric group} of $\N$, consisting of all permutations $\N \to \N$. 
 
 \begin{dfn}
     A graph $H$ is a \emph{presentation} (or, simply, a \emph{copy}) of a graph $G$ if there is a permutation $f \in S(\N)$ such that for every $u,v \in \N$, $(f(u),f(v)) \in E(H)$ if and only if $(u,v) \in E(G)$. Moreover, for a graph $G$, we denote by $\Iso{G}$ the family of its presentations, to which, with a slight abuse of terminology, we refer sometimes as the \emph{isomorphism type} of $G$.
 \end{dfn}
    It is understood that every element of $\Iso{G}$ is induced by some $h \in S(\N)$: when this is the case, we sometimes denote that copy as $h(G)$. For $h \in S(\N)$, the \emph{support} of $h$ is the set $\{x \colon h(x) \ne x \}$: for $k \in \mathbb{N}$, let us denote by $\Fiso{k}{G}$ the restriction of $\Iso{G}$ to the class of copies of $G$ induced by permutations with support of size at most $k$. The rest of the notions and notation we use are quite standard and follows mostly the textbook \cite{Diestel-book}. 

\subsection{Statistical learning of graph} \label{sec:stat-learning}
Our focus is the study of statistical learnability on countably infinite graphs:  given such a graph $G$, we aim at learning families of its copies $\Hyp \subseteq \Iso{G}$, either in the \emph{PAC} or \emph{online} fashion. The reader is referred to \cite{shalev-shwartz-ben-david} for the basic notions and techniques in both these frameworks. Here, we will briefly rephrase both learning frameworks, when adapted to our case. First notice that, given a graph $G$ and a family $\Hyp \subseteq \Iso{G}$ (which we will call the \emph{hypothesis class}), we will regard any \emph{hypothesis} $H \in \Hyp$ as a function $\chi_H: \N \times \N \to \{0,1\}$ such that $\chi_H(u,v) = 1$ if and only if $(u,v) \in E(H)$. 
\subsubsection*{Online learning}  The framework for online learning can be described as a game between a \emph{Learner} ($\mathbf{L}$) and an \emph{Opponent} ($\mathbf{O}$) on a fixed class $\Hyp \subseteq \Iso{G}$, which is known to both players. Before the game starts, $\mathbf{O}$ secretly chooses a hypothesis $H \in \Hyp$. Then, at each round:
\begin{enumerate}
    \item $\mathbf{O}$ picks some $(u,v) \in \N \times \N$.  
   \item $\mathbf{L}$ predicts whether $(u,v)$ is in $ E(H)$. We denote this with $\mathbf{L}((u,v))=i$ where $i=0$ if  $\mathbf{L}$ predicts $(u,v) \notin E(H)$ and $k=1$ if  $\mathbf{L}$ predicts $(u,v) \in E(H)$.
   \item $\mathbf{O}$ reveals wether $(u,v)$ is in $E(H)$.
\end{enumerate}
The game goes on for infinitely many rounds. We say that $\Hyp$ is \emph{online learnable} if there is a $d \in \N$ and a strategy for  $\mathbf{L}$ such that, no matter which $H \in \Hyp$ has been chosen by $\mathbf{O}$ nor the order in which $\mathbf{O}$ picks the elements in $\N \times \N$, $\mathbf{L}$ makes at most $d$ mistakes. In the seminal paper \cite{Littlestone}, Littlestone gave a combinatorial characterization of the optimal mistake bound achievable in the above game on a given class $\Hyp$, which is known as \emph{Littlestone dimension} and will be denoted by $\Ld{\Hyp}$: in other words, $\Hyp$ is online learnable with at most $d$ mistakes if and only if $\Ld{\Hyp} \le d$. 
\begin{rem} \label{rem:ld-subclasses}It is clear that a strategy for $\mathbf{L}$ which makes at most $d$ mistake on a class $\Hyp$ can be used to online learn any subclass $\Hyp' \subseteq \Hyp$ with the same bound on the number of mistakes: in other words, if $\Hyp' \subseteq \Hyp$, then $\Ld{\Hyp'} \le \Ld{\Hyp}$. \end{rem}
We will extensively use the following alternative characterization of the Littlestone dimension due to Shelah \cite{Shelah}, as rephrased in \cite{Alon-et-al}. 
\begin{dfn}
    Let $k \in \N$. We say that $\Hyp$ \emph{contains} $k$ \emph{thresholds} it there are $(u_1,v_1), \dots, (u_{k-1},v_{k-1}) \in \N \times \N$ and $H_1, \dots, H_k \in \Hyp$ such that for all
    $i, j \le k$, it holds that $(u_j,v_j) \in E(H_i) \iff i \le j.$
\end{dfn}
%\commgio{}{The definition of threshold was wrong. Should we say this to the referee?}

\begin{fct}[\cite{Shelah}, \cite{Alon-et-al}] \label{fct:thresholds}
    For any class $\Hyp$,  $\Hyp$ is online learnable if and only if $\Hyp$ does not contain infinitely many thresholds. 
  %  \begin{enumerate}
   %     \item if $\Ld{\Hyp} \ge d$, then $\Hyp$ contains $\lfloor \log d \rfloor$ thresholds; 
  %      \item if $\Hyp$ contains $d$ thresholds, then $\Ld{\Hyp} \ge \lfloor \log d \rfloor$.
 %   \end{enumerate}
   % Thus,
\end{fct}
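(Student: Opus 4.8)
The plan is to deduce both directions of the equivalence from Littlestone's theorem, recalled above, that $\Hyp$ is online learnable if and only if $\Ld{\Hyp}<\infty$; recall also that $\Ld{\Hyp}\ge d$ means that there is a complete binary tree of depth $d$ whose internal nodes carry labels in $\N\times\N$ and such that every one of its $2^d$ root-to-leaf branches $\sigma\in\{0,1\}^d$ is \emph{realised} by some $H_\sigma\in\Hyp$, in the sense that $\chi_{H_\sigma}$ takes the value $\sigma(m)$ at the label of the depth-$m$ node along $\sigma$ (a \emph{mistake tree}). We prove the contrapositive of each of the two implications.

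First, suppose $\Hyp$ contains $k$ thresholds, witnessed by $(u_1,v_1),\dots,(u_{k-1},v_{k-1})$ and $H_1,\dots,H_k$; the claim is that this yields a mistake tree of depth $\lfloor\log_2 k\rfloor$, so that, if $\Hyp$ contains infinitely many thresholds, $\Ld{\Hyp}=\infty$ and $\Hyp$ is not online learnable. The tree is built by binary search: put $(u_p,v_p)$ with $p=\lceil k/2\rceil$ at the root; along the edge labelled $1$ keep the hypotheses $H_1,\dots,H_p$ and the points $(u_1,v_1),\dots,(u_{p-1},v_{p-1})$, which again form a $p$-threshold configuration, and along the edge labelled $0$ keep $H_{p+1},\dots,H_k$ together with $(u_{p+1},v_{p+1}),\dots,(u_{k-1},v_{k-1})$, which after a harmless re-indexing (relabelling hypotheses and points, with no edge modified) form a $(k-p)$-threshold configuration. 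Recursing on both children, each subtree has depth at least $\lfloor\log_2 k\rfloor-1$, so the whole tree has depth $\lfloor\log_2 k\rfloor$, and by construction every branch is realised. This gives the direction ``online learnable $\Rightarrow$ $\Hyp$ does not contain infinitely many thresholds''.

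The converse, ``$\Hyp$ not online learnable $\Rightarrow$ $\Hyp$ contains infinitely many thresholds'', is the substantial one; it is a reformulation of Shelah's theorem that instability is equivalent to the order property, whence the references to \cite{Shelah,Alon-et-al}. The strategy is to extract, for every fixed $k$, a $k$-threshold configuration from a sufficiently deep mistake tree, which exists since $\Ld{\Hyp}=\infty$. Given such a tree of depth $N=N(k)$, consider the ``staircase'' branches $\beta_i$ that follow the all-$0$ path for the first $i$ steps and then turn, together with the labels $a_0,\dots,a_{N-1}$ of the nodes on the all-$0$ path: the tree structure already pins down $\chi_{H_{\beta_i}}(a_j)$ for all $j\le i$, while the values for $j>i$ are free. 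One then colours each pair $i<j$ by $\chi_{H_{\beta_i}}(a_j)$ and invokes Ramsey's theorem to pass to a homogeneous subset of size $k$, on which, after re-indexing, the incidence pattern is of threshold type. The delicate point, which I expect to be the main obstacle, is that a single application of Ramsey can instead leave an ``equality-type'' pattern (incidences concentrated on the diagonal, or on its complement), which is \emph{not} a threshold configuration; ruling this out requires either iterating the extraction to build an order-indiscernible-like sequence, or first passing to a homogeneous strong subtree via a tree version of Ramsey's theorem (Milliken's theorem) before reading off the thresholds. Producing a $k$-threshold configuration for every $k$ gives infinitely many thresholds and completes the proof.
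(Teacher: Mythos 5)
The paper offers no proof of this statement to compare against: it is recorded as a Fact imported from Shelah and from Alon et al., with only citations. Judged on its own terms, your first direction is correct and is the standard argument: the binary-search tree built from a $k$-threshold configuration is a legitimate shattered tree of depth $\lfloor\log_2 k\rfloor$ (the index bookkeeping at each split checks out), so infinitely many thresholds force $\Ld{\Hyp}=\infty$ and hence non-learnability by Littlestone's characterization.

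The second direction, however, is a plan with an acknowledged hole rather than a proof, and the hole sits exactly where the mathematical content of the theorem lives. Your staircase extraction pins down $\chi_{H_{\beta_i}}(a_j)$ on and below the diagonal, and Ramsey homogenises the colour $\chi_{H_{\beta_i}}(a_j)$ for $i<j$. If the homogeneous colour is $1$ you do read off a threshold configuration ($\chi_{H_{\beta_i}}(a_j)=1$ iff $i\le j$, matching the paper's definition). But if the homogeneous colour is $0$ you are left with the equality pattern $\chi_{H_{\beta_i}}(a_j)=1$ iff $i=j$, and this carries no threshold information at all: the class of singletons realises equality patterns of every size while having threshold number $2$ and Littlestone dimension $1$. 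The failure is structural, not cosmetic --- by passing from the shattered tree to a single spine of staircase branches you discard precisely the information needed to rule out the equality outcome, so no amount of re-indexing on the homogeneous set can recover a threshold. Gesturing at ``iterating the extraction'' or at Milliken's theorem does not discharge this step; as written, the argument simply stops whenever Ramsey returns the wrong colour. Closing the gap requires actually carrying out one of the known arguments --- the inductive extraction of thresholds from a shattered tree in Alon--Livni--Malliaris--Moran (which works directly with subtrees rather than a single spine), or the ladder-index induction behind Shelah's ``unstable implies the order property''. Since this is, as you yourself say, the substantial direction of the equivalence, the proof is incomplete.
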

\subsubsection*{PAC learning}
Another widely studied notion of learnability is the one given within the framework of \emph{PAC learning}, introduced by Valiant \cite{Valiant}. We will directly use as definition of PAC learnability a fundamental characterization proved in \cite{Blumer-et-al}, which relates Valiant's framework with a combinatorial measure defined by Vapnik and Chervonenkis \cite{VC}.
We say that $\Hyp \subseteq \Iso{G}$ \emph{shatters} a $d$-tuple $(u_1,v_1), \dots, (u_d,v_d) \in \N \times \N$ if and only for every $\tau \in \{0,1\}^d$, there is $H \in \Hyp$ such that for every $1 \le i \le d$, 
$(u_i,v_i) \in E(H) \iff  \tau(i)=1$.

The \emph{Vapnik-Chervonenkis dimension} (\emph{VC-dimension} for short) of $\Hyp$, denoted by $\VC{\Hyp}$ is the largest $d$ such that there exists $d$ distinct points shattered by $\Hyp$. If $\Hyp$ shatters $d$ points for any $d$, then we say that $\VC{\Hyp}= \infty$. Finally, we say that $\Hyp$ is \emph{PAC learnable} if $\VC{\Hyp}$ is finite.
\begin{rem} \label{rem:vc-subclasses} It is clear from the definition that, if $\Hyp' \subseteq \Hyp$, then $\VC{\Hyp'} \le \VC{\Hyp}$.\end{rem}

We conclude by noticing that PAC learnability is a strictly less demanding property than online learnability.

\begin{fct}[\cite{Littlestone}]
\label{fact:onlineimpliespac}
    For every hypothesis class $\Hyp$, $\VC{\Hyp} \le \Ld{\Hyp}$: hence, every online learnable class is PAC learnable. On the other hand, in general, there are hypothesis classes which are PAC learnable but not online learnable.
\end{fct}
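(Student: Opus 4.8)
The statement splits into two independent parts, and I would handle them separately.

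\medskip
\noindent\emph{The inequality $\VC{\Hyp}\le\Ld{\Hyp}$.} Recall that $\Ld{\Hyp}$ is by definition the optimal worst-case mistake bound in the online game, so it suffices to show that whenever $\Hyp$ shatters a $d$-tuple of points, the Opponent has a strategy forcing the Learner into at least $d$ mistakes. Fix $(u_1,v_1),\dots,(u_d,v_d)$ shattered by $\Hyp$. The Opponent presents these points in order; having already forced a mistake on each of the first $i-1$ of them — and thereby committed to a partial labelling $\sigma\in\{0,1\}^{i-1}$ of those points — it presents $(u_i,v_i)$, sees the Learner's prediction $b\in\{0,1\}$, and declares the true label to be $1-b$, extending $\sigma$ accordingly. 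Because the \emph{whole} tuple is shattered, at every stage the labelling built so far is realised by some $H\in\Hyp$, so the Opponent's answers remain consistent with an actual hypothesis; after $d$ rounds it has forced $d$ mistakes while still being consistent with some member of $\Hyp$ (a routine adaptive-adversary argument converts this adaptive play into a legal play against a fixed target). Hence $\Ld{\Hyp}\ge d$ for every $d\le\VC{\Hyp}$, which gives $\VC{\Hyp}\le\Ld{\Hyp}$; in particular $\Ld{\Hyp}<\infty$ implies $\VC{\Hyp}<\infty$, i.e.\ online learnability implies PAC learnability.

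\medskip
\noindent\emph{A PAC learnable class that is not online learnable.} It is enough to exhibit one hypothesis class with finite VC dimension that contains infinitely many thresholds, and then invoke \Cref{fct:thresholds}. Over the domain $\N$, take $\Hyp_0=\{H_n:n\in\N\}$ with $H_n=\{m\in\N:m\ge n\}$. For each $k$, the points $w_j:=j$ for $1\le j\le k-1$, together with $H_1,\dots,H_k$, witness $k$ thresholds, since $w_j\in H_i\iff j\ge i\iff i\le j$; thus $\Hyp_0$ contains infinitely many thresholds and so is not online learnable. On the other hand $\Hyp_0$ shatters no pair $\{a,b\}$ with $a<b$, as no $H_n$ realises the pattern ``$a$ in, $b$ out''; hence $\VC{\Hyp_0}=1$ and $\Hyp_0$ is PAC learnable. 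One can also engineer a separating example literally of the form $\Hyp\subseteq\Iso{G}$ for a suitably chosen $G$, but this is not needed for the Fact as stated.

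\medskip
The whole argument is classical and I do not anticipate a genuine obstacle; the only point deserving a little care is the bookkeeping in the forcing strategy — namely, that it is shattering of the \emph{entire} $d$-tuple (not merely of pairs, nor of each point in isolation) that keeps every partial transcript extendible to a genuine hypothesis round after round.
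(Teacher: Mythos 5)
The paper states this as a Fact cited from Littlestone and gives no proof, so there is nothing to diverge from: your argument is the standard one from the literature and is correct. Both halves are sound — the adversary forcing $d$ mistakes on a shattered $d$-tuple (with the correct observation that shattering of the whole tuple keeps every partial transcript realisable, and that against a deterministic learner the transcript can be precomputed so a fixed target $H\in\Hyp$ can be chosen in advance), and the threshold class $H_n=\{m:m\ge n\}$ with $\VC{\Hyp_0}=1$ but infinitely many thresholds, hence not online learnable by \Cref{fct:thresholds}.
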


\section{A characterization of online learnable graphs} \label{sec:online-learnable}
The aim of this section is to characterize those graphs whose entire isomorphism type is online learnable (and, hence, PAC learnable by \Cref{fact:onlineimpliespac}).
\begin{dfn}
    A graph $G$ is \emph{online learnable} if $\Iso{G}$ is online learnable.
\end{dfn}
On the one hand, we prove that online and PAC learnability are the same when learning the whole isomorphism type of a graph $G$: in fact, online learnability of a graph $G$ actually boils down to PAC learnability of all copies of $G$ induced by permutations of its vertices with finite support, i.e.~of $\bigcup_{k \in \N} \Fiso{k}{G}$. On the other hand, we also show that we can learn the whole isomorphism type only of graphs having, roughly speaking, an \emph{extremely simple} structure: indeed, we show that online learnable graphs coincide with \emph{automorphically trivial} ones.

Let us begin by reviewing how, in general, automorphically trivial structures are defined.
\begin{dfn}[\cite{knight_autotrivial}]
\label{definition:automorphicallytrivial}
A structure $\mathcal{S}$ is \emph{automorphically trivial} if there is a finite subset $S_0$ of its domain $S$, such that every permutation of $S$ fixing $S_0$ pointwise is an automorphism.
\end{dfn}
When focusing on the case of graphs, we have the following useful characterization.
\begin{fct}[{\cite[Example 1.5]{miller}}]
\label{fact:automorphicallytrivialgraphs}
 The automorphically trivial graphs $G$ are exactly those obtained as follows. We choose a partition $V(G) = S_0 \sqcup S_1$ with $S_0$ being finite and some $S_0' \subseteq S_0$. Then we let \[E(G)=E(G[S_0]) \cup \{(v,w): v \in S_0' \text{ and } w \in S_1\} \cup E(G[S_1])\] where either $G[S_1]$ is isomorphic to $ K_\mathbb{N}$ or $G[S_1]$ is isomorphic to $ \overline{K_\mathbb{N}}$. Notice that automorphically trivial graphs are clearly closed under complement.
\end{fct}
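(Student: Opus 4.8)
The plan is to prove the two directions of the equivalence separately, reading the displayed description as an explicit normal form for graphs and matching it against \Cref{definition:automorphicallytrivial}. For the ``easy'' direction, suppose $G$ is built from a finite set $S_0$, a subset $S_0' \subseteq S_0$, and a graph $G[S_1]$ on $S_1 = V(G)\setminus S_0$ that is a clique or an anticlique, with $E(G)$ as displayed. I claim this very $S_0$ witnesses automorphic triviality. Given any permutation $\pi$ of $V(G)$ fixing $S_0$ pointwise, $\pi$ restricts to a bijection of $S_1$ onto itself, and I would check that $\pi$ preserves adjacency by splitting on where the endpoints of a pair $(u,v)$ lie: if both are in $S_0$ there is nothing to do; if $u \in S_0$ and $v \in S_1$, then $(u,v)\in E(G)\iff u\in S_0' \iff (\pi(u),\pi(v))\in E(G)$, since $\pi(u)=u$ and $\pi(v)\in S_1$; and if $u,v\in S_1$ then adjacency is determined solely by whether $u\neq v$ (as $G[S_1]$ is complete or empty), which $\pi$ preserves by injectivity. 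Hence every such $\pi$ lies in $\Aut{G}$, so $G$ is automorphically trivial.

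For the converse, let $G$ be automorphically trivial with finite witness $S_0$, and set $S_1 = V(G)\setminus S_0$; since $V(G)=\N$, $S_1$ is infinite. The single tool I would use is that any bijection of $S_1$ onto itself extends, by the identity on $S_0$, to a permutation of $V(G)$ fixing $S_0$ pointwise, hence to an automorphism of $G$. Fixing a reference pair $\{a,b\}\subseteq S_1$ and choosing, for an arbitrary pair $\{c,d\}\subseteq S_1$, such an automorphism sending $a\mapsto c$ and $b\mapsto d$ (which exists because $S_1$ is infinite), we get $(a,b)\in E(G)\iff (c,d)\in E(G)$; so all distinct pairs in $S_1$ have the same adjacency status and $G[S_1]\cong K_\N$ or $G[S_1]\cong\overline{K_\N}$. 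Similarly, for fixed $v\in S_0$ and any $w,w'\in S_1$, an automorphism fixing $S_0$ pointwise and sending $w\mapsto w'$ gives $(v,w)\in E(G)\iff(v,w')\in E(G)$, so each $v\in S_0$ is adjacent to all of $S_1$ or to none of it. Taking $S_0' = \{v\in S_0 : N(v)\cap S_1\neq\emptyset\}$ then reconstructs exactly the displayed formula for $E(G)$.

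I do not expect a genuine obstacle; the only points requiring a little care in the converse are (i) that the permutations of $S_1$ we invoke really extend to permutations of all of $V(G)$ fixing $S_0$ pointwise --- immediate, as they already fix everything outside $S_1$ --- and (ii) the use of the infiniteness of $S_1$ to realize an arbitrary pair as the image of a fixed one (and, in the other application, to transpose two chosen vertices of $S_1$ while fixing the rest). Finally, closure under complement is read straight off the normal form: passing to $\overline{G}$ replaces $G[S_0]$ by its complement, swaps the clique/anticlique alternative for $G[S_1]$, and replaces $S_0'$ by $S_0\setminus S_0'$; the result is again a graph of the prescribed shape, so it is automorphically trivial as well.
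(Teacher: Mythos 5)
Your proof is correct. Note that the paper does not actually prove this statement: it is quoted as a known fact from \cite{miller}, so there is no in-paper argument to compare against. Your two-directional argument---verifying that the normal form yields automorphic triviality by a case split on where the endpoints lie, and recovering the normal form from a finite witness $S_0$ by extending bijections of $S_1$ by the identity on $S_0$---is the standard and natural proof, and the only mild over-attribution is that realizing an arbitrary pair of $S_1$ as the image of a fixed one needs only $|S_1|\ge 2$ (infiniteness of $S_1$ is what guarantees $G[S_1]$ is $K_\mathbb{N}$ or $\overline{K_\mathbb{N}}$ rather than a finite clique or anticlique).
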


The announced characterization arises from the combination of several results.  Our strategy is summarized in the following diagram.
\begin{center}
    \begin{tikzpicture}
      \node[rectangle, rounded corners, draw=black, minimum height=1cm, minimum width=5cm, fill=blue!20, text centered] (at) at (0,0) {$G$ is automorphically trivial};
      
      \node[rectangle, rounded corners, draw=black, minimum height=1cm, minimum width=5cm, fill=green!20, text centered] (pac) at (0,-3) {$\VC{\Iso{G}} < \infty$};
      
      \node[rectangle, rounded corners, draw=black, minimum height=1cm, minimum width=5cm, fill=blue!20, text centered] (online) at (-4,-1.5) {$G$ is online learnable};
      
      \node[rectangle, rounded corners, draw=black, minimum height=1cm, minimum width=5cm, fill=blue!20, text centered] (finite-pac) at (4,-1.5) {$\VC{\bigcup_{k \in \N} \Fiso{k}{G}} < \infty$};

      \draw[thick, ->] (at.west) to[bend right] node[midway, left=0.5cm] {\Cref{thm:at-implies-online}} (online);
      
      \draw[thick, ->] (online) to[bend right]  node[midway, left=0.5cm]{\Cref{fact:onlineimpliespac}} (pac.west);
      
      \draw[thick, ->] (pac.east) to[bend right] node[midway, right=0.5cm] {\Cref{rem:vc-subclasses}} (finite-pac);
      
      \draw[thick, ->] (finite-pac.north) to[bend right] node[midway, right=0.5cm] {\Cref{thm:finite-pac-implies-at}} (at.east);
    \end{tikzpicture}
\end{center}
We begin by proving that every automorphically trivial graph is online learnable.
\begin{thm}
\label{thm:at-implies-online}
If $G$ is an automorphically trivial graph, then $G$ is online learnable.
	\end{thm}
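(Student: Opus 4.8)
The plan is to go through the threshold characterisation of online learnability (\Cref{fct:thresholds}): since a class is online learnable precisely when it does not contain infinitely many thresholds, and ``contains $k$ thresholds'' is clearly monotone in $k$, it suffices to exhibit, for each automorphically trivial $G$, an explicit finite $N$ for which $\Iso{G}$ does not contain $N$ thresholds. First I would put $G$ in a convenient normal form. By \Cref{fact:automorphicallytrivialgraphs}, fix a partition $V(G)=S_0\sqcup S_1$ with $S_0$ finite, set $n:=|S_0|$, and fix $S_0'\subseteq S_0$ such that $G[S_1]$ is $K_\N$ or $\overline{K_\N}$ and the $S_0$--$S_1$ edges are exactly those incident to $S_0'$. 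Since automorphically trivial graphs are closed under complement and the online mistake bound of a class is unchanged when every hypothesis is replaced by its complement (the Learner just flips all its guesses, so $\Ld{\Iso{G}}=\Ld{\Iso{\overline G}}$), I may assume $G[S_1]\cong K_\N$ (the case $n=0$, where $G\cong K_\N$, being trivial). The structural fact I will use repeatedly is: for any copy $H=h(G)$, the set $A_H:=h(S_0)$ has size $n$, the induced subgraph $H[\N\setminus A_H]$ is a clique, and each vertex of $A_H$ is $H$-adjacent either to all of $\N\setminus A_H$ (exactly for the vertices of $h(S_0')$) or to none of it; in particular every non-edge $\{x,y\}$ of $H$ with $x\neq y$ has an endpoint in $A_H$.

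Now I would suppose for contradiction that $\Iso{G}$ contains $N$ thresholds, witnessed by pairs $(u_1,v_1),\dots,(u_{N-1},v_{N-1})$ and copies $H_1,\dots,H_N$ with $(u_j,v_j)\in E(H_i)\iff i\le j$, and write $A_i:=A_{H_i}$. A short preliminary step (comparing the value of a diagonal or repeated pair across two consecutive hypotheses) shows that each $u_j\neq v_j$ and that the unordered pairs $\{u_j,v_j\}$ are pairwise distinct. Applying the structural fact to $H_N$: for every $j\le N-1$ we have $(u_j,v_j)\notin E(H_N)$, hence $\{u_j,v_j\}$ meets $A_N$. So the $N-1$ indices are covered by the $n$ sets $J_a:=\{\,j\le N-1: a\in\{u_j,v_j\}\,\}$, $a\in A_N$, giving $N-1\le\sum_{a\in A_N}|J_a|$; it therefore suffices to bound each $|J_a|$.

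Bounding $|J_a|$ is the heart of the argument. Fix $a\in A_N$, write $J_a=\{j_1<\dots<j_t\}$, and let $w_s$ be the partner of $a$ in the pair indexed by $j_s$; the $w_s$ are pairwise distinct (by distinctness of the pairs), $w_s\neq a$, and $\{a,w_s\}\in E(H_i)\iff i\le j_s$. Suppose $t>2n+1$; then $n+1$ is a legal index with $n+1<t$, and I would examine the \emph{single} hypothesis $H_{i_0}$ with $i_0:=j_{n+1}+1\le N$. By the threshold condition, $\{a,w_s\}\notin E(H_{i_0})$ for all $s\le n+1$ and $\{a,w_s\}\in E(H_{i_0})$ for all $s>n+1$. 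Splitting into cases according to how $a$ sits relative to $A_{i_0}$: if $a\notin A_{i_0}$, or $a\in A_{i_0}$ but is $H_{i_0}$-adjacent to all of $\N\setminus A_{i_0}$, then each of $w_1,\dots,w_{n+1}$ is forced into $A_{i_0}$, impossible since $|A_{i_0}|=n$; if $a\in A_{i_0}$ and is adjacent to none of $\N\setminus A_{i_0}$, then each of $w_{n+2},\dots,w_t$ is forced into $A_{i_0}$, giving $t-(n+1)\le n$, i.e.\ $t\le 2n+1$ — again a contradiction. Hence $|J_a|\le 2n+1$, so $N-1\le n(2n+1)$, and $\Iso{G}$ cannot contain $N:=n(2n+1)+2$ thresholds; by \Cref{fct:thresholds} it is online learnable.

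The only genuinely delicate point is the choice of the ``middle'' hypothesis $H_{j_{n+1}+1}$ in the last step. A single flip-point of one pair $\{a,w_s\}$ is too local to extract any structure, and comparing only the first and last copies where $\{a,w_s\}$ flips does not suffice; looking at this particular copy is what guarantees that, whichever of the three possible adjacency behaviours of $a$ occurs in it, one of the two blocks $\{w_1,\dots,w_{n+1}\}$ or $\{w_{n+2},\dots,w_t\}$ is squeezed into the $n$-element set $A_{i_0}$. The remaining ingredients — the reduction to $G[S_1]\cong K_\N$, the structural description of the copies of $G$, and the covering of the threshold indices by $A_N$ — are routine bookkeeping.
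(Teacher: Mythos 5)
Your proof is correct, but it takes a genuinely different route from the paper's. The paper argues directly: it designs an explicit Learner that predicts according to the majority behaviour dictated by $G[S_1]$ (always ``edge'' if $S_1$ induces a clique, always ``non-edge'' otherwise) and reverses its prediction on a given vertex once that vertex has accumulated more than $|S_0|$ mistakes; a short accounting then bounds the total number of mistakes by $|E(G[S_0])|+|S_0|^2$. You instead route everything through the threshold characterization (\Cref{fct:thresholds}): the structural description of the copies of $G$ (an $n$-element exceptional set $A_H$ outside of which $H$ induces a clique, with each exceptional vertex uniformly joined or unjoined to the rest) lets you cover the threshold indices by the exceptional set of the last hypothesis, and then the well-chosen middle copy $H_{j_{n+1}+1}$ squeezes one of the two blocks of $n+1$ distinct partners of $a$ into the $n$-element set $A_{i_0}$. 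I checked the preliminary step (distinctness of the pairs), the covering inequality, and the three-way case analysis; all are sound, as are the reduction to $G[S_1]\cong K_\N$ via complementation and the monotonicity of ``contains $k$ thresholds'' in $k$ that you need in order to pass from a single finite bound to the negation of ``infinitely many thresholds''. What each approach buys: the paper's argument is constructive, producing an actual prediction strategy together with a concrete mistake bound, whereas yours avoids designing and verifying a strategy but yields only the bound $n(2n+1)+1$ on the number of thresholds, from which a mistake bound follows only through the non-effective equivalence behind \Cref{fct:thresholds}.
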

\begin{proof}
	Assume that $G$ is automorphically trivial. Therefore, we can refer to a partition of $\N$ into two sets $S_0 \sqcup S_1$ and a subset $S_0'\subseteq S_0$, where these sets comes from the description of $V(G)$ given by \Cref{fact:automorphicallytrivialgraphs}. 
 The opponent $\mathbf{O}$ chooses some hypothesis in $\Iso{G}$ and we denote by $v_s,w_s$ the vertices presented by $\mathbf{O}$ to the learner $\mathbf{L}$ at round $s$ of the learning game. We define $\mathbf{L}$ by letting  $\mathbf{L}((v_{s},w_{s}))=1$ if $S_1$ is isomorphic to $K_\mathbb{N}$ and $\mathbf{L}((v_{s},w_{s}))=0$ if $S_1$ is isomorphic to $\overline{K_\mathbb{N}}$ unless 
 \[|\{(v_s,w_t) : t \leq s \text{ and } \mathbf{O}((v_s,w_t))\neq \mathbf{L}((v_s,w_t))\}| > |S_0|,\]
 in which case, for any $t \geq s$,  if $S_1$ is isomorphic to $K_\mathbb{N}$ we let  $\mathbf{L}((v_{s},w_{t}))=0$, while if $S_1$ is isomorphic to $\overline{K_\mathbb{N}}$  we let  $\mathbf{L}((v_{s},w_{t}))=1$. 
 
We claim that $|E(G[S_0])| +  |S_0|^2$ is an upper bound to the number of mistakes made by $\mathbf{L}$.
Notice that:
\begin{itemize}
    \item[(i)] if $S_1$ is isomorphic to ${K_\mathbb{N}}$ then for any $v \in S_0\smallsetminus S_0'$, $|N(v)|<|S_0|$ while for any $w \in \N\smallsetminus (S_0\smallsetminus S_0')$ we have that $|N(w)|=\infty$ and $|\N\smallsetminus N(w)| \leq |S_0|$;
       \item[(ii)] if $S_1$ is isomorphic to $\overline{K_\mathbb{N}}$ then for any $v \in S_0'$ we have that $|N(v)|=\infty$ and $|\N\smallsetminus N(v)| < |S_0|$ while for any $w \notin S_0'$ we have that $|N(v)| \leq |S_0|$.
\end{itemize}
The only possible mistakes made by $\mathbf{L}$ are on a pair $(v,w)$ where either $v,w \in S_0$ or  $v \in S_0\smallsetminus S_0'$ and $w \in S_1$ (if (i) holds) or  $v \in  S_0'$ and $w \in S_1$ (if (ii) holds).
Combining (i) and (ii) one obtains $\mathbf{L}$ makes at most $|E(G[S_0])|$ mistakes due to pairs $(v,w) \in S_0 \times S_0$. In the other cases, if (i) holds, for every $v \in S_0\smallsetminus S_0'$, $\mathbf{L}$ can make $|S_0|$ mistakes, while if (ii) holds, for every $v \in S_0'$, $\mathbf{L}$ can make at most $|S_0|$ mistakes. Summing up, it is straightforward to check that $|E(G[S_0])| +  |S_0|^2$ is an upper bound to the number of mistakes made by $\mathbf{L}$.
\end{proof}
It remains to prove that if $\VC{\bigcup_{k \in \N} \Fiso{k}{G}}$ is finite, then $G$ must be automorphically trivial. To this end, we need some preliminary lemmas. First, we observe that the finiteness VC dimension of $\Iso{G}$ is closed under taking induced subgraphs.  
\begin{lem}\label{lem:VC-induced-subgraphs}
    Let $G$ be a graph with $\VC{\Iso{G}} \le d$. Then, for every induced subgraph $H$ of $G$, $\VC{\Iso{H}} \le d$.
\end{lem}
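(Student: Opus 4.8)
The goal is: if $\VC{\Iso{G}} \le d$ and $H$ is an induced subgraph of $G$, then $\VC{\Iso{H}} \le d$. The natural strategy is to show the contrapositive-style statement directly: any shattering configuration witnessed inside $\Iso{H}$ can be transplanted into $\Iso{G}$, so that a set of $d+1$ points shattered by $\Iso{H}$ would yield $d+1$ points shattered by $\Iso{G}$, contradicting the hypothesis.

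The plan is as follows. Fix an identification of $V(H)$ with a subset $A \subseteq \N = V(G)$ such that $H = G[A]$. Suppose $\Iso{H}$ shatters a tuple of pairs $(u_1,v_1),\dots,(u_m,v_m) \in \N \times \N$ (here the vertices range over the domain $\N$ of the presentations of $H$, not over $A$). For each $\tau \in \{0,1\}^m$ pick a presentation $H_\tau \in \Iso{H}$ realizing $\tau$, say $H_\tau = f_\tau(H)$ for some $f_\tau \in S(\N)$. The key step is to turn each $H_\tau$ into a presentation of $G$ that agrees with $H_\tau$ on all the test pairs. Since $H = G[A]$, there is a fixed embedding situation: think of $H_\tau$ as sitting on the vertex set $\N$, and we want to "fill in the rest of $G$" on the complement. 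Concretely, because $G$ has domain $\N$ and $H$ is the induced subgraph on $A$, choose a bijection placing $V(H)$ onto $A$; then $H_\tau$, composed appropriately, gives a presentation of $H$ on the set $f_\tau(\text{image of }A)$, and we extend it to a presentation of $G$ by using any bijection of the remaining countably many vertices of $\N$ onto $V(G) \setminus A$ and copying the edges of $G$ accordingly. This produces $\widehat{H}_\tau \in \Iso{G}$ whose restriction to the relevant finite vertex set coincides with that of $H_\tau$.

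The one thing to be careful about — and the main (mild) obstacle — is the bookkeeping: the test pairs $(u_i,v_i)$ are arbitrary elements of $\N \times \N$, and the extension must be arranged so that all vertices appearing among the $u_i, v_i$ actually lie in the part coming from $H$ (i.e. in the preimage of $A$ under the relevant permutation), so that $\chi_{\widehat{H}_\tau}(u_i,v_i) = \chi_{H_\tau}(u_i,v_i) = \tau(i)$ for every $i$. This is harmless because there are only finitely many such vertices and $A$ is infinite (as $H$ is an infinite graph, having domain $\N$); if one prefers, one can first apply Remark~\ref{rem:vc-subclasses}-style reasoning together with a preliminary permutation to assume the test vertices all lie within a fixed infinite coordinate block reserved for the copy of $H$. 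Once this is set up, the family $\{\widehat{H}_\tau : \tau \in \{0,1\}^m\} \subseteq \Iso{G}$ shatters $(u_1,v_1),\dots,(u_m,v_m)$, so $\VC{\Iso{G}} \ge m$; taking $m = d+1$ contradicts $\VC{\Iso{G}} \le d$, and we conclude $\VC{\Iso{H}} \le d$.
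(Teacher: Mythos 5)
Your proof is correct and takes essentially the same route as the paper's: transfer a shattering witness from $\Iso{H}$ to $\Iso{G}$ using the fact that $H$ is an induced subgraph. The paper's version is much terser---it simply observes that edge membership in a copy of $H$ agrees with edge membership in the corresponding copy of $G$---whereas you make explicit the step of extending each presentation of $H$ to a presentation of $G$ that agrees on the finitely many test pairs, which is precisely the bookkeeping the paper leaves implicit.
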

\begin{proof}
    Assume that there is an induced subgraph $H$ of $G$ such that $\VC{\Iso{H}} > d$: hence, there are $(x_0, y_0) \ne \dots \ne (x_d, y_d) \in \N \times \N$ such that 
    $$\forall \tau \in \{0,1\}^{d+1} \exists h \in S(\N) \ (h(x_i),h(y_i)) \in E(h(H)) \iff \tau(i)=1.$$ Since $H$ is an induced subgraph of $G$, $(h(x_i),h(y_i)) \in E(h(H))$ if and only if  $(h(x_i),h(y_i)) \in E(h(G))$, meaning that $\VC{\Iso{G}} > d$, too.
\end{proof}

Next, we need to introduce some notation. Recall that given a graph $G$, a \emph{matching} $M$ in $G$ is a set of edges such that no two edges share common vertices. We call $M_d$ the graph formed by a matching with $2d$ vertices and $2d$ isolated vertices and $N_d$ the set of graphs with $2d+1$ vertices such that one vertex is connected to $d$ vertices and disconnected from $d$ vertices. Examples for $d=2$ are depicted in \Cref{fig:m2-n2} below.
\begin{figure}[h!]
    \centering
    \begin{tikzpicture}
        %\node at (-1.5, 1) {$M_2$};
        \fill (-3,0) circle (2pt); 
        \fill (-2,0) circle (2pt); 
        \draw[thick] (-2,0) -- (-3,0);
        \fill (0,0) circle (2pt);
        \fill (-1,0) circle (2pt);
        \draw[thick] (0,0) -- (-1,0);
        \fill (1,0) circle (2pt);
        \fill (2,0) circle (2pt);
        \fill (3,0) circle (2pt);
        \fill(4,0) circle (2pt);

        %\node at (3, 1) {$N_2$};
        %\begin{comment}
        \fill (8,0) circle (2pt);
        \fill (7,.5) circle (2pt);
        \fill (7,-.5) circle (2pt);
        \draw[thick] (8,0) -- (7,.5);
        \draw[thick] (8,0) -- (7,-.5);
        \fill (9,.5) circle (2pt);
        \fill (9,-.5) circle (2pt);
        %\end{comment}
        
    \end{tikzpicture}
    \caption{The graphs $M_2$ (to the left) and $N_2$ (to the right).}
    \label{fig:m2-n2}
\end{figure}
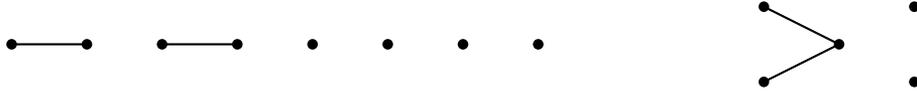

We observe that the family of copies of graphs of the form $M_d$ and $N_d$ with bounded support have large VC dimension. \begin{lem}\label{lem:forbidden-subgraphs} For every $d$, it holds that:
\begin{itemize}
    \item $\VC{ \Fiso{4d}{M_d}}\ge d$,
    \item $\VC{ \Fiso{4d}{\overline{M_d}}}\ge d$,
    \item$\VC{ \Fiso{2d}{N_d}} \ge d$.
\end{itemize}     
%\end{enumerate}

\end{lem}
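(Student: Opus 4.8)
The plan is to handle the three cases uniformly by exhibiting, in each case, an explicit $d$-tuple of pairs from $\N \times \N$ together with, for every $\tau \in \{0,1\}^d$, a permutation of bounded support whose induced copy realizes exactly the pattern $\tau$ on that tuple. The guiding idea is the same in all three: the graphs $M_d$, $\overline{M_d}$, $N_d$ contain $d$ ``switchable slots'' (the $d$ edges of the matching, the $d$ non-edges of $\overline{M_d}$, the $d$ edges versus $d$ non-edges at the distinguished vertex), and using an ambient supply of extra isolated vertices we can, slot by slot and independently, swap an endpoint in or out of the role that toggles the bit. Each individual toggle costs a swap moving at most a bounded number of vertices, and since we only ever need to realize one $\tau$ at a time, the supports never accumulate beyond the stated bound.

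First I would fix notation for $M_d$: label a canonical copy so that the matching edges are $\{a_i, b_i\}$ for $i \le d$ and the isolated vertices are $c_1, \dots, c_d$ (plus $d$ more isolated vertices to spare, since $M_d$ has $2d$ isolated vertices total). Take as the shattered tuple the pairs $(a_i, b_i)$, $i \le d$ — these are $d$ distinct pairs. Given $\tau$, I want a copy in which $(a_i,b_i)$ is an edge iff $\tau(i)=1$. Starting from the canonical copy (all bits $1$), for each $i$ with $\tau(i)=0$ I apply the transposition swapping $b_i$ with one of the spare isolated vertices $e_i$: this breaks the edge $\{a_i,b_i\}$ (now $b_i$ sits where $e_i$ was, an isolated vertex, so $(a_i,b_i) \notin E$) without disturbing any other $a_j,b_j$ or any other pair in the tuple, provided the $e_i$ are chosen distinct from all $a_j,b_j$ and from each other. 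The composition of these (disjoint) transpositions is a permutation moving at most $2d$ vertices — well within $4d$ — and it realizes $\tau$. Hence $\VC{\Fiso{4d}{M_d}} \ge d$. The case of $\overline{M_d}$ is literally the complement: $\overline{M_d}$ has $d$ ``anti-matching'' non-edges among otherwise-adjacent pairs, and $2d$ vertices adjacent to everything; toggling bit $i$ to make $(a_i,b_i)$ an edge is done by swapping $b_i$ with a spare vertex of the clique part, again via disjoint transpositions with total support $\le 2d \le 4d$. (One should note that $\overline{M_d}$ has domain $\N$ with the $2d+2d$ finite part sitting inside an infinite clique, so there are plenty of clique vertices to serve as spares.)

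For $N_d$: fix a canonical copy with distinguished vertex $x$, with $x$ adjacent to $p_1, \dots, p_d$ and non-adjacent to $q_1, \dots, q_d$ (and to everything else, since the rest of $\N$ is isolated from $x$ except we must be careful — $N_d$ as defined is just a graph on $2d+1$ vertices; viewed with domain $\N$ the remaining vertices form some fixed graph, but what matters is only the local structure at $x$). Take the shattered tuple to be $(x, p_1), \dots, (x, p_d)$. Given $\tau$, for each $i$ with $\tau(i)=0$ swap $p_i$ with $q_i$ (disjoint transpositions across $i$); now the vertex occupying position $p_i$ is non-adjacent to $x$ exactly when we want bit $i$ to be $0$. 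This moves at most $2d$ vertices, so $\VC{\Fiso{2d}{N_d}} \ge d$. The support bounds I would double-check carefully: the composition of $k \le d$ disjoint transpositions moves $2k \le 2d$ points, which is $\le 4d$ in the first two cases and $\le 2d$ in the third, exactly as claimed.

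\textbf{Main obstacle.} The only genuinely delicate point is making sure the ``spare'' vertices used to toggle one bit do not collide with the vertices governing the other bits or with the fixed tuple — i.e.\ that the transpositions really can be chosen pairwise disjoint and disjoint from $\{a_i,b_i : i \le d\}$ (resp.\ $\{x,p_i,q_i\}$). This is a bookkeeping matter rather than a conceptual one: in $M_d$ there are exactly $2d$ isolated vertices and we need $d$ of them as spares, leaving $d$ untouched, so it works on the nose; in $\overline{M_d}$ and $N_d$ the analogous count is even more comfortable. A secondary subtlety is the convention for how $M_d$, $\overline{M_d}$, $N_d$ — which are described as finite graphs — sit inside a domain-$\N$ presentation; I would state once that we pad with isolated vertices (for $M_d$, $N_d$) or with clique vertices (for $\overline{M_d}$, to keep it the complement of $M_d \oplus \overline{K_\N}$ or similar), and then the spares are drawn from this padding. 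With these conventions pinned down, each of the three verifications is a one-paragraph check that the displayed permutation induces the claimed pattern.
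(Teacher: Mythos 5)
Your proposal is correct and follows essentially the same approach as the paper: fix a canonical copy, shatter the $d$ natural pairs, and realize each pattern $\tau$ by a product of disjoint transpositions that toggle the bits independently. The only (harmless) difference is in the $M_d$ case, where you move a single endpoint to a spare isolated vertex (support $2$ per bit, total $2d$), whereas the paper swaps both endpoints of the $i$-th matching edge with a spare non-edge (support $4$ per bit, total $4d$); both stay within the stated bound.
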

\begin{proof}
    We begin by proving the statement for $M_d$. Let us fix the following copy of $M_d$ as standard: $V(M_d) = \{ 0, \dots, 4d-1 \}$ and $E(M_d) = \{(2i, 2i+1): i <d \}$.
    We show that the family of copies of $G$ induced by the permutations of support at most $4d$ shatters the pairs of vertices $(0,1), \dots, (2d-2, 2d-1)$. For every $n \leq d$, let $h_n \in S(\N)$ be the permutation swapping vertices $2n$ and $2n+1$ with, respectively, vertices $2(n+d)$ and $2(n+d)+1$ and which acts as the identity on all other vertices (hence the support of $h_n$ is contained in $V(M_d)$).
    Now, if $\tau \in \{0,1\}^d$ is such $\tau(i) = 0 $ if and only if $i \in \{i_1 < \dots < i_k\}$ (with $0 \le i_j < d$), we let $h_{\tau} = h_{i_1} \circ \dots \circ h_{i_k}$: then, for every $0 \le i < d-1$ we get that
    $$(h_{\tau}(2i), h_{\tau}(2i+1)) \in  E(h_{\tau}(M_d)) \iff \tau(i) = 1,$$
    as required. Notice that each $h_n$ has support 4, hence $h_{\tau}$ has support at most $4d$. The case $\overline{M_d}$ is proven similarly.

    Next, we prove the claim for $N_d$. Again, we fix a standard copy of $N_d$ by letting $V(N_d) = \{0,\dots, 2d\}$ and $E(N_d) = \{(0,i): 1 \le i \le d \}$. Our goal is to show that $\Fiso{2d}{N_d}$ shatters the pairs of vertices $(0,1), \dots, (0,d)$. For every $n$, we now let $h_n \in S(\N)$ be the permutation which swaps vertex $n$ with vertex $n+d$ and which acts as the identity on all other vertices (again, the support of $h_n$ is contained in $V(N_d)$).
    As before, to every word $\tau \in \{0,1\}^d$ we associate the permutation $h_{\tau} = h_{i_1} \circ \dots \circ h_{i_k}$, where $0 \le i_1 < \dots < i_k <d$ are all and only positions $i$ such that $\tau(i) = 0$. Hence, for every $0 \le i < d$ we obtain
    $$(h_{\tau}(0), h_{\tau}(i)) \in  E(h_{\tau}(M_d)) \iff \tau(i) = 1,$$
    as we want. Finally, since each $h_n$ has support 2, $h_{\tau}$ has support at most $2d$.
\end{proof}

Finally, we show that every graph which is not automorphically trivial must contain as an induced subgraph infinitely many graphs as those considered above.
\begin{lem}\label{lem:at-vs-forbidden-subgraphs}
    For every infinite graph $G$, exactly one of the following holds.
    \begin{enumerate}
        \item $G$ is automorphically trivial; 
        \item for every $d$, $G$ admits $M_d$ as an induced subgraph or for every $d$, $G$ admits a graph from $N_d$ as induced subgraph or for every $d$, $G$ admits $\overline{M_d}$ as induced subgraph.
    \end{enumerate}
\end{lem}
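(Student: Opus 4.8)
The plan is to prove the contrapositive of the nontrivial direction: assuming $G$ is \emph{not} automorphically trivial, I will produce one of the three families of induced subgraphs. (The fact that at most one of the two alternatives can hold is immediate: by \Cref{lem:forbidden-subgraphs} and \Cref{rem:vc-subclasses}, case (2) forces $\VC{\Iso{G}} = \infty$, whereas an automorphically trivial graph is online learnable by \Cref{thm:at-implies-online}, hence has finite VC dimension by \Cref{fact:onlineimpliespac}.) So the work is entirely in the forward direction. The starting point is \Cref{fact:automorphicallytrivialgraphs}: $G$ fails to be automorphically trivial precisely when, for \emph{every} finite $S_0 \subseteq V(G)$ and every $S_0' \subseteq S_0$, the graph on $S_1 = V(G) \setminus S_0$ obtained by the prescribed recipe is neither $K_\N$ nor $\overline{K_\N}$ — equivalently, no matter how we remove finitely many vertices, the remaining infinite graph $G[S_1]$ is neither complete nor edgeless, \emph{and} moreover the connection pattern between $S_0$ and $S_1$ is never "uniform" in the required way.

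The key combinatorial step is a Ramsey-type dichotomy on the infinite graph $G$ itself. First I would apply the infinite Ramsey theorem to $[V(G)]^2$ two-colored by adjacency: this yields an infinite set $W \subseteq V(G)$ on which $G$ is either complete or edgeless. Fix, say, the edgeless case (the complete case is handled by passing to the complement, which is legitimate since automorphic triviality and all three target families are closed under complement — $M_d \leftrightarrow \overline{M_d}$, and $N_d$ is self-complementary up to the partition of the $2d$ neighbors). Now, since $G$ is not automorphically trivial, the "recipe" of \Cref{fact:automorphicallytrivialgraphs} must fail with $S_1 = W$ for every finite choice of parameters; concretely, there is no $S_0' \subseteq V(G) \setminus W$ such that each vertex outside $W$ is adjacent to either all of $W$ or none of $W$. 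Hence there are infinitely many vertices $v \notin W$ each of which has both infinitely many neighbors and infinitely many non-neighbors inside $W$, \emph{or} — if only finitely many such "split" vertices exist — one can enlarge $W$ by absorbing the finitely many bad vertices into a finite $S_0$ and reach a contradiction with non-triviality unless $G[W]$-type behaviour itself is richer. Teasing these cases apart is where I expect the main obstacle to lie: one has to argue that failure of the \emph{uniform} connection pattern, not just failure of $G[S_1]$ being $K_\N/\overline{K_\N}$, is what is being used, and organize the case split cleanly.

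Granting a single vertex $v$ with infinitely many neighbors and infinitely many non-neighbors in an edgeless set $W$: for any $d$, pick $d$ neighbors $a_1,\dots,a_d \in W$ and $d$ non-neighbors $b_1,\dots,b_d \in W$ of $v$; the induced subgraph on $\{v,a_1,\dots,a_d,b_1,\dots,b_d\}$ is exactly a member of $N_d$, giving the second alternative for all $d$. If instead there is no such vertex for large $d$ but $W$ can be chosen edgeless while $G$ still has infinitely many edges "spread out" (no vertex dominating them), I would extract an infinite matching: greedily pick edges $e_1, e_2, \dots$ with pairwise disjoint vertex sets — possible because, after deleting the finitely many vertices of $e_1,\dots,e_n$, the rest of $G$ still contains an edge (otherwise $G$ is edgeless off a finite set, hence, combined with the Ramsey analysis, automorphically trivial). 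Combining an infinite matching with an infinite edgeless set disjoint from it yields $M_d$ as an induced subgraph for every $d$. The remaining bookkeeping — checking that "edgeless off a finite set" plus "no spread-out edges" plus "no split vertices" really does collapse to automorphic triviality via \Cref{fact:automorphicallytrivialgraphs} — is the routine but slightly delicate part, and is exactly the point I would write out most carefully.
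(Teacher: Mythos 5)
Your proposal gets the easy half right (the ``at most one of the two alternatives'' part, via \Cref{lem:forbidden-subgraphs}, \Cref{lem:VC-induced-subgraphs} and \Cref{thm:at-implies-online}, is exactly the paper's argument), and the extraction of a member of $N_d$ from a vertex with $d$ neighbors and $d$ non-neighbors is correct. But the main direction has genuine gaps. First, the step ``since $G$ is not automorphically trivial, the recipe of \Cref{fact:automorphicallytrivialgraphs} must fail with $S_1=W$; concretely, there is no $S_0'\subseteq V(G)\smallsetminus W$ such that each vertex outside $W$ is adjacent to all or none of $W$'' is a non sequitur: automorphic triviality quantifies only over partitions with \emph{finite} $S_0$, and $V(G)\smallsetminus W$ may well be infinite, so non-triviality tells you nothing about this particular partition. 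The ensuing case analysis (``split'' vertices vs.\ absorbing bad vertices vs.\ ``richer $G[W]$-type behaviour'') is left unresolved, and you say so yourself; that unresolved part is precisely the content of the lemma. Second, the matching step is not sound as written: greedily choosing pairwise vertex-disjoint edges yields a matching but not an \emph{induced} copy of $M_d$, which also requires no cross-edges between the chosen edges and $2d$ further vertices isolated \emph{within the induced subgraph} (e.g.\ $K_\N$ has infinitely many disjoint edges but no induced $M_1$). Some degree control is needed to make the greedy argument produce an induced $M_d$, and your setup does not supply it at that point.

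The paper avoids all of this by arguing the contrapositive of clause 2 directly: if clause 2 fails, then in particular there is an $n$ such that $G$ contains no member of $N_n$ as an induced subgraph, which immediately forces every vertex to have either fewer than $n$ neighbors or fewer than $n$ non-neighbors. This splits $V(G)$ into finite-degree and cofinite-degree vertices, one of which must be finite; after deleting the finitely many exceptional vertices (and passing to the complement in one of the two cases), the failure of the $M_d$-clause forces the remainder to have only finitely many edges, whence $G$ is automorphically trivial by \Cref{fact:automorphicallytrivialgraphs}. If you want to salvage your approach, the missing idea is exactly this: use the negation of the $N_d$-clause to get the uniform degree/codegree bound \emph{first}, rather than starting from a Ramsey-homogeneous set; the degree bound is also what legitimizes the greedy construction of an induced infinite matching.
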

\begin{proof}
    Notice that if a graph is automorphically trivial, then \Cref{thm:at-implies-online} and \Cref{fact:onlineimpliespac} ensure that $\VC{\Iso{G}}$ is finite: hence, by \Cref{lem:VC-induced-subgraphs} together with \Cref{lem:forbidden-subgraphs}, such graph does not admit infinitely many $M_d$ or $N_d$ as induced subgraphs.

    So we only need to see that if a graph does not satisfy condition $2$ then it is automorphically trivial. In particular, there is $n$ such that $G$ embeds no graph from $N_n$. This implies that every vertex is either connected to at most $n$ vertices or it is disconnected from at most $n$ vertices. In particular, $G$ has cofinitely many vertices of finite degree or just finitely many vertices of finite degree. We deal with the first case, i.e.\ cofinitely many vertices of finite degree. We delete the finitely many vertices of infinite degree: if infinitely many vertices of non-zero degree are left, then $G$ admits an infinite matching as an induced subgraph, contradiction. Hence, the remaining graph contains only finitely many edges and hence it is automorphically trivial.

    The other case is analogous because automorphically trivial graphs are closed under complementation.
\end{proof}
Thus, putting the above lemmas together, we obtain the promised result.
\begin{thm} \label{thm:finite-pac-implies-at} Let $G$ be such that $\bigcup_{k \in \N} \Fiso{k}{G}$ is PAC learnable. Then $G$ is automorphically trivial.
\end{thm}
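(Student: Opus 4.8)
The plan is to prove the contrapositive: assuming $G$ is \emph{not} automorphically trivial, I will show that $\VC{\bigcup_{k \in \N} \Fiso{k}{G}} = \infty$, so that $\bigcup_{k \in \N} \Fiso{k}{G}$ is not PAC learnable. The starting point is \Cref{lem:at-vs-forbidden-subgraphs}: if $G$ is not automorphically trivial, then one of the following holds—$G$ admits $M_d$ as an induced subgraph for every $d$, or $G$ admits some graph from $N_d$ as an induced subgraph for every $d$, or $G$ admits $\overline{M_d}$ as an induced subgraph for every $d$. The three cases are treated in exactly the same way, the only difference being which clause of \Cref{lem:forbidden-subgraphs} one invokes, so I would spell out the first and note that the others go through \emph{mutatis mutandis} (in the $N_d$ case one also observes that the shattering in \Cref{lem:forbidden-subgraphs} only queries pairs through the center of the star, so it works for an arbitrary member of $N_d$, not merely for the distinguished one).

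So fix $d$, let $A \subseteq \N$ be a set of $4d$ vertices with $G[A] \cong M_d$, and fix an isomorphism $\phi$ from the standard copy of $M_d$ used in the proof of \Cref{lem:forbidden-subgraphs} onto $G[A]$. The key step is to transfer the shattering of the $d$ pairs $(0,1),\dots,(2d-2,2d-1)$ witnessed by $\Fiso{4d}{M_d}$ into $\Fiso{4d}{G}$, without losing the bound on the support. Given $\tau \in \{0,1\}^d$, take the permutation $h_\tau$ produced in \Cref{lem:forbidden-subgraphs}, whose support is contained in $V(M_d)$, conjugate it by $\phi$, and extend it by the identity outside $A$; this yields $\tilde h_\tau \in S(\N)$ whose support is contained in $A$, hence of size at most $4d$, so $\tilde h_\tau(G) \in \Fiso{4d}{G} \subseteq \bigcup_{k \in \N} \Fiso{k}{G}$. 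Since $A$ induces $M_d$ in $G$, for each $i$ we have $(\phi(2i),\phi(2i+1)) \in E(\tilde h_\tau(G))$ if and only if $(2i,2i+1) \in E(h_\tau(M_d))$, i.e.\ if and only if $\tau(i) = 1$; hence the $d$ pairs $(\phi(0),\phi(1)),\dots,(\phi(2d-2),\phi(2d-1))$ are shattered by $\bigcup_{k \in \N} \Fiso{k}{G}$. As $d$ was arbitrary, $\VC{\bigcup_{k \in \N} \Fiso{k}{G}} = \infty$, contradicting PAC learnability, and the theorem follows.

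I expect the transfer in the second paragraph to be the only delicate point: it is essentially a finite-support analogue of \Cref{lem:VC-induced-subgraphs}, but \Cref{lem:VC-induced-subgraphs} itself cannot be quoted directly, since it concerns $\Iso{H}$ with \emph{arbitrary} permutations and gives no control over the support of the witnessing permutations of $G$. What rescues the bound is precisely that the permutations $h_\tau$ in \Cref{lem:forbidden-subgraphs} were engineered to have support inside $V(M_d)$, so conjugating by $\phi$ and extending by the identity keeps the support inside the finite set $A$. The remaining bookkeeping—verifying that $\phi$ being an isomorphism of induced subgraphs forces edge membership of the queried pairs in $\tilde h_\tau(G)$ to agree with edge membership in $h_\tau(M_d)$—uses nothing beyond the definition of a presentation and is routine. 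Finally, I would note that, together with the diagram preceding \Cref{thm:at-implies-online}, this closes the loop and establishes the full equivalence of automorphic triviality, online learnability of $\Iso{G}$, and PAC learnability of $\bigcup_{k \in \N} \Fiso{k}{G}$.
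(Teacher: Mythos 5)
Your proposal is correct and follows essentially the same route as the paper: the contrapositive via \Cref{lem:at-vs-forbidden-subgraphs} combined with the shattering witnesses of \Cref{lem:forbidden-subgraphs}. The one point where you add genuine value is your explicit support-preserving transfer (conjugating the $h_\tau$ by the embedding $\phi$ and extending by the identity): the paper's one-line proof instead cites \Cref{lem:VC-induced-subgraphs}, which as you observe only controls $\VC{\Iso{\cdot}}$ and not the support of the witnessing permutations, so your version makes precise why the shattering lands inside $\Fiso{4d}{G}$ (and your remark that the $N_d$ shattering only queries pairs through the centre, hence applies to any member of the family, is likewise a correct and useful clarification).
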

\begin{proof}
    Assume that $G$ is not automorphically trivial. Then $G$ must satisfy condition 2 of \Cref{lem:at-vs-forbidden-subgraphs}: whatever of the three cases applies, \Cref{lem:VC-induced-subgraphs} and \Cref{lem:forbidden-subgraphs} together imply that  $\bigcup_{k \in \N} \Fiso{k}{G}$ is not PAC learnable.
\end{proof}

%\section{Restricting the class of hypotheses} As the previous section shows, absolute learnability encapsulates different learning criteria. It is natural to ask what happens when we consider online or PAC learnability of all isomorphic copies of $G$ induced by automorphisms of a fixed finite support, i.e., $\Fiso{k}{G}$.

\section{Absolutely non-learnable graphs}\label{sec:abs-non-learnable}
In this section, we analyze the opposite end of the learning spectrum, that is, those graphs in which even the simplest copies, induced by exchanging only two elements, cannot be PAC learned. 
\begin{dfn}
\label{def:absolutelynonlearnable}
    A graph $G$ is \emph{absolutely non-learnable} if $\VC{\Fiso{2}{G}} = \infty$.
\end{dfn}

%Recall that a relational structure $S$ is \emph{homogeneous} if every isomorphism between finite substructures extends to an automorphism of the entire structure. Hence, automorphic triviality is an exceptionally strong form of homogeneity. This may lead to the impression that homogeneous structures should be susceptible to at least \emph{some} learning. Such an impression is wrong: as we will see below, the random graph is absolutely non-learnable. \commgio{inline}{Citare una fonte sulla def. di omogeneità e sul fatto che il grafo random è omogeneo. Inoltre: è vero che automorficamente triviale implica omogeneo?}Indeed, 
We show that a local form of the extension property of the random graph (captured by \Cref{def:almost-random} below) characterizes the absolutely non-learnable graphs: as a consequence, we obtain that the random graph is an example of an absolutely non-learnable graph. The reader is referred to the excellent survey \cite{Cameron} for the properties of the random graph.

\begin{dfn}\label{def:almost-random}
    A graph $G$ is \emph{almost random} if, for every $n$, there exists a set $A \subseteq V(G)$ of cardinality $n$ such that, for every partition of $A$ into two disjoint subsets $X$ and $Y$, one can find $z \in V(G)$ which is adjacent to all vertices in $X$ and to no vertex in $Y$.
    \begin{comment}
    \begin{multline*}
    (\forall n)(\exists A\subseteq V(G), |A|=n)(\forall X\sqcup Y=A)(\exists z \in V(G))
    (z \in \bigcap_{x\in X} Nb(x) \smallsetminus\bigcup_{y\in  Y} Nb(y)).
    \end{multline*}
    \end{comment}
\end{dfn}

\begin{thm}\label{thm:ar-equals-anl}
    A graph $G$ is absolutely non-learnable if and only if it is almost random.
\end{thm}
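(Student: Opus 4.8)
The plan is to prove the two directions separately, translating between the combinatorial shattering condition defining $\VC{\Fiso{2}{G}} = \infty$ and the extension-type condition defining almost randomness. The key observation tying them together is the following: a transposition $(a\ b)$ applied to $G$ changes the neighborhood relation only in a very controlled way — the pair $(u,v)$ changes its adjacency status between $G$ and $(a\ b)(G)$ precisely when exactly one of $u,v$ lies in $\{a,b\}$ and swapping that endpoint toggles the edge. So when we try to shatter a $d$-tuple of pairs using only $2$-support permutations, the natural move is to fix one common vertex — say shatter pairs of the form $(z, x_1), \dots, (z, x_d)$ — and realize each of the $2^d$ patterns $\tau$ by transposing $z$ with a suitable vertex $z_\tau$ whose neighborhood, restricted to $\{x_1, \dots, x_d\}$, is exactly $\{x_i : \tau(i)=1\}$. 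This is exactly the almost-random extension condition with $A = \{x_1, \dots, x_d\}$.

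For the direction ``almost random $\Rightarrow$ absolutely non-learnable'': given $n$, take the set $A = \{x_1, \dots, x_n\}$ witnessing almost randomness. I would pick an auxiliary ``base'' vertex $z \notin A$ (available since $G$ is infinite) and show that $\Fiso{2}{G}$ shatters the $n$ pairs $(z, x_1), \dots, (z, x_n)$. Some care is needed because the transposition $(z\ z_\tau)$ also alters pairs we are not tracking, and because $z_\tau$ might coincide with $z$ or with some $x_i$; I would handle this by noting we only care about adjacency of the tracked pairs $(z, x_i)$ after applying the permutation, and for those, $((z\ z_\tau)(z), (z\ z_\tau)(x_i)) = (z_\tau, x_i)$ (assuming $x_i \neq z_\tau$, which can be arranged, or handled as a boundary case), whose adjacency in $(z\ z_\tau)(G)$ equals the adjacency of $(z_\tau, x_i)$ in $G$, i.e. is $1$ iff $\tau(i) = 1$. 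Hence all $2^n$ patterns are realized, so $\VC{\Fiso{2}{G}} \geq n$ for every $n$.

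For the converse ``absolutely non-learnable $\Rightarrow$ almost random'': suppose $\VC{\Fiso{2}{G}} = \infty$, so for every $d$ there are pairs $(u_1, v_1), \dots, (u_d, v_d)$ shattered by transposition-copies of $G$. The task is to extract, from such a shattered configuration, a set $A$ of size roughly $d$ (or growing with $d$) witnessing the extension property. The point is that for each pattern $\tau$ the witnessing transposition $(a_\tau\ b_\tau)$ must flip exactly the tracked pairs prescribed by $\tau$ relative to some fixed reference pattern; since a transposition has support $2$, the only pairs whose adjacency it can change are those meeting $\{a_\tau, b_\tau\}$. Comparing two patterns that differ in many coordinates forces the corresponding transpositions to touch many of the tracked pairs, which pins down a common vertex structure. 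I expect this extraction — showing that a large shattered family of pairs under $2$-transpositions must essentially have the ``star'' form $(z, x_i)$ and thus yields an almost-random set — to be the main obstacle, since a priori the shattered pairs need not share a vertex and the various transpositions need not be related. The likely resolution is a pigeonhole/counting argument: a transposition $(a\ b)$ can affect at most (number of tracked pairs meeting $\{a,b\}$) many coordinates, so to span a shattering of $d$ pairs one needs the tracked pairs to cluster around few vertices; chasing this through yields an almost-random set whose size tends to infinity with $d$, which is all that is needed.
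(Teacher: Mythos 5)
Your forward direction (almost random implies $\VC{\Fiso{2}{G}}=\infty$) is essentially the paper's argument and is fine modulo the boundary cases you already flag. The converse, however, has a genuine gap at exactly the place you identify as the main obstacle, and the sketch you give does not close it. Your comparison/pigeonhole argument does get you, as in the paper, to a large star-shaped shattered family $(z,x_1),\dots,(z,x_m)$: the all-zero and all-one configurations are each realized by a $2$-support permutation, so every shattered pair meets a set of at most $4$ vertices, and pigeonhole yields a common endpoint $z$ for a quarter of them. But from there it does \emph{not} follow that the $x_i$'s form an almost-random witness set: a given configuration $\tau$ might be realized by a transposition that moves one of the $x_i$'s rather than $z$, in which case that realizing permutation gives you no vertex of $G$ whose neighborhood restricted to $\{x_1,\dots,x_m\}$ matches $\tau$, and the ``images of $z$'' you want to use as the extension witnesses $z_{X,Y}$ simply do not exist for that $\tau$. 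Some quantitative device is needed to force $z$ to be the vertex that moves.

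The paper's device is replication: it starts from $4n(2^{n+1}+1)$ shattered pairs, extracts $2^{n+1}+1$ disjoint blocks of $n$ pairs all sharing the common vertex, and imposes the \emph{same} pattern on every block simultaneously. Realizing all $2^n$ patterns then moves at most $2\cdot 2^n$ vertices in total, so at least one block is never touched by any realizing permutation; within that block each pattern must be realized by moving the common vertex alone, and its images are the required $z_{X,Y}$. An alternative closer to your sketch: a configuration of the star realized \emph{without} moving $z$ differs from the base configuration of $G$ in at most $2$ coordinates, so only $O(m^2)$ of the $2^m$ configurations can arise that way; hence if $m$ exceeds $n$ by roughly $2\log_2 m$, every pattern on $\{x_1,\dots,x_n\}$ has an extension realized by moving $z$, which produces the witness. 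Either way, one of these counts has to be made explicit; as written, the step ``and thus yields an almost-random set'' is unjustified.
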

\begin{proof}
    Fix $n$, we want to show that there exists $A\subseteq V(G)$ such that $|A|=n$ and for every partition $X\sqcup Y=A$, there exists $z_{X,Y}\in V(G)$ such that $X\subseteq N(z_{X,Y})$ and $Y\cap N(z_{X,Y})=\emptyset$.
    
    Let $W=\{(u_i,v_i)\}_{i<4n(2^{n+1}+1)}$ be a set of edges witnessing that $VC(\Fiso{2}{G})\geq 4n(2^{n+1}+1)$. Let $G_0 \in \Fiso{2}{G}$ be the graph with no edges from $W$, and $G_1 \in \Fiso{2}{G}$ be the graph with all edges form $W$.
    
%\commgio{}{the following is copy-pasted from 5.2, change the proof of that one since now arguably fewer details are needed.}
Notice that $G_0$ was obtained from $G$ by moving at most $2$ edges, and the same holds for $G_1$: hence, $G_0$ can be obtained from $G_1$ moving at most $4$ vertices. It follows that there are $4$ vertices $\{x_0,\dots x_{3}\}$ such that for all $i < 4n(2^{n+1}+1)$ there is an $\ell< 4$ such that $x_\ell=u_i$ or $x_\ell=v_i$. For simplicity, we suppose $x_\ell=u_i$ in all cases. It follows that there are $\overline{\ell} < 4$ and $H\subseteq \{0,\dots, 4n(2^{n+1}+1)\}$ of size $n(2^{n+1}+1)$ such that $\{(x_{\overline{\ell}},v_i)\}_{i\in H}\subseteq W$.

   We divide $H$ in $2^{n+1}+1$ parts, say $H_0,\dots,H_{2^{n+1}}$, each of size  $n$. For every $j<2^{n+1}+1$, let $K_j$ the set of edges $\{(x_{\overline{\ell}},v_i)\}_{i\in H_j}$. For every $j<2^{n+1}+1$, we rename the vertices of $K_j$ in such a way that $K_j=\{(x_{\overline{\ell}}, v^j_k)\}_{k<n}$.

    There are exactly $2^n$ graphs $S$ such that $V(S)=\{0,\dots,n\}$ and $E(S)\subseteq \{(n,1),\dots, (n,n-1)\}$. Call them $S_0,\dots, S_{2^n-1}$ and let $K^i$ be the graph such that
    \[
    V(K^i)=H \cup \{x_{\overline{l}}\} \, \text{ and } \, E(K^i)=\{(x_{\overline{\ell}}, v^j_k ) : (n,k)\in E(S_i),\ j<2^{n+1}+1 \text{ and } k<n\}
    \]
    Essentially, we are copying the structure of $S_i$ on the $K_j$'s.

    By the assumption on $W$, every $K^i$ can be obtained from $G$ moving at most $2$ vertices. Hence, we can obtain any of the $K^j$ from $G$ moving at most $2\cdot 2^n$ vertices. It follows that there is at least one $\overline{j}< 2^{n+1}+1$ such that none of the vertices $\{v^{\overline{j}}_0,\dots, v^{\overline{j}}_{n-1}\}$ are being moved. It follows that $x_{\overline{\ell}}$ needs to be moved to realize any of the $K^i$: it is thus easy to see that the points to which $x_{\overline{\ell}}$ is mapped are the $z_{X,Y}$ that we are looking for. This shows that if $\VC{\Fiso{2}{G}}$ is infinite, then $G$ is almost random.

    Conversely, assume that $G$ is almost random and let $n \in \N$: by definition, there are $v_1, \dots, v_n$ such that, for every $S \subseteq \{1, \dots, n\}$, there is $z_S$ for which $(z_S, v_i) \in E(G)$ if and only if $i \in S$. For every $S \subseteq \{1, \dots, n\}$, let $h_S \in \Fiso{2}{G}$ such that $h_S(z_{\emptyset})=z_S$: then the $h_S$'s witness that $\Fiso{2}{G}$ shatters $(z_{\emptyset}, v_1), \dots, (z_{\emptyset},v_n)$.
\end{proof}

%\begin{proof} Assume that $G$ is absolutely non-learnable and let $n \in \N$. Observe that, for every graph $G$, if $(u,v), (u',v') \in \N\times \N$ are disjoint (i.e.~$\{u,v\} \cap \{u',v'\} = \emptyset$), then $\Fiso{2}{G}$ does not shatter $(u,v), (u',v')$. %Indeed, starting from a copy $G$ in which $\chi_G (u,v)=i$ and $\chi_G (u',v')=i'$, no permutation with support of size 2 generates a copy $G'$ with $\chi_G (u,v)=1-i$ and $\chi_G (u',v')=1-i'$. 
%Therefore, $\Fiso{2}{G}$ must shatter a tuple of pair of vertices of the form $(z,v_1), \dots, (z,v_n)$. Without loss of generality, let $G$ be the element in $\Fiso{2}{G}$ such that $(z,v_i) \in E(G)$ for all $1 \le 1 \le n$. Since $\Fiso{2}{G}$ shatters $(z,v_1), \dots, (z,v_n)$, for every $S \subseteq \{1, \dots, n\}$ there is a permutation $h_S$ such that $h_S(z)=z_S$ and $(z_S,v_i) \in h_S(G)$ if and only if $i \in S$. Thus, the $z_S$'s together with $A=\{v_1, \dots, v_n\}$ satisfy the definition of almost random graph. 

%Conversely, assume that $G$ is almost random and let $n \in \N$: by definition, there are $v_1, \dots, v_n$ such that, for every $S \subseteq \{1, \dots, n\}$, there is $z_S$ for which $(z_S, v_i) \in E(G)$ if and only if $i \in S$. For every $S \subseteq \{1, \dots, n\}$, let $h_S \in \Fiso{2}{G}$ such that $h_S(z_{\emptyset})=z_S$: then the $h_S$'s witness that $\Fiso{2}{G}$ shatters $(z_{\emptyset}, v_1), \dots, (z_{\emptyset},v_n)$.
%\end{proof}

The promised absolute non-learnability of the random graph follows as an immediate corollary (since, obviously, the random graph is, in particular, almost random).
\begin{cor}
\label{cor:randomgraph}
    The random graph $\mathcal{R}$ is absolutely non-learnable.
\end{cor}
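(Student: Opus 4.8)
The statement to prove is \Cref{cor:randomgraph}: the random graph $\mathcal{R}$ is absolutely non-learnable. Given \Cref{thm:ar-equals-anl}, which establishes the equivalence between absolute non-learnability and almost randomness, the plan reduces to verifying that $\mathcal{R}$ is almost random. This should be essentially immediate from the defining \emph{extension property} of the random graph.

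The plan is as follows. First I would recall the extension property of $\mathcal{R}$: for any two finite disjoint sets of vertices $U, W \subseteq V(\mathcal{R})$, there exists a vertex $z \in V(\mathcal{R})$ adjacent to every vertex of $U$ and to no vertex of $W$. Then, to check that $\mathcal{R}$ is almost random in the sense of \Cref{def:almost-random}, fix an arbitrary $n \in \N$ and take \emph{any} set $A \subseteq V(\mathcal{R})$ with $|A| = n$ (the extension property is so strong that the choice of $A$ is irrelevant). Given any partition $A = X \sqcup Y$, apply the extension property with $U = X$ and $W = Y$: since $X$ and $Y$ are finite and disjoint, there is a vertex $z$ adjacent to all of $X$ and to none of $Y$, which is precisely the $z$ demanded by \Cref{def:almost-random}. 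Hence $\mathcal{R}$ is almost random, and \Cref{thm:ar-equals-anl} yields that $\mathcal{R}$ is absolutely non-learnable.

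There is essentially no obstacle here; the only point worth a sentence of care is confirming that the extension property really does give what \Cref{def:almost-random} asks for — namely that a \emph{single} fixed $A$ of size $n$ works simultaneously for \emph{all} $2^n$ partitions. This is automatic because the extension property is applied separately to each partition and imposes no compatibility constraint across different partitions, and because it holds for every finite disjoint pair $(U,W)$ regardless of which finite set they come from. One could alternatively bypass \Cref{thm:ar-equals-anl} and argue directly, exhibiting for each $n$ the shattered tuple $(z_\emptyset, v_1), \dots, (z_\emptyset, v_n)$ together with the support-$2$ permutations $h_S$ mapping $z_\emptyset$ to the witness $z_S$ — but since \Cref{thm:ar-equals-anl} is already available, invoking it is the cleanest route.
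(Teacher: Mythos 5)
Your proposal is correct and follows exactly the paper's route: the paper derives the corollary as an immediate consequence of \Cref{thm:ar-equals-anl} together with the observation that $\mathcal{R}$ is almost random, which you simply unpack via the extension property. No issues.
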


\section{Learning presentations with bounded support}
\label{sec:finitesupport}
%In \Cref{sec:online-learnable}, we have seen that the whole isomorphism type of a graph is online learnable precisely when the graph is automorphically trivial. Additionally, in \Cref{sec:abs-non-learnable}, we have characterized those graphs $G$ for which we cannot learn $\Fiso{2}{G}$ even in the PAC sense by means of the notion of almost randomness. 
The results of the previous sections impliy that, for graphs $G$ which are neither automorphically trivial nor almost random, we should be capable of learning $\Fiso{k}{G}$, at least for some $k$. Since the results in \Cref{sec:online-learnable} imply that automorphically trivial graphs are precisely those whose entire class of copies induced by permutations with bounded support is learnable, the immediately weaker notions worth considering are the following. 
\begin{dfn}\label{def:weakly-learnable}
    A graph $G$ is \emph{weakly online} (respectively, \emph{PAC}) \emph{learnable} if, for every $k \in \N$, $\Fiso{k}{G}$ is online (respectively, PAC) learnable 
\end{dfn}

%Notice that in the above definition we only require that the Littlestone dimension or the VC dimension of each of the $\Fiso{k}{G}$'s is finite, but their sequence does not need to be bounded. Thus, every online learnable graph is, in particular, weakly online learnable, but the converse might not necessarily hold: indeed, in \Cref{sec:separation}, we will exhibit a class of graphs which are weakly online learnable but not online learnable. 

In \Cref{section:fisok} we show that, perhaps surprisingly, weakly online (respectively, PAC) learnability of a graph $G$ boils down to online (PAC) learnability of $\Fiso{2}{G}$. As a consequence, we obtain that any graph $G$ is contained in one of the classes depicted in \Cref{fig:landscape}.
So far, we have provided examples of online learnable graphs and absolutely non-learnable graphs: \Cref{sec:separation} will be devoted to separate the notion of weak online learnability from online learnable, and that of weak PAC learnability from weak online learnability.

\subsection{Learning $\Fiso{k}{G}$ boils down to learning $\Fiso{2}{G}$} \label{section:fisok}
We begin by proving that, if $\VC{\Fiso{k}{G}}$ is large enough, then $\VC{\Fiso{2}{G}}$ must be large, too.

We find it convenient to introduce the following (rather natural) notation: given a graph $G$ and a set of pairs $H = \{(u_i,v_i)\}_{i\in I} \subseteq \N\times \N$, a \emph{configuration of $H$} is any binary string $\tau \in \{0,1\}^I$. We say that a configuration $\tau$ \emph{is realized in $\Fiso{k}{G}$} if there is a graph $G'\in \Fiso{k}{G}$ such that, for every $i\in I$, $(u_i,v_i)\in E(G')$ if and only if $\tau(i)=1$.

\begin{lem}
\label{lem:vc-iso-collapse}
    For every graph $G$ and $j\in\N$, $\VC{\Fiso{k}{G}}\geq 2(k+1)kj$ implies $\VC{\Fiso{2}{G}}\geq j$.
%\commgio{}{General comment for both proofs: explain why 2k; present construction as a proof by contradiction}
\end{lem}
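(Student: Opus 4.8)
## Proof proposal for Lemma \ref{lem:vc-iso-collapse}

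The plan is to start from a shattered set of size $2(k+1)kj$ in $\Fiso{k}{G}$ and extract, by a pigeonholing argument, a sub-collection of $j$ pairs that is shattered using only permutations of support $2$. First I would take pairs $(u_1,v_1),\dots,(u_N,v_N)$ with $N = 2(k+1)kj$ that are shattered in $\Fiso{k}{G}$, and fix two ``extreme'' copies: $G_0\in\Fiso{k}{G}$ realizing the all-zero configuration (no pair is an edge) and $G_1\in\Fiso{k}{G}$ realizing the all-one configuration (every pair is an edge). Since each of $G_0,G_1$ differs from $G$ by a permutation of support at most $k$, the graph $G_1$ can be transformed into $G_0$ by a permutation of support at most $2k$; equivalently there is a set $X$ of at most $2k$ vertices outside of which $G_0$ and $G_1$ agree. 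Because every pair $(u_i,v_i)$ flips between $G_0$ and $G_1$, at least one endpoint of each $(u_i,v_i)$ lies in $X$. By pigeonhole, one fixed vertex $x\in X$ is an endpoint of at least $N/(2k) = (k+1)j$ of the pairs; relabelling, assume $x = u_i$ for all $i$ in an index set $I_0$ with $|I_0| = (k+1)j$, and write these pairs as $(x,v_i)$.

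Next I would exploit shattering on this star of $(k+1)j$ pairs hanging off the single vertex $x$. Split $I_0$ into $k+1$ blocks $B_0,\dots,B_k$, each of size $j$. The key point is that, for each block $B_\ell$, asking $\Fiso{2}{G}$ to realize an arbitrary configuration on $\{(x,v_i):i\in B_\ell\}$ amounts to choosing where to send $x$: a single swap moving $x$ to a suitable vertex $z$ realizes the pattern ``$z$ adjacent to exactly the prescribed $v_i$'s'', provided such a $z$ exists in the ambient copy and provided the $v_i$'s themselves are not disturbed. So what I really need is a witness vertex for every one of the $2^j$ configurations on one block; and the obstruction is that the $\Fiso{k}{G}$-realizations providing these witnesses might move some of the $v_i$'s. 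To handle this, for each of the $2^j$ configurations $\sigma$ on a block and each block $B_\ell$, take a copy $G_\sigma^\ell\in\Fiso{k}{G}$ realizing $\sigma$ on $B_\ell$ and the all-zero (or any fixed) pattern elsewhere on $I_0$; this copy differs from $G$ by support $\le k$, so among the $k+1$ disjoint blocks there is at least one block $B_{\ell_\sigma}$ none of whose $v_i$'s is moved. Thus for configuration $\sigma$ the vertex $x$ must have been moved to some $z_\sigma$ adjacent in $G$ to exactly the $v_i$'s ($i\in B_{\ell_\sigma}$) prescribed by $\sigma$. Since there are $2^j$ configurations $\sigma$ but only $k+1$ blocks, a further pigeonhole gives a single block $B^\ast$ that works for a collection of configurations — and I want it to work for all $2^j$ of them simultaneously.

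The cleaner way to organize the last step, which I would actually carry out, is to iterate the block-pigeonhole over all configurations at once rather than one at a time. With $k+1$ blocks and the fact that any single copy in $\Fiso{k}{G}$ can disturb vertices in at most $k$ of the blocks, one shows: for \emph{every} configuration $\sigma\in\{0,1\}^j$ there exists a block on which $\sigma$'s realization leaves the relevant $v_i$'s fixed, hence forces $x\mapsto z_\sigma$ with $z_\sigma$ having exactly the right $G$-neighbours among that block's $v_i$'s. Fixing attention on the first block $B_0$ (say), the configurations for which $B_0$ ``works'' give $\Fiso{2}{G}$-witnesses $h_\sigma$ (swap $x$ with $z_\sigma$) shattering the $j$ pairs $\{(x,v_i):i\in B_0\}$ — at least once we know $B_0$ handles all $2^j$ configurations. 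Guaranteeing one block handles all configurations is exactly where the factor $(k+1)$ (rather than $k$) and the blow-up to $2(k+1)kj$ are spent, and this pigeonhole bookkeeping is the main obstacle: making precise that disturbing ``at most $k$ of $k+1$ blocks'' can be arranged coherently for the single vertex $x$ across the exponentially many configurations. Once the right block is isolated, the conclusion $\VC{\Fiso{2}{G}}\ge j$ is immediate: the $2^j$ swaps of $x$ witness that $\Fiso{2}{G}$ shatters those $j$ pairs.
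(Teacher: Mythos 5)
Your first half---fixing the all-zero and all-one copies $G_0,G_1$, observing that $G_0$ is obtained from $G_1$ by a permutation of support at most $2k$, and pigeonholing to a single vertex $x$ carrying at least $(k+1)j$ of the shattered pairs $(x,v_i)$---is exactly the paper's opening move, as is the division of these pairs into $k+1$ blocks of size $j$.

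The gap is in the final step, and you have correctly located it but not closed it. Pigeonholing $2^j$ configurations against $k+1$ blocks only yields a block that ``works'' for at least $2^j/(k+1)$ of them, not for all of them; moreover, even the per-configuration claim is shaky, since the block whose $v_i$'s survive untouched in the realization of $\sigma$ need not be the block on which $\sigma$ was imposed, so the surviving block certifies the wrong adjacency pattern for the image of $x$. To push the direct ``find one globally surviving block'' strategy through you would have to replicate each configuration on every block and take more blocks than the total number of vertices moved across all $2^j$ realizations, i.e.\ roughly $k\cdot 2^j+1$ blocks, which requires a shattered set exponentially larger than the stated $2(k+1)kj$ (this replication trick is essentially what the paper does in \Cref{thm:ar-equals-anl} for the support-$2$ case, where $2^{n+1}+1$ blocks are used). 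The paper's proof of the present lemma instead runs the argument in the contrapositive: assume $\VC{\Fiso{2}{G}}<j$. Then for each of the $k+1$ blocks $H_h$, not every configuration on $\{(x,v_i):i\in H_h\}$ can be realized by moving $x$ alone---otherwise the images of $x$ would shatter those $j$ pairs in $\Fiso{2}{G}$---so each block owns a ``bad'' configuration whose realization must move some $v_i$ with $i\in H_h$. Pasting the $k+1$ bad configurations together gives a single configuration on all of $H$, which is realizable in $\Fiso{k}{G}$ by shattering, yet any realization must move at least one vertex per block, hence at least $k+1$ distinct vertices, contradicting the support bound $k$. This inversion is what makes $k+1$ blocks (and hence the polynomial bound) suffice; as written, your argument does not reach the conclusion.
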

%\begin{proof}
%   Let $W=\{(u_i,v_i)\}_{i\in 2(k+1)kj}$ be a set of edges witnessing that $\VC{\Fiso{k}{G}}\geq 2(k+1)kj$. Let $G_0\in \Fiso{k}{G}$ be the graph with no edge from $W$, and $G_1\in\Fiso{k}{G}$ be the graph with all the edges from $W$: since we changed the status of all the edges moving at most $2k$ points, it follows that there are $2k$ points $\{x_0,\dots x_{2k-1}\}$ such that for all $i\in 2(k+1)kj$ there is an $\ell\in 2k$ such that $x_\ell=u_i$ or $x_\ell=v_i$. For simplicity, we suppose $x_\ell=u_i$ in all cases. It follows that there are $\overline{\ell}\in 2k$ and $H\subseteq 2(k+1)kj$ of size at least $(k+1)j$ such that $\{(x_{\overline{\ell}},v_i)\}_{i\in H}\subseteq W$.

%   We divide $H$ in $k+1$ parts, each of size at least $j$, and we call them $H_0,\dots,H_k$. If we can reach every configuration of $\{(x_{\overline{\ell}},v_i)\}_{i\in H_0}$ just by moving $x_{\overline{\ell}}$ with respect to the configuration in $G$, then $\{(x_{\overline{\ell}},v_i)\}_{i\in H_0}$ witnesses that $\VC{\Fiso{2}{G}}\geq j$. So suppose this is not the case: there is a configuration of $\{(x_{\overline{\ell}},v_i)\}_{i\in H_0}$, which we call $G^0$, that requires moving one of the $v_i$. We argue like this for all the $k+1$ components: if for every $h\in k+1$ we find a configuration $G^h$ that requires moving one of the $v_i$ with $i\in H_h$, then the configuration of $\{(x_{\overline{\ell}},v_i)\}_{i\in H}$ resulting from pasting together the $G^h$ would require moving (at least) $k+1$ points, contradiction.
%\end{proof}

\begin{proof}
   Let $W=\{(u_i,v_i)\}_{i< 2(k+1)kj}$ be a set of edges witnessing that $\VC{\Fiso{k}{G}}\geq 2(k+1)kj$. Let $G_0\in \Fiso{k}{G}$ be the graph with no edges from $W$, and $G_1\in\Fiso{k}{G}$ be the graph with all the edges from $W$. Notice that $G_0$ was obtained from $G$ by moving at most $k$ edges, and the same holds for $G_1$: hence, $G_0$ can be obtained from $G_1$ moving at most $2k$ vertices. It follows that there are $2k$ vertices $\{x_0,\dots x_{2k-1}\}$ such that for all $i < 2(k+1)kj$ there is an $\ell< 2k$ such that $x_\ell=u_i$ or $x_\ell=v_i$. For simplicity, we suppose $x_\ell=u_i$ in all cases. It follows that there are $\overline{\ell} < 2k$ and $H\subseteq \{0,\dots,2(k+1)kj-1\}$ of size at least $(k+1)j$ such that $\{(x_{\overline{\ell}},v_i)\}_{i\in H}\subseteq W$.

    Let us assume for a contradiction that $\VC{\Fiso{2}{G}}<j$. 
   We divide $H$ in $k+1$ parts, each of size at least $j$, and we call them $H_0,\dots,H_k$. 
   We start by focusing on $H_0$. Notice that we cannot realize every configuration of $\{(x_{\overline{\ell}},v_i)\}_{i\in H_0}$ in $\Fiso{k}{G}$ just by moving $x_{\overline{\ell}}$ with respect to the original assignment in $G$, since this would imply that $\VC{\Fiso{2}{G}}\geq j$. Hence, there is a configuration of $\{(x_{\overline{\ell}},v_i)\}_{i\in H_0}$, which we call $G^0$, that requires moving one of the $v_i$ to be realized in $\Fiso{k}{G}$. We argue like this for all the $k+1$ components, thus finding, for every $h<k+1$, a configuration $G^h$ that requires moving one of the $v_i$ with $i\in H_h$ to be realized. Then, the configuration of $\{(x_{\overline{\ell}},v_i)\}_{i\in H}$ resulting from pasting together the $G^h$ would require moving (at least) $k+1$ vertices to be realized in $\Fiso{k}{G}$. This gives the desired contradiction.
\end{proof}

%\commgio{inline}{the bound can obviously be improved, the 2 is not necessary. Also, the conclusion should be $\geq j+1$.}

%It follows that a graph is absolutely non-learnable if and only if $\VC{\Fiso{k}{G}}=\infty$ for any $k\in\omega\setminus 2$.
As an immediate consequence, we get the promised equivalent condition for weak PAC learnability.
\begin{thm}
\label{thm:2kPAC}
    A graph $G$ is weakly PAC learnable if and only if $\Fiso{2}{G}$ is PAC learnable.
\end{thm}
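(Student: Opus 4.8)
The plan is to derive \Cref{thm:2kPAC} as a direct corollary of \Cref{lem:vc-iso-collapse}, observing that the statement is really just an unwinding of the definition of weak PAC learnability together with the quantitative collapse provided by that lemma.

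First I would recall that, by \Cref{def:weakly-learnable}, $G$ is weakly PAC learnable precisely when $\Fiso{k}{G}$ is PAC learnable for every $k \in \N$, i.e.\ when $\VC{\Fiso{k}{G}} < \infty$ for all $k$. One direction is immediate: if $G$ is weakly PAC learnable, then in particular $\Fiso{2}{G}$ is PAC learnable, since $2 \in \N$. For the converse, suppose $\Fiso{2}{G}$ is PAC learnable, so $\VC{\Fiso{2}{G}} = j_0$ for some finite $j_0$. Fix an arbitrary $k \in \N$; I want to show $\VC{\Fiso{k}{G}} < \infty$. If it were the case that $\VC{\Fiso{k}{G}} \ge 2(k+1)k(j_0+1)$, then \Cref{lem:vc-iso-collapse} (applied with $j = j_0 + 1$) would give $\VC{\Fiso{2}{G}} \ge j_0 + 1 > j_0$, a contradiction. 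Hence $\VC{\Fiso{k}{G}} < 2(k+1)k(j_0+1)$, which is finite, so $\Fiso{k}{G}$ is PAC learnable. Since $k$ was arbitrary, $G$ is weakly PAC learnable.

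I should also note for completeness that when $k \le 2$ the bound in \Cref{lem:vc-iso-collapse} is not needed, since $\Fiso{0}{G}$ and $\Fiso{1}{G}$ are contained in $\Fiso{2}{G}$ and hence have VC dimension at most $j_0$ by \Cref{rem:vc-subclasses}; so the interesting content is entirely for $k \ge 3$, where \Cref{lem:vc-iso-collapse} does the work.

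There is really no obstacle here beyond correctly invoking \Cref{lem:vc-iso-collapse}: the lemma is precisely the contrapositive-friendly bound needed, and the proof of the theorem is a one-line application of it. The only mild subtlety is bookkeeping of the quantifiers — making sure that the finiteness of $\VC{\Fiso{2}{G}}$ yields a uniform-in-$k$ finiteness statement via an explicit bound rather than merely a finiteness claim for each $k$ separately — but since \Cref{lem:vc-iso-collapse} furnishes an explicit function of $k$ and $j_0$, this is automatic.
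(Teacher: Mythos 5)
Your proof is correct and matches the paper exactly: the paper states \Cref{thm:2kPAC} as an immediate consequence of \Cref{lem:vc-iso-collapse}, and your contrapositive application of that lemma (together with the trivial direction and the \Cref{rem:vc-subclasses} observation for small $k$) is precisely the intended argument.
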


Similarly, if $\Fiso{k}{G}$ contains enough thresholds, then $\Fiso{2}{G}$ must contain a large number of thresholds. 
\begin{lem}
\label{lem:ld-iso-collapse}
    For every graph $G$ and $j\in\N$, if $\Fiso{k}{G}$ contains at least $2k(j+1)^{k+1}+1$ thresholds, then $\Fiso{2}{G}$ contains at least $j+1$ thresholds.
\end{lem}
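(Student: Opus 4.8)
The plan is to mimic closely the structure of the proof of \Cref{lem:vc-iso-collapse}, replacing the shattering/VC-dimension bookkeeping with the threshold/Littlestone-dimension bookkeeping provided by \Cref{fct:thresholds}. So suppose $\Fiso{k}{G}$ contains $m := 2k(j+1)^{k+1}+1$ thresholds, witnessed by pairs $(u_1,v_1),\dots,(u_{m-1},v_{m-1})$ and graphs $H_1,\dots,H_m \in \Fiso{k}{G}$ with $(u_t,v_t)\in E(H_i) \iff i \le t$. First I would set $G_0 := H_1$ (no pair from the list is an edge, except we must be careful: actually $H_1$ satisfies $(u_t,v_t)\in E(H_1)\iff 1\le t$, so $H_1$ contains \emph{all} the listed pairs; instead take $H_m$, which contains none of them, and $H_1$, which contains all of them, as the two extremes). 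Since both $H_1$ and $H_m$ are obtained from $G$ by moving at most $k$ vertices, $H_1$ is obtained from $H_m$ by moving at most $2k$ vertices, so there is a set $\{x_0,\dots,x_{2k-1}\}$ of $2k$ vertices that covers (as an endpoint) every pair $(u_t,v_t)$ on which $H_1$ and $H_m$ disagree — and they disagree on all $m-1$ of them. Hence some fixed endpoint $x_{\bar\ell}$ serves as an endpoint of at least $(m-1)/(2k) \ge (j+1)^{k+1}$ of the listed pairs; write these, in the original threshold order, as $(x_{\bar\ell}, w_1), \dots, (x_{\bar\ell}, w_r)$ with $r \ge (j+1)^{k+1}$, and keep the corresponding sub-sequence of hypotheses, which still forms a threshold sequence of length $\ge (j+1)^{k+1}$ on these pairs.

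The combinatorial heart is then a pigeonhole argument showing that a threshold sequence of length $(j+1)^{k+1}$ on pairs all sharing the vertex $x_{\bar\ell}$ must, within $\Fiso{2}{G}$, contain $j+1$ thresholds. The intuition: each hypothesis $H_i$ in the sequence is reached from $G$ by a permutation of support $\le k$, hence involves moving $x_{\bar\ell}$ and at most $k-1$ other vertices among (or interacting with) the $w$'s. I would partition the long threshold sequence into $(j+1)^{k+1}$ consecutive blocks — or rather argue by iterating the block-splitting $k+1$ times: if no $j+1$ of the hypotheses agree "up to a single swap of $x_{\bar\ell}$" on a block of $j+1$ threshold-steps, then that block forces the use of a second moved vertex; doing this across $k+1$ disjoint blocks forces $k+1$ distinct extra moved vertices, exceeding the support bound $k$ and yielding the contradiction — exactly parallel to the pasting argument in \Cref{lem:vc-iso-collapse}. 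Concretely: divide the $r \ge (j+1)^{k+1}$ pairs into $(j+1)^k$ consecutive groups of size $j+1$; on each group, either the $j+1$ relevant threshold-witnessing hypotheses can be replaced by copies in $\Fiso{2}{G}$ realizing the same threshold pattern on that group (by moving only $x_{\bar\ell}$), giving us $j+1$ thresholds in $\Fiso{2}{G}$ and we are done, or that group "costs" an extra moved vertex. In the latter case we recurse on the $(j+1)^k$-many group-representatives, which still form a threshold sequence; after $k+1$ rounds we have forced $k+1$ moved vertices beyond none available, contradiction.

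I expect the main obstacle to be making the recursion bookkeeping precise: specifically, verifying that "realizing the threshold pattern on a block by moving only $x_{\bar\ell}$" is exactly the statement "$\Fiso{2}{G}$ contains $j+1$ thresholds on those block-pairs," and — more delicately — that when a block forces an extra moved vertex, the vertices forced by different blocks are genuinely distinct (so that they sum to more than $k$). This distinctness is the same subtlety handled in \Cref{lem:vc-iso-collapse} by working with disjoint index sets $H_0,\dots,H_k$ and pasting; here the threshold order imposes a linear structure, so I would need to check that restricting a global presentation in $\Fiso{k}{G}$ to a block still moves $x_{\bar\ell}$ plus whatever the block demands, and that the "demands" of disjoint blocks are supported on disjoint vertex sets because the $w_i$'s are distinct and each lies in at most one block. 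Once the disjointness is pinned down, the counting ($2k$ from the extreme-hypothesis comparison, then a $(k+1)$-fold branching each costing a factor $j+1$) reproduces the bound $2k(j+1)^{k+1}+1$ in the statement, and \Cref{fct:thresholds} converts the conclusion back into the desired Littlestone-dimension collapse (used, presumably, to prove the online analogue \Cref{thm:2konline}).
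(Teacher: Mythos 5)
Your first half is exactly the paper's: compare the two extreme hypotheses (all listed pairs present vs.\ none), deduce that at most $2k$ vertices cover every listed pair, and pigeonhole to extract a single vertex $x_{\overline{\ell}}$ incident to at least $(j+1)^{k+1}$ of the pairs, which still carry a threshold structure. The overall strategy for the second half --- assume $\Fiso{2}{G}$ has fewer than $j+1$ thresholds and force a single configuration that needs $k+1$ moved vertices besides $x_{\overline{\ell}}$ --- is also the right one. But the concrete decomposition you propose (cut the $(j+1)^{k+1}$ pairs into $(j+1)^k$ \emph{consecutive disjoint} blocks of size $j+1$, extract one ``extra moved vertex'' per bad block, and recurse on block representatives) does not work, and the obstacle is not just bookkeeping. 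The configurations realized on an ordered threshold sequence are exactly the step functions ($0$'s followed by $1$'s): a hypothesis whose switch point lies in block $B_g$ is constantly $0$ on all earlier blocks and constantly $1$ on all later ones. Hence in any \emph{single} realized configuration at most one consecutive block carries a non-constant pattern, so the extra vertices forced separately by different blocks can never be charged to one hypothesis of $\Fiso{k}{G}$; they do not ``sum to more than $k$.'' This is precisely the dependency the paper warns about when it says the threshold case, unlike \Cref{lem:vc-iso-collapse}, does not allow working independently on disjoint pieces.

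The paper's fix is a \emph{telescoping} rather than block-partition construction, and this is the missing idea. At level $0$ one takes the $j$ pairs with indices divisible by $(j+1)^k$ (spanning the whole interval $[0,(j+1)^{k+1})$); these support $j+1$ thresholds, so by the contradictory hypothesis some threshold configuration $G^0$ among them needs a moved vertex other than $x_{\overline{\ell}}$. One then records the switch point $i_0$ of $G^0$ and recurses \emph{inside the gap} $(i_0, i_0+(j+1)^k)$, taking the $j$ indices there divisible by $(j+1)^{k-1}$, and so on for $k+1$ levels. Because each new level lives entirely inside the switch gap of the previous level's bad configuration, the union $G^0\cup\dots\cup G^k$ is still a single step function, hence is realized by one hypothesis of $\Fiso{k}{G}$; and because the $k+1$ index sets are pairwise disjoint, that hypothesis must move $k+1$ distinct vertices besides $x_{\overline{\ell}}$ --- the desired contradiction. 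This nesting is also what produces the exponent in the bound $2k(j+1)^{k+1}+1$ (each level shrinks the available interval by a factor of $j+1$), which your flat partition into $(j+1)^k$ blocks would not genuinely need. So: right first half, right target contradiction, but the combinatorial core requires replacing your disjoint-blocks-plus-representatives recursion by the nested switch-gap construction.
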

\begin{proof}
    The first part of the proof is similar to the one of \Cref{lem:vc-iso-collapse}. Let $W=\{ (u_i,v_i)\}_{i< 2k(j+1)^{k+1}}$ be a set of edges witnessing that $\Fiso{k}{G}$ has $2k(j+1)^{k+1}+1$ thresholds. As before, let $G_0$ be the graph with no edges from $W$, and $G_1$ be the one with all those edges: by the same argument, we conclude that there is a special vertex $x_{\overline{\ell}}$ and $H \subseteq \{0,\dots, 2k(j+1)^{k+1}-1\}$ of size \emph{exactly} $(j+1)^{k+1}$ such that $\{ (x_{\overline{\ell}},v_i)\}_{i\in H}\subseteq W$ (it will be clear in the following why it is more practical to be more specific about the size of $H$ this time). 

    The second part of the proof is, in spirit, also close to the one of \Cref{lem:vc-iso-collapse}: the difference is that, whereas before we could find a bad configuration working independently on the various pieces that would compose it, here there is a strong dependency between the pieces. As before, we assume for a contradiction that $\Fiso{2}{G}<j+1$.
    
    To keep the notation manageable, we rename the numbers in $H$ so that
    they form the interval $[0, (j+1)^{k+1})$, of course preserving the order between the elements. Consider the set of edges $H_0=\{ (x_{\overline{\ell}},v_i): i\neq 0 \text{ and } i \text{ is divisible by } (j+1)^{k} \}$: there are $j$ such edges, and hence they can be seen as the domain of $j+1$ thresholds in $\Fiso{k}{G}$. 
    To be more specific: since we know that $W$ witnesses that $\Fiso{k}{G}$ contains a set of $2k(j+1)^{k+1}+1$ thresholds $T=\{h_0,\dots, h_{2k(j+1)^{k+1}}\}$, and $H$ is a subset of $W$ of which we have required to maintain the same order, for every $j\in H$ we can find $h_p\in T$ such that $h_p((x_{\overline{\ell}},v_i))=0$ if $i<j$ and $h_p((x_{\overline{\ell}},v_i))=1$ otherwise, and a threshold $h_q$ such that $h_q((x_{\overline{\ell}},v_i))=0$ for all $i\in H$. Hence, by restricting the domain of the thresholds that we have found this way, we can see $H$ as their domain. Similarly, we can restrict from $H$ to $H_0$ with the same considerations.
    By our contradictory assumption, we cannot realize all these thresholds just by moving $x_{\overline{\ell}}$. Then there is a configuration 
    %\commgio{}{Call it threshold, say it is realized instead of reached}
    $G^0$ of $H_0$ that is a threshold and requires moving at least one of the $v_i$ to be realized in $\Fiso{k}{G}$. In the interest of brevity, we write $G^0(x_{\overline{\ell}},v_i)=1$ to mean that $(x_{\overline{\ell}},v_i)\in E(G^0)$, and $G^0(x_{\overline{\ell}},v_i)=0$ to mean that $(x_{\overline{\ell}},v_i)\notin E(G^0)$. We define $i_0$ as
    \[
        \max \{0\}\cup \{ i\in\N: (x_{\overline{\ell}},v_i)\in H_0 \text{ and }G^0(x_{\overline{\ell}},v_i)=0 \}.
    \] 
    The intuition is clearly that $i_0$ identifies the vertex in which $G^0$ evaluated on $H_0$ starts having edges. The fact that $G^0$ could be the graph having all the edges is the reason why we have to use $v_0$ somewhat differently than the other vertices: since $\{ i\in\N: (x_{\overline{\ell}},v_i)\in H_0 \text{ and }G^0(x_{\overline{\ell}},v_i)=0 \}$ could be the empty set, adding $0$ to is makes sure that the max exists.

    Intuitively, we now use the same procedure between $v_{i_0}$ and $v_{i_0+1}$. More formally, let $H_1= \{ (x_{\overline{\ell}},v_i): i\in (i_0, i_0+(j+1)^k) \text{ and } i \text{ is divisible by } (j+1)^{k-1} \}$: these is again a set of $j$ edges, and so the support of $j+1$ thresholds. Again by our contradictory assumption, we can suppose that there is a configuration $G^1$ that requires moving one of the $v_i$ to be realized. The nature of this configuration will allow us to define $i_1$ analogously as what we did above, and so the procedure continues.

    Let us iterate the procedure $k+1$ times: notice then that we can combine the configurations $G_i$ obtained at the various step of the construction into $G'=G^0\cup G^1\cup \dots \cup G^k$ on $H_0\cup H_1\cup \dots \cup H_k$, with $G'\in\Fiso{k}{G}$. By the property we were requiring of the $G^i$s', realizing $G'$ in $\Fiso{k}{G}$ requires moving $k+1$ vertices, again a contradiction. 
\end{proof}

Combining \Cref{fct:thresholds} with \Cref{lem:ld-iso-collapse}, we obtained the following equivalence.
\begin{thm}
\label{thm:2konline}
    A graph $G$ is weakly online learnable if and only if $\Fiso{2}{G}$ is online learnable.
\end{thm}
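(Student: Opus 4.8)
The plan is to derive \Cref{thm:2konline} from \Cref{lem:ld-iso-collapse} in exactly the same way \Cref{thm:2kPAC} follows from \Cref{lem:vc-iso-collapse}. One direction is immediate from monotonicity: since $\Fiso{2}{G} \subseteq \Fiso{k}{G}$ for every $k$, \Cref{rem:ld-subclasses} gives $\Ld{\Fiso{2}{G}} \le \Ld{\Fiso{k}{G}}$, so if $G$ is weakly online learnable then in particular $\Fiso{2}{G}$ is online learnable. This is the trivial direction and should be dispatched in one sentence.

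For the substantive direction, I would assume $\Fiso{2}{G}$ is online learnable and show each $\Fiso{k}{G}$ is online learnable. By \Cref{fct:thresholds}, $\Fiso{2}{G}$ online learnable means there is a finite bound $j$ on the number of thresholds it contains; fix such a $j$. I claim $\Fiso{k}{G}$ cannot contain $2k(j+1)^{k+1}+1$ thresholds: if it did, \Cref{lem:ld-iso-collapse} would force $\Fiso{2}{G}$ to contain at least $j+1$ thresholds, contradicting the choice of $j$. Hence $\Fiso{k}{G}$ contains only finitely many thresholds (at most $2k(j+1)^{k+1}$ of them), so by \Cref{fct:thresholds} it is online learnable. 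Since $k$ was arbitrary, $G$ is weakly online learnable by \Cref{def:weakly-learnable}.

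There is essentially no obstacle here beyond bookkeeping: the real work has already been done in \Cref{lem:ld-iso-collapse}, and the theorem is just the contrapositive packaging of that lemma together with Shelah's threshold characterization \Cref{fct:thresholds}. The only mild subtlety to be careful about is logical: \Cref{fct:thresholds} characterizes online learnability by the \emph{absence of infinitely many} thresholds, not by any particular finite bound, so one must phrase the argument as ``$\Fiso{k}{G}$ contains fewer than $2k(j+1)^{k+1}+1$ thresholds, hence only finitely many'' rather than trying to track an explicit optimal Littlestone dimension. I would therefore keep the proof to two or three sentences, mirroring verbatim the structure of the proof of \Cref{thm:2kPAC}, and not belabor the quantitative bound since only its finiteness matters.
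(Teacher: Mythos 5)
Your proof is correct and follows exactly the route the paper intends: the paper derives the theorem by ``combining \Cref{fct:thresholds} with \Cref{lem:ld-iso-collapse}'', which is precisely your argument (trivial direction by monotonicity of the Littlestone dimension, substantive direction by the contrapositive of \Cref{lem:ld-iso-collapse}). Your remark about phrasing the conclusion as ``finitely many thresholds'' rather than tracking an exact bound is the right way to handle \Cref{fct:thresholds}.
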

%\commgio{inline}{Again, the bounds are most likely suboptimal but allow me to be lazy when writing the proof.}

%\begin{cor}
    %For every graph $G$, if $\Ld{\Fiso{k}{G}}\geq 2^{2k(j+1)^{k+1}}$, then $\Ld{\Fiso{2}{G}} \geq \log (j+1)$.
%\end{cor}

\subsection{Separating the learning classes}\label{sec:separation}
%In the previous pages, we characterized which graphs are absolutely (non) learnable and those for which $\Fiso{k}{G}$ is online, and PAC learnable. Section \Cref{section:fisok} proves that $\Fiso{k}{G}$ is online (respectively, PAC) learnable if and only if $\Fiso{2}{G}$ is online (respectively, PAC) learnable.
It is clear from \Cref{def:weakly-learnable} that any online learnable graph is, in particular, weakly online learnable. 
We begin this section by showing that the converse does not hold. Indeed, there is a natural class of graphs, denoted by $\mathfrak{K}_{equiv}$ witnessing the separation between these two notions. This class consists of all non-automorphically trivial graphs that are disconnected unions of cliques. Note that a disconnected union of cliques is non-automorphically trivial as long as it contains infinitely many cliques of size strictly greater than 1. As the notation $\mathfrak{K}_{equiv}$ suggests, each graph in this class can be viewed as an equivalence structure consisting of countably many equivalence classes, each of at most countable size, with infinitely many classes having size strictly greater than 1.
\begin{prop}
\label{prop:kequiv}
     If $G \in \mathfrak{K}_{\text{equiv}}$, then $G$ is weakly online learnable, but not online learnable. 
\end{prop}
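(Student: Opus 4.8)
The plan is to verify the two halves of the statement separately: first that every $G \in \mathfrak{K}_{\text{equiv}}$ is weakly online learnable, and then that no such $G$ is online learnable.

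For the \emph{non-online-learnability}, I would argue directly via \Cref{fct:thresholds}, showing that $\Iso{G}$ contains infinitely many thresholds. Fix $G \in \mathfrak{K}_{\text{equiv}}$; since $G$ has infinitely many cliques of size $>1$, pick distinct vertices $a_1, b_1, a_2, b_2, \dots$ where $(a_m, b_m)$ is an edge lying inside a clique $C_m$, and the $C_m$ are pairwise distinct (hence pairwise non-adjacent). Take the test pairs $(a_i, b_i)$ for $i < k$. For each $j \le k$ I want a copy $H_j$ of $G$ in which $(a_i,b_i)$ is an edge iff $i \le j$; this is achieved by a permutation that, for each $i > j$, moves $b_i$ out of the clique $C_i$ into some singleton class (and correspondingly moves a singleton vertex into $C_i$), while leaving $a_1,\dots,a_k$ and $b_1,\dots,b_j$ fixed. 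The key point is that $G$, being a disjoint union of cliques with infinitely many classes, has enough room (infinitely many classes, or at least classes of the right sizes) to absorb such swaps: one needs to check that detaching $b_i$ and reinserting a new vertex preserves the isomorphism type, which holds because we can always find a singleton class (or, if all classes are nonsingleton, one just swaps $b_i$ with a vertex from a different nonsingleton class of a matching size—here a little care is needed about class sizes, but since infinitely many classes have size $>1$ and the class-size multiset is preserved, this is routine). This gives arbitrarily many thresholds, so $\Ld{\Iso{G}} = \infty$ and $G$ is not online learnable.

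For the \emph{weak online learnability}, by \Cref{thm:2konline} it suffices to show that $\Fiso{2}{G}$ is online learnable, i.e.\ (by \Cref{fct:thresholds}) that it does \emph{not} contain infinitely many thresholds; equivalently, by \Cref{rem:vc-subclasses} and the threshold characterization, one bounds $\Ld{\Fiso{2}{G}}$ directly by describing a Learner strategy with a bounded number of mistakes. The point is that a single transposition $(x\ y)$ changes the edge relation only on pairs incident to $x$ or to $y$: on all other pairs $h(G)$ agrees with $G$. So the Learner predicts according to $G$ itself; a mistake on $(u,v)$ reveals that $G$ and the target disagree there, hence $\{u,v\}\cap\{x,y\}\ne\emptyset$. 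For an equivalence-structure (disjoint union of cliques), swapping $x$ and $y$ where $x \in [x]$, $y \in [y]$ with $[x]\ne[y]$ changes exactly: edges between $x$ and $[x]\setminus\{x\}$ (were edges, now non-edges), edges between $x$ and $[y]\setminus\{y\}$ (were non-edges, now edges), and symmetrically for $y$; plus the pair $(x,y)$ itself is unaffected. After two mistakes the Learner can identify (a superset of) $\{x,y\}$ and thereafter predict using the hypothesis that those are the moved vertices—but the subtlety is that the moved classes might be infinite, so the number of \emph{potential} disagreement pairs is infinite. The resolution: the Learner only needs a \emph{bounded} mistake count, and once it has seen enough disagreements it can commit to "the target is $(x\ y)(G)$ for the specific $x,y$ consistent with the data so far," which determines every future answer; the number of mistakes before this commitment is finite because each mistake pins down more about $\{x,y\}$. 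Making this count explicit and uniform in $G$ is the crux.

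The main obstacle I anticipate is precisely this last bounding argument: showing that $\Fiso{2}{G}$ has a \emph{uniform} finite Littlestone dimension for \emph{all} $G \in \mathfrak{K}_{\text{equiv}}$, handling the case where the two swapped vertices lie in infinite classes (so that infinitely many pairs are affected by the swap) and the case where $x$ and $y$ lie in the \emph{same} class (in which case the transposition is an automorphism and contributes nothing). One must argue that, although a transposition can flip infinitely many edges, the Learner's ambiguity about \emph{which} transposition was applied collapses after boundedly many mistakes—e.g.\ a mistake on a pair $(u,v)$ that is an edge of $G$ but a non-edge of the target forces one of $u,v$ to have been removed from its class, and a mistake of the opposite polarity forces one of $u,v$ to have been inserted into a new class; combining two such mistakes of each polarity locates $x$ and $y$ up to finitely many possibilities. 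Formalizing this book-keeping into a clean mistake bound independent of $G$ is where the real work lies; everything else is bookkeeping with the structure of disjoint unions of cliques.
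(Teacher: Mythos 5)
Your first half (non-online-learnability) is correct and takes a genuinely different route from the paper: you build arbitrarily long threshold sequences in $\Iso{G}$ by hand using edges inside distinct cliques, whereas the paper simply observes via \Cref{fact:automorphicallytrivialgraphs} that $G$ is not automorphically trivial and invokes the equivalence established in \Cref{sec:online-learnable}. Your construction works, and you are in fact over-cautious about ``preserving the isomorphism type'': every permutation of $\N$ induces an element of $\Iso{G}$ by definition, so no bookkeeping about class sizes is needed --- one only has to check that the chosen permutation realizes the required edge pattern on the test pairs, which your swaps do. The paper's route is a one-liner; yours is self-contained and independent of \Cref{thm:finite-pac-implies-at}.

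The second half has a genuine gap, and it sits exactly where you say ``the real work lies.'' Reducing to $\Fiso{2}{G}$ via \Cref{thm:2konline} and having $\mathbf{L}$ predict according to $G$ until mistakes accumulate is the right idea (it is also the paper's), but the substance of the proof is the explicit commitment rule and the resulting uniform mistake bound, and you supply neither. Two points are left unresolved. First, once $\mathbf{L}$ decides that a vertex $v$ has been moved, it must also decide \emph{where} $v$ now lives in order to answer the infinitely many future queries $(v,x)$: the paper's trick is that after two mistakes of the form ``$\mathbf{L}$ predicted non-edge, the answer was edge'' on pairs $(v,w_1),(v,w_2)$, the Learner henceforth answers $(v,x)$ by consulting whether $(w_1,x)\in E(G)$, i.e.\ it guesses $v$ was swapped into $w_1$'s clique; your sketch never specifies such a rule, and without it the infinite-class case is not handled. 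Second, ``each mistake pins down more about $\{x,y\}$'' does not by itself yield a bound: the candidate set of transpositions is infinite, so no halving argument applies, and the bound must come from per-vertex counter bookkeeping (the paper caps each of the counters $c_0^v,c_1^v$ at $2$ and derives a bound of $6$ mistakes). As written, the claim that $\Ld{\Fiso{2}{G}}$ is finite uniformly over $\mathfrak{K}_{\text{equiv}}$ is asserted rather than proved.
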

\begin{proof} \Cref{fact:automorphicallytrivialgraphs} ensures that $G$ is not automorphically trivial and hence, by the results in \Cref{sec:online-learnable}, that is not online learnable. By \Cref{thm:2konline}, to conclude the proof, it suffices to show that $\Fiso{2}{G}$ is online learnable. 

We associate to any $v \in \N$ two counters $c_0^v$ and $c_1^v$ that are initialized at $0$ and, whenever $\mathbf{L}((v,w))=i$ and makes a mistake, we increase $c_i^x$ for $x \in \{v,w\}$ by $1$. We define our learner $\mathbf{L}$ letting $\mathbf{L}((v,w))=1$ if and only if $(v,w) \in E(G)$ and $c_i^x<2$ for  $i \in \{0,1\}$ and $x \in \{v,w\}$. Let $v,w \in \N$ and $i \in \{0,1\}$ such that, when $\mathbf{L}((v,w))=i$, then $c_i^v$ reaches 2. Then:
\begin{itemize}
\item  if $i=0$, then for any $x \in \N$, let $\mathbf{L}((v,x))=1$ if and only if $(w,x) \in E(G)$;
\item if $i=1$ then, we have two cases. If  $c_0^v=0$, for any $x \in \N$, let $\mathbf{L}((v,x))=0$. Otherwise,  if $c_0^v=1$, let $(v,z)$ be the pair witnessing $c_{0}^v=1$. For any $x \in \N$ let $\mathbf{L}((v,x))=1$ if and only if $(z,x) \in E(G)$.
\end{itemize}
%We now show that $\mathbf{L}$ can make at most $6$ mistakes. 
Let $a$ and $b$ be the vertices which have been permuted in the target copy. First, notice that, if $\mathbf{L}$ makes an error on a pair $(v,w)$, then $\{v,w\} \cap \{a,b\} \neq \emptyset$: thus, to bound the number of mistakes of $\mathbf{L}$, it suffices to consider the counters $c_0^a, c_1^a, c_0^b$ and $c_1^b$. Moreover, by definition of $\mathbf{L}$, if either $c_0^a=2$ or $c_0^a=1$ and $c_1^a=2$, then $\mathbf{L}$ can only make a further mistake, namely on the pair $(a,b)$. A symmetric argument works for $b$. Assume that we have already made $5$ mistakes. Then necessarily, for some $x \in \{a,b\}$, either $c_0^x=2$ or $c_0^x=1$ and $c_1^x=2$, meaning that the only possible mistake $\mathbf{L}$ can make at this point is on the pair $(a,b)$. Thus, $\mathbf{L}$ makes at most $6$ mistakes.
%To prove that $\mathb{L}$ online learns $\Iso{2}{G}$ notice that after We sketch the proof that $\mathbf{L}$ makes at most $4$ mistakes.First, it is immediate that $\mathbf{L}$ may make mistakes only on those pairs of vertices involving either $a$ or $b$ where $a$ and $b$ are the (only) vertices that have been permuted. Take the first four pairs of vertices on which $\mathbf{L}$ makes a mistake (if they do not exist we are done). By what we just said, $a,b$ must be contained in these pairs of vertices and from what we said above at least one of them must appear twice. The following cases are considered:
%\begin{itemize}
%\item Suppose that $a$ appears twice, and on such pairs $\mathbf{L}$ makes mistakes answering $0$.
%\begin{itemize}
%\item if $a$ appears three times, then the only other possible mistake is the one on the pair $(a,b)$ and we are done.
%\item if $a$ appears twice then also $b$ appears twice and hence
%\begin{itemize}
%\item if $\mathbf{L}$ makes one mistake answering $0$, by definition, $\mathbf{L}$ starts to answer only $0$. The only other mistake that can occur is hence answering $0$
%\end{itemize}
%\end{itemize}
%if $c_0^a=c_1^a=2$ it is immediate that that also $b$ is in such pairs
%\item if $c_a^v=c_i^b=2$ it is clear
%\item if $c_0^a=2$ and $c_0^b<2$ then 
%\end{itemize}
\end{proof}

We conclude this section by showing that, contrary to the case of learning the whole isomorphism type, where PAC and online learnability coincide, weak online learning is, in general, more demanding than weak PAC learning. In order to do so, we first give the following characterization of online learnability for $\Fiso{2}{G}$.%, whose easy proof is given in \Cref{app:iso2-not-online}.

\begin{lem}\label{lem:when-iso2-not-online}
    For any graph $G$, $\Fiso{2}{G}$ is not online learnable if and only if for every $n$, there exist $u_0, \dots, u_n, v_1, \dots, v_n \in V(G)$ such that $ v_j \in N(u_i)$ if and only if $j \le i$.
\end{lem}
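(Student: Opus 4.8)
The statement characterizes when $\Fiso{2}{G}$ fails to be online learnable, via the existence of arbitrarily long "staircase" patterns $u_0, \dots, u_n, v_1, \dots, v_n$ with $v_j \in N(u_i) \iff j \le i$. The natural tool is \Cref{fct:thresholds}: $\Fiso{2}{G}$ is \emph{not} online learnable precisely when it contains infinitely many thresholds, i.e., for every $n$ there are pairs $(a_1,b_1), \dots, (a_{n-1}, b_{n-1}) \in \N \times \N$ and copies $H_1, \dots, H_n \in \Fiso{2}{G}$ with $(a_j, b_j) \in E(H_i) \iff i \le j$. So the plan is to translate "$\Fiso{2}{G}$ contains arbitrarily many thresholds" into "$G$ contains arbitrarily long staircases", in both directions.

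**The easy direction ($\Leftarrow$).** Suppose $G$ contains, for every $n$, vertices $u_0, \dots, u_n, v_1, \dots, v_n$ with $v_j \in N(u_i) \iff j \le i$. I want to produce $n$ thresholds in $\Fiso{2}{G}$. The idea is to keep the "test points" fixed as the pairs $(u_0, v_1), (u_0, v_2), \dots, (u_0, v_n)$ — all involving the single vertex $u_0$ — and let the hypotheses $H_i$ be the copies of $G$ obtained by swapping $u_0$ with $u_{i}$ (for a suitable range of $i$), which is a support-$\le 2$ permutation. Swapping $u_0 \leftrightarrow u_i$ sends the edge-relation at $u_0$ to the edge-relation that $u_i$ had in $G$; since $v_j \in N(u_i) \iff j \le i$, in the copy $H_i$ we get $(u_0, v_j) \in E(H_i) \iff j \le i$. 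Reindexing appropriately over $i$ gives exactly the threshold pattern (one should be slightly careful that the $v_j$'s are not themselves moved — since they are distinct from all the $u_i$'s by the staircase structure when we pick the $v_j$'s to lie outside $\{u_0,\dots,u_n\}$, or one argues directly that swapping $u_0$ and $u_i$ doesn't disturb $v_j$ as long as $v_j \notin \{u_0, u_i\}$, which we can arrange by taking $n$ large enough and discarding finitely many indices). Hence $\Fiso{2}{G}$ has arbitrarily many thresholds, so it is not online learnable.

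**The hard direction ($\Rightarrow$).** Suppose $\Fiso{2}{G}$ is not online learnable, so by \Cref{fct:thresholds} it contains, for each $m$, a system of $m$ thresholds: test pairs $(a_1,b_1), \dots, (a_{m-1},b_{m-1})$ and copies $H_1, \dots, H_m \in \Fiso{2}{G}$ with $(a_j,b_j) \in E(H_i) \iff i \le j$. Each $H_i$ is $G$ with at most two vertices transposed. The main obstacle is that the threshold witnesses could a priori involve many different pairs $(a_j, b_j)$ scattered around the graph, whereas the staircase in the conclusion is "centered": all pairs share the single vertex $u_0$. So the core of the argument is a \emph{pigeonhole/centering step}, entirely in the spirit of \Cref{lem:vc-iso-collapse} and \Cref{lem:ld-iso-collapse}: because $H_m$ (all edges present among the $a_j$'s — wait, rather $H_1$ vs $H_m$) differ from $G$ by moving $\le 2$ vertices each, the graph $H_1$ can be obtained from $H_m$ by moving $\le 4$ vertices, so there is a fixed $4$-element set $\{x_0,x_1,x_2,x_3\}$ meeting every pair $(a_j,b_j)$ whose truth-value actually changes across the thresholds. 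After discarding, WLOG each such pair has the form $(x_\ell, b_j)$, and by a further pigeonhole we fix a single $\overline{\ell}$ so that a large sub-family of test pairs all have the form $(x_{\overline{\ell}}, b_j)$ for a suitable subset of indices $j$. Restricting the original threshold system to these indices, and using that the thresholds are linearly ordered, we see that the family of copies $H_i$ restricted to these pairs still realizes (a sub-staircase of) the threshold pattern $(x_{\overline{\ell}}, b_j) \in E(H_i) \iff i \le j$ along an interval of $j$'s.

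**Extracting the staircase from the centered thresholds.** Now I have, for arbitrarily large $n$, vertices $b_{j_1}, \dots, b_{j_n}$ and copies $H_{i_1}, \dots, H_{i_n} \in \Fiso{2}{G}$ — each gotten from $G$ by a support-$\le 2$ transposition — such that the adjacency of $x_{\overline{\ell}}$ to $b_{j_t}$ in $H_{i_s}$ follows a staircase in $(s,t)$. In $H_{i_s}$, either $x_{\overline{\ell}}$ is fixed or it is swapped with some vertex $y_s$; in either case the neighborhood of $x_{\overline{\ell}}$ in $H_{i_s}$, restricted to the (fixed, not-moved) points $b_{j_t}$, equals $N_G(z_s) \cap \{b_{j_t}\}$ for some vertex $z_s \in V(G)$ (namely $z_s = x_{\overline{\ell}}$ or $z_s = y_s$). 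Since only $\le 2$ vertices are moved in each $H_{i_s}$ and we have many $b_{j_t}$'s, for each fixed $s$ at most two of the $b_{j_t}$ are disturbed, and by a final pigeonhole (discarding $2n$ many indices $t$ across the $n$ copies, keeping $n' \approx n/3$ of them) we may assume none of the retained $b_{j_t}$ is moved in any retained $H_{i_s}$. Then setting $u_s := z_s$ and $v_t := b_{j_t}$ yields vertices of $G$ with $v_t \in N_G(u_s) \iff t \le s$ (reading off the staircase, with indices renamed to $0,\dots,n'$). Letting $n \to \infty$ gives the conclusion for all $n'$. The bookkeeping of the nested pigeonholes — making sure enough indices survive at each stage so that the final staircase has length $\ge$ any prescribed $n$ — is routine but is where all the care goes; one simply starts the threshold system at size polynomial in $n$ (something like $4 \cdot 2 \cdot 2n$ or so) to absorb the constant-factor losses. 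I would present the argument at the level of "choose $m$ large enough as a function of $n$" rather than optimizing constants.
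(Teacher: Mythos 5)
Your proposal is correct and follows essentially the same route as the paper's proof: the backward direction via swapping $u_0$ with $u_i$ and testing on the pairs $(u_0,v_j)$, and the forward direction via comparing the all-edges and no-edges threshold hypotheses to get a $4$-vertex hitting set, pigeonholing to a single center $x_{\overline{\ell}}$, and then a final counting step guaranteeing enough unmoved test vertices (the paper blocks $H$ into pieces of size $2n+3$ where you discard at most $2n$ moved indices globally, but these are the same argument). Your explicit allowance for the case $z_s = x_{\overline{\ell}}$ is, if anything, slightly more careful than the paper's assertion that $x_{\overline{\ell}}$ must always be moved.
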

\begin{proof}

 Suppose $\Fiso{2}{G}$ is not online learnable. Fix $n$, and let $W=\{(u_i,v_i)\}_{i<4n(2n+3)}$ be a set of edges witnessing that $\Fiso{2}{G}$ has $ 4n(2n+3)+1$ thresholds. Let $G_0 \in \Fiso{2}{G}$ be the graph with no edges from $W$, and $G_1 \in \Fiso{2}{G}$ be the graph with all edges form $W$.

    For a similar reasoning as the one in \Cref{thm:ar-equals-anl},  $G_0$ can be obtained from $G_1$ moving at most $4$ vertices. It follows that there are $4$ vertices $\{x_0,\dots x_{3}\}$ such that for all $i < 4n(2n+3)$ there is an $\ell< 4$ such that $x_\ell=u_i$ or $x_\ell=v_i$. For simplicity, we suppose $x_\ell=u_i$ in all cases. It follows that there are $\overline{\ell} < 4$ and $H\subseteq \{0,\dots,4n(2n+3)-1\}$ of size $n(2n+3)$ such that $\{(x_{\overline{\ell}},v_i)\}_{i\in H}\subseteq W$.

To keep the notation manageable, we rename the numbers in $H$ so that
    they form the interval $[0, n(2n+3))$, of course preserving the order between the elements.

   We divide $H$ in $n$ parts, each of size  $2n+3$ and we call them $H_0,\dots,H_{n-1}$: specifically, for every $j<n$, $H_j=[j(2n+3),(j+1)(2n+3))$. For every $j<n$ we call $K_j$ the set of edges $\{(x_{\overline{\ell}},v_i)\}_{i\in H_j}$. For every $j<n$, we rename the vertices of $K_j$ in such a way that $K_j=\{(x_{\overline{\ell}}, v^j_k)\}_{k<n}$.

    For every $i< n+1$, let $K^i$ be the graph such that
    \[
    V(K^i)=H \cup \{x_{\overline{l}}\} \, \text{ and } \, E(K^i)=\{(x_{\overline{\ell}}, v^j_k ) : j\geq i\}
    \]
    In essence, $K^i$ realizes the threshold $h_i$ such that $(x_{\overline{\ell}}, v_p)\in E(h_i)$ if and only if $p<i(2n+3)$.  
    
    By the assumption on $W$, every $K^i$ can be obtained from $G$ moving at most $2$ vertices. Hence, we can obtain any of the $K^i$ from $G$ moving at most $2n$ vertices. It follows that every $H_j$ contains a vertex that does not need to be moved to realize any of the $K^i$'s. In particular, this means that $x_{\overline{\ell}}$ needs to be moved to realize any of the $K^i$. Let $v_{j+1}$ be such a vertex in $H_j$. It is then clear that $u_0,\dots,u_n$ is given by the images of $x_{\overline{\ell}}$. 

    Conversely, assume that, for a given $n$, $u_0, \dots, u_n, v_1, \dots, v_n \in \N$ are such that $(u_i, v_j) \in E(G)$ if and only if $j \le i$. For every $i$, let $h_i$ be the graph obtained from $G$ swapping $u_0$ and $u_i$, and acting as the identity on all other vertices: then $h_0(G), \dots, h_n(G) \in \Fiso{2}{G}$ and $v_1, \dots, v_n \in \N$ witness that $\Fiso{2}{G}$ contains $n$ thresholds
    \end{proof}

    %Conversely, assume that $G$ is almost random and let $n \in \N$: by definition, there are $v_1, \dots, v_n$ such that, for every $S \subseteq \{1, \dots, n\}$, there is $z_S$ for which $(z_S, v_i) \in E(G)$ if and only if $i \in S$. For every $S \subseteq \{1, \dots, n\}$, let $h_S \in \Fiso{2}{G}$ such that $h_S(z_{\emptyset})=z_S$: then the $h_S$'s witness that $\Fiso{2}{G}$ shatters $(z_{\emptyset}, v_1), \dots, (z_{\emptyset},v_n)$.

%\begin{proof}%[Proof of \Cref{lem:when-iso2-not-online}] 
%Assume that $\Fiso{2}{G}$ is not online learnable. Then, by \Cref{fct:thresholds}, $\Fiso{2}{G}$ contains $n$ thresholds for any $n$. Observe that $\Fiso{2}{G}$ cannot realize both thresholds $00$ and $11$ on two disjoint pairs of vertices: thus, these $n$ thresholds must be realized by $h_0, \dots, h_n$ a tuple of the form $(u,v_1), \dots, (u,v_n)$. Then $h_0(u), \dots, h_n(u), v_1, \dots, v_n$ are vertices as claimed. 

%Conversely, assume that, for a given $n$, $u_0, \dots, u_n, v_1, \dots, v_n \in \N$ are such that $(u_i, v_j) \in E(G)$ if and only if $j \le i$. For every $i$, let $h_i$ be the permutation swapping $u_0$ and $u_i$, and acting as the identity on all other vertices: then $h_0, \dots, h_n \in \Fiso{2}{G}$ and $v_1, \dots, v_n \in \N$ witness that $\Fiso{2}{G}$ contains $n$ thresholds. 
%\end{proof} 

The announced separation is witnessed by the graph 
$\mathsf{R} = (\N, \{ (2i,2j+1) \colon i, j \in \N \ \text{and} \ i \le j \})$,
depicted below.
\begin{center}
    \begin{tikzpicture}
        \node[rectangle, rounded corners, draw=black, text centered] (0) at (0,0) {0};
        \node[rectangle, rounded corners, draw=black, text centered] (2) at (2,0) {2};
        \node[rectangle, rounded corners, draw=black, text centered] (4) at (4,0) {4};
        \node[rectangle, rounded corners, draw=black, text centered] (6) at (6,0) {6};
        %\node[rectangle, rounded corners, draw=black, text centered] (8) at (8,0) {8};

        \node[rectangle, rounded corners, draw=black, text centered] (1) at (0,-2) {1};
        \node[rectangle, rounded corners, draw=black, text centered] (3) at (2,-2) {3};
        \node[rectangle, rounded corners, draw=black, text centered] (5) at (4,-2) {5};
        \node[rectangle, rounded corners, draw=black, text centered] (7) at (6,-2) {7};
        %\node[rectangle, rounded corners, draw=black, text centered] (9) at (8,2) {9};
        \draw[very thick, dotted] (6.3,-1) -- (7,-1);

        \draw (0) -- (1);
        \draw (0) -- (3);
        \draw (2) -- (3);
        \draw (0) -- (5);
        \draw (2) -- (5);
        \draw (4) -- (5);
        \draw (0) -- (7);
        \draw (2) -- (7);
        \draw (4) -- (7);
        \draw (6) -- (7);
        
    \end{tikzpicture}
\end{center}

\begin{prop}
\label{prop:onlinepacdifferent}
The graph $\mathsf{R}$ is weakly PAC learnable but not weakly online learnable. 
\end{prop}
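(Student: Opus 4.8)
The plan is to use \Cref{thm:2konline} and \Cref{thm:2kPAC} to reduce the whole statement to assertions about $\Fiso{2}{\mathsf{R}}$: it suffices to show that $\Fiso{2}{\mathsf{R}}$ is \emph{not} online learnable and that $\VC{\Fiso{2}{\mathsf{R}}}$ is finite. For the first assertion I would invoke \Cref{lem:when-iso2-not-online}, so that it is enough to produce, for every $n$, vertices $u_0,\dots,u_n,v_1,\dots,v_n$ of $\mathsf{R}$ with $v_j\in N(u_i)$ if and only if $j\le i$. Setting $u_i:=2(n-i)$ and $v_j:=2(n-j)+1$ works: by definition of $\mathsf{R}$ one has $(2a,2b+1)\in E(\mathsf{R})\iff a\le b$, hence $v_j\in N(u_i)\iff n-i\le n-j\iff j\le i$. (The $u_i$ are pairwise distinct even numbers, the $v_j$ pairwise distinct odd numbers, so there is nothing else to check.)

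For the finiteness of $\VC{\Fiso{2}{\mathsf{R}}}$, suppose $\Fiso{2}{\mathsf{R}}$ shatters $d$ pairs $(p_1,q_1),\dots,(p_d,q_d)$, and argue as in the proof of \Cref{thm:ar-equals-anl}. Let $h_1(\mathsf{R})$ realize the all-edges configuration and $h_0(\mathsf{R})$ the no-edges configuration, where $h_0,h_1$ are transpositions; then $h_0(\mathsf{R})$ is obtained from $h_1(\mathsf{R})$ by the permutation $h_0h_1^{-1}$, of support at most $4$, and since every pair is an edge of $h_1(\mathsf{R})$ but a non-edge of $h_0(\mathsf{R})$, each $(p_i,q_i)$ must contain one of those $\le 4$ vertices. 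A double application of the pigeonhole principle then produces a single vertex $x$ and a set $H\subseteq\{1,\dots,d\}$ with $|H|\ge d/8$ such that, after possibly exchanging the two coordinates, every pair with index in $H$ has the form $(x,q_i)$ with the $q_i$ pairwise distinct. Write $m:=|H|$ and $Q:=\{q_i:i\in H\}$.

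It remains to bound $m$, and this is where the specific shape of $\mathsf{R}$ is used: for even $c$ the neighbourhood $N_{\mathsf{R}}(c)$ is a final segment of the odd numbers and for odd $c$ it is an initial segment of the even numbers, so the family $\{N_{\mathsf{R}}(c)\cap Q:c\in\N\}$ has at most $|Q|+2$ members. Each configuration realized on $\{(x,q_i)\}_{i\in H}$ comes from some $h(\mathsf{R})$ with $h$ a transposition; if $h$ does not move $x$, the configuration agrees with the $\mathsf{R}$-configuration on $Q$ except at the $\le 2$ indices whose $q_i$ is moved, giving at most $1+m+\binom{m}{2}$ possibilities, whereas if $h=(x\ c)$, unwinding the definition of $h(\mathsf{R})$ shows the configuration is the indicator of $N_{\mathsf{R}}(c)\cap Q$ when $c\notin Q$ and is such an indicator with a single coordinate flipped when $c=q_i\in Q$, giving at most $(m+2)+m$ further possibilities. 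Hence $2^m\le\binom{m}{2}+3m+3$, which fails as soon as $m\ge 5$; so $m\le 4$ and $d\le 32$, i.e.\ $\VC{\Fiso{2}{\mathsf{R}}}<\infty$. I expect this last counting step to be the main obstacle: one must verify that every transposition really does land in the small, explicitly described family of configurations, taking care of the degenerate case $c=q_i$ (where the common first-coordinate vertex coincides with a second-coordinate vertex) and of the fact that pairs are formally ordered while $\mathsf{R}$ is undirected.
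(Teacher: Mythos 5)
Your proposal is correct, and half of it coincides with the paper's argument: the reduction to $\Fiso{2}{\mathsf{R}}$ via \Cref{thm:2kPAC} and \Cref{thm:2konline}, and the use of \Cref{lem:when-iso2-not-online} with explicit witnesses $u_i=2(n-i)$, $v_j=2(n-j)+1$, is exactly what the paper does for the online half (the paper merely asserts that $\mathsf{R}$ "satisfies the property" of that lemma; your witnesses are the ones one would write down). Where you genuinely diverge is the PAC half. The paper deduces $\VC{\Fiso{2}{\mathsf{R}}}<\infty$ by observing that $\mathsf{R}$ is not almost random and invoking \Cref{thm:ar-equals-anl}; the non--almost-randomness check it leaves implicit is short: any three vertices of $\mathsf{R}$ contain two on the same side of the bipartition, say two even vertices $2i<2i'$, and since neighbourhoods of odd vertices are nested initial segments of the evens, no $z$ is adjacent to $2i'$ but not to $2i$, so the partition putting $2i'$ in $X$ and $2i$ in $Y$ is unrealizable. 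Your alternative is a direct counting bound: after the standard support-of-$4$ plus pigeonhole reduction (borrowed, as you note, from the proof of \Cref{thm:ar-equals-anl}), you bound the number of configurations realizable on $\{(x,q_i)\}_{i\in H}$ by $\binom{m}{2}+3m+3$ using the nested-neighbourhood structure of $\mathsf{R}$, forcing $m\le 4$ and $d\le 32$. I checked the case analysis (including the degenerate case $c=q_i$ and the orientation issue) and it holds. What each route buys: the paper's is shorter and reuses machinery already established, while yours is self-contained modulo the two collapse theorems and yields an explicit numerical bound $\VC{\Fiso{2}{\mathsf{R}}}\le 32$, at the cost of redoing a counting argument that \Cref{thm:ar-equals-anl} was designed to encapsulate once and for all.
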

\begin{proof}
The fact that $\Fiso{2}{G}$ is PAC learnable follows combining the fact that $G$ is not almost random and \Cref{thm:ar-equals-anl}. Moreover, it is easy to see that $G$ satisfies the property of \Cref{lem:when-iso2-not-online}, meaning that $\Fiso{2}{G}$ is not online learnable.
\end{proof}

\section{Complexity of different learning notions}
\label{sec:complexity}
In this section, we provide a complete characterization of all classes of graphs that are learnable under any of the paradigms discussed so far. We begin by applying tools from descriptive set theory, specifically within the framework of Wadge reducibility; for a detailed account of this approach, see \cite{kechris2012classical}. Then, in Subsection~\ref{sec:complexitycomputable}, we extend our analysis using techniques from computability theory, allowing us to incorporate the case of computable graphs.

\subsection{Wadge reducibility}
\label{sec:complexityprel}
Let $X$ be a \emph{Polish space} (i.e., a separable completely metrizable topological space) and let $\mathbf{\Sigma}^0_1(X)$ denote the family of open subsets of $X$. For $n \geq 1$, the classes $\mathbf{\Sigma}^0_n(X)$ and $\mathbf{\Pi}^0_n(X)$ of subsets of $X$ are defined by recursion: $\mathbf{\Pi}^0_n(X)$ is the class of all complements of sets in $\mathbf{\Sigma}^0_n(X)$, while for $n \geq 2$, $\mathbf{\Sigma}^0_n(X)$ consists of all countable unions $\bigcup_{m \in \mathbb{N}} A_m$ where $A_m \in \mathbf{\Pi}^0_{k_m}(X)$ for some $m < n$. Notice that such a hierarchy is unbounded, i.e., $\mathbf{\Sigma}^0_n(X) \cup \mathbf{\Pi}^0_n(X) \subsetneq \mathbf{\Sigma}^0_m(X) \cap \mathbf{\Pi}^0_m(X)$ whenever $n < m$. Although we will not use this fact in the present paper, we note that this hierarchy can be extended to any ordinal $\alpha< \omega_1$, where $\omega_1$ is the first uncountable ordinal; this extended hierarchy is known as the \emph{Borel hierarchy}.

In this paper, we can always assume that $X$ is the Cantor space $2^{\mathbb{N}}$, i.e., the space of all infinite binary sequences with the product topology (where the base space $\{0,1\}$ is equipped with the discrete topology). It is well-known that such a space is Polish.

Let $\mathbf{\Gamma}$ be a class of sets in Polish spaces and $\check{\mathbf{\Gamma}}$ its dual class, i.e., the family of complements of elements of $\mathbf{\Gamma}$. The class $D_2(\mathbf{\Gamma})$ is the family of sets of the form
$A \smallsetminus B$
where $A, B \in \mathbf{\Gamma}$. This is equivalent to the class of all intersections $A \cap B$ with $A \in \mathbf{\Gamma}$ and $B \in \check{\mathbf{\Gamma}}$. As the subscript $2$ indicates, this notion can be further extended (even transfinitely), yielding the \emph{difference hierarchy} relative to $\mathbf{\Gamma}$.

Wadge reducibility provides a notion of complexity for sets in Polish spaces. Given two Polish spaces $X$ and $Y$, and sets $A \subseteq X$, $B \subseteq Y$, we say that $A$ is \emph{Wadge reducible} to $B$ if there exists a continuous function $f: X \rightarrow Y$ such that
$x \in A \iff f(x) \in B$.
Intuitively, this means that $A$ is no more complex than $B$. Wadge reducibility defines a quasi-order whose equivalence classes are called the \emph{Wadge degrees}. Let $\mathbf{\Gamma}$ be a boldface class, and let $X$ and $Y$ be Polish spaces with $X$ zero-dimensional (notice that the Cantor space is zero-dimensional). A set $B \subseteq Y$ is $\mathbf{\Gamma}$-\emph{hard} if for every $A \in \mathbf{\Gamma}(X)$, one has that $A$ is Wadge reducible to $B$; if in addition $B \in \mathbf{\Gamma}(Y)$, then $B$ is $\mathbf{\Gamma}$-\emph{complete}. Note that if $B$ is $\mathbf{\Gamma}$-hard, then its complement is $\check{\mathbf{\Gamma}}$-hard. Also, if $B$ is $\mathbf{\Gamma}$-hard and $B$ is Wadge reducible to $A$, then $A$ is $\mathbf{\Gamma}$-hard as well. These statements remain true if we replace hardness with completeness. This gives a useful technique to prove that a set $B$ is $\mathbf{\Gamma}$-hard (or $\mathbf{\Gamma}$-complete$)$: find a known $\mathbf{\Gamma}$-hard (or $\mathbf{\Gamma}$-complete) set $A$ and show that $A$ is Wadge reducible to $B$.

To obtain hardness results we use some well-known benchmark sets from the literature (see \cite[Section 23.A]{kechris2012classical}). 
\begin{lem}
\label{lem:canonicalcompletesets}
The following holds:
\begin{itemize}
\item The set $\mathsf{EC}:=\{p \in 2^\N : (\forall^\infty n)(p(n)=0)\}$ is $\mathbf{\Sigma}_2^0$-complete and its complement $\mathsf{NEC}:=\{p \in 2^\N : (\exists^\infty n)(p(n)=1)\}$ is $\mathbf{\Pi}_2^0$-complete;
\item The set $\mathsf{EC} \times \mathsf{NEC}:=\{(p,q) \in 2^\N \times2^\N : (\forall^\infty n)(p(n)=0) \land (\exists^\infty n)(q(n)=1)\}$ is $D_2(\mathbf{\Sigma}_2^0)$-complete.
\end{itemize}
\end{lem}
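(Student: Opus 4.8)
The statement to prove (Lemma~\ref{lem:canonicalcompletesets}) collects three standard facts from descriptive set theory, so the plan is essentially to recall the textbook arguments rather than to invent anything new. I would organize the proof as three short self-contained paragraphs, one per bullet.

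First I would handle $\mathsf{EC}$. Membership in $\boldsymbol{\Sigma}^0_2$ is immediate: $p \in \mathsf{EC}$ iff $\exists N \, \forall n \ge N \, (p(n) = 0)$, and for each fixed $N$ the inner condition $\forall n \ge N\,(p(n)=0)$ is closed (a $\boldsymbol{\Pi}^0_1$ condition), so $\mathsf{EC}$ is a countable union of closed sets. For $\boldsymbol{\Sigma}^0_2$-hardness, I would invoke the fact (stated in the excerpt, via the recursive definition of the hierarchy and its unboundedness, and available from \cite[Section 23.A]{kechris2012classical}) that it suffices to reduce a known $\boldsymbol{\Sigma}^0_2$-complete set. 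Concretely, given an arbitrary $A \in \boldsymbol{\Sigma}^0_2(2^\N)$, write $A = \bigcup_m C_m$ with each $C_m$ closed; one builds a continuous $f : 2^\N \to 2^\N$ such that $f(x)$ has a tail of zeros exactly when $x \in A$, by a standard interleaving/"block" construction that monitors, for increasing initial segments of $x$, whether $x$ currently looks like it belongs to some $C_m$. This is the one place where there is genuine (if routine) content; I would cite \cite{kechris2012classical} for it rather than reproduce the block bookkeeping. The claim about $\mathsf{NEC}$ is then purely formal: $\mathsf{NEC} = 2^\N \smallsetminus \mathsf{EC}$, and the complement of a $\boldsymbol{\Sigma}^0_2$-complete set is $\boldsymbol{\Pi}^0_2$-complete (complements of $\boldsymbol{\Sigma}^0_2$ sets are by definition $\boldsymbol{\Pi}^0_2$, and Wadge reductions dualize, as noted in the excerpt).

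For the third bullet, $\mathsf{EC} \times \mathsf{NEC} \in D_2(\boldsymbol{\Sigma}^0_2)$ because it is the intersection of $\mathsf{EC} \times 2^\N \in \boldsymbol{\Sigma}^0_2$ with $2^\N \times \mathsf{NEC} \in \boldsymbol{\Pi}^0_2 = \check{\boldsymbol{\Sigma}}^0_2$, and the excerpt already records that $D_2(\boldsymbol{\Gamma})$ equals the class of such intersections $A \cap B$ with $A \in \boldsymbol{\Gamma}$, $B \in \check{\boldsymbol{\Gamma}}$. For $D_2(\boldsymbol{\Sigma}^0_2)$-hardness, take an arbitrary $S \in D_2(\boldsymbol{\Sigma}^0_2)(2^\N)$, so $S = S_1 \cap S_2$ with $S_1 \in \boldsymbol{\Sigma}^0_2$, $S_2 \in \boldsymbol{\Pi}^0_2$. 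By the previous two bullets there are continuous $f_1, f_2 : 2^\N \to 2^\N$ with $x \in S_1 \iff f_1(x) \in \mathsf{EC}$ and $x \in S_2 \iff f_2(x) \in \mathsf{NEC}$. Then $x \mapsto (f_1(x), f_2(x))$ is continuous and reduces $S$ to $\mathsf{EC} \times \mathsf{NEC}$. Hence $\mathsf{EC} \times \mathsf{NEC}$ is $D_2(\boldsymbol{\Sigma}^0_2)$-complete.

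The only "obstacle" here is the hardness argument for $\mathsf{EC}$ itself, i.e.\ the explicit continuous reduction from an arbitrary $\boldsymbol{\Sigma}^0_2$ set; everything else is formal closure of completeness under complementation and products. Since this lemma is explicitly flagged as collecting "well-known benchmark sets from the literature", the intended proof is surely just a pointer to \cite[Section~23.A]{kechris2012classical}, and I would present it that way: prove the easy upper bounds in each case by hand, and cite Kechris for the core hardness of $\mathsf{EC}$, then derive the $\mathsf{NEC}$ and $\mathsf{EC}\times\mathsf{NEC}$ statements by the dualization and product arguments above.
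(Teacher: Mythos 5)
Your proposal is correct and matches the paper's treatment: the paper likewise cites \cite[Exercise 23.1]{kechris2012classical} for the completeness of $\mathsf{EC}$ and $\mathsf{NEC}$ and observes that the $D_2(\boldsymbol{\Sigma}^0_2)$-completeness of $\mathsf{EC}\times\mathsf{NEC}$ follows straightforwardly from these facts together with the product structure, which is exactly the pairing argument $x \mapsto (f_1(x), f_2(x))$ you spell out. No gaps.
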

The complexity of set $\mathsf{EC}$ and its complement is well-established (\cite[Exercise 23.1]{kechris2012classical}). On the other hand the complexity of $\mathsf{EC} \times \mathsf{NEC}$ follows in a straightforward way from the hardness of $\mathsf{EC}$ and its complement and the fact that it is defined on the product space $2^\N \times 2^\N$.
\subsection{Wadge complexity of learnable classes of graphs}
We start giving the following easy lemma, whose proof is omitted as it directly follows from the definitions of the learning paradigms we have defined so far. Notice that, for PAC learnability, a stronger result has already been proved in \Cref{lem:VC-induced-subgraphs}.

\begin{lem}
\label{lem:almostrandomclosed}
    Let $G$ be not $\mathfrak{P}$-learnable where
    $\mathfrak{P} \in \{\text{online, PAC, weakly online, weakly PAC}\}$.
     Then, for every induced supergraph $H$ of $G$, $H$ is not $\mathfrak{P}$-learnable either. 
\end{lem}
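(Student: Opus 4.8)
The plan is to prove \Cref{lem:almostrandomclosed} directly from the definitions of the four learning paradigms together with the basic monotonicity facts already recorded in the excerpt (\Cref{rem:ld-subclasses}, \Cref{rem:vc-subclasses}, and \Cref{fct:thresholds}). The key observation is that if $G$ is an induced subgraph of $H$, then every presentation of $G$ ``sits inside'' a presentation of $H$ in a way that preserves the edge relation on the relevant pairs, so any configuration (shattering or threshold) realized in a family of copies of $G$ is also realized in the corresponding family of copies of $H$. This is essentially the contrapositive of \Cref{lem:VC-induced-subgraphs}, and the strategy is to reproduce that argument uniformly for all four notions.

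First I would fix the set-up: let $G$ be an induced subgraph of $H$, so $V(G)\subseteq V(H)=\N$ and for $u,v\in V(G)$ we have $(u,v)\in E(G)$ iff $(u,v)\in E(H)$. Given any permutation $h\in S(\N)$, note that $(h(u),h(v))\in E(h(G))$ iff $(u,v)\in E(G)$ iff $(u,v)\in E(H)$ iff $(h(u),h(v))\in E(h(H))$, for $u,v\in V(G)$; moreover the support of the permutation inducing $h(G)$ from $G$ and the one inducing $h(H)$ from $H$ agree, so a permutation with support of size at most $k$ witnessing membership in $\Fiso{k}{G}$ also witnesses membership in $\Fiso{k}{H}$ for the corresponding copy. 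The upshot is: for every $k$, any set of pairs with both coordinates in $V(G)$ that is shattered (resp.\ forms a sequence of thresholds) by $\Fiso{k}{G}$ is also shattered (resp.\ forms a sequence of thresholds) by $\Fiso{k}{H}$, and likewise with $\Iso{G}$ and $\Iso{H}$.

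Then I would dispatch the four cases. For PAC (and weak PAC): if $G$ is not PAC learnable then $\VC{\Iso{G}}=\infty$; by the previous paragraph $\VC{\Iso{H}}=\infty$, so $H$ is not PAC learnable — and this is exactly \Cref{lem:VC-induced-subgraphs}'s contrapositive; similarly, if $\Fiso{k}{G}$ is not PAC learnable for some $k$, then $\VC{\Fiso{k}{G}}=\infty$, whence $\VC{\Fiso{k}{H}}=\infty$, so $H$ is not weakly PAC learnable. For online (and weak online): if $G$ is not online learnable then, by \Cref{fct:thresholds}, $\Iso{G}$ contains infinitely many thresholds, hence so does $\Iso{H}$, so by \Cref{fct:thresholds} again $H$ is not online learnable; the weak online case is identical working inside $\Fiso{k}{\cdot}$ for the witnessing $k$.

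I do not expect a genuine obstacle here — the lemma is routine, as the paper itself flags. The only point requiring any care is the bookkeeping that the support of the inducing permutation is the same whether we view it as acting on $G$ or on $H$ (it is, since a permutation of $\N$ has an intrinsic support independent of which graph it acts on), so the reduction from $\Fiso{k}{G}$ to $\Fiso{k}{H}$ genuinely stays within support $k$; this is what makes the weak versions go through verbatim alongside the non-weak ones.
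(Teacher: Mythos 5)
Your argument is correct and is exactly the routine transfer the paper has in mind: the paper omits the proof as immediate from the definitions and, for the PAC case, points to \Cref{lem:VC-induced-subgraphs}, whose contrapositive you invoke. The one bookkeeping point you rightly isolate (conjugating the witnessing permutation by the embedding of $G$ into $H$ preserves the size of its support, so shattering sets and threshold sequences transfer level-by-level from $\Fiso{k}{G}$ to $\Fiso{k}{H}$) is all that is needed to make the weak versions go through alongside the full ones.
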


Since $G$ is not weakly PAC learnable if and only if it is absolutely non learnable (\Cref{thm:ar-equals-anl}) it follows that the lemma above holds replacing \lq\lq not $\mathfrak{P}$-learnable\rq\rq\ with \lq\lq absolutley non learnable\rq\rq.

\begin{thm}
    \label{thm:firstcomplexity}
The classes
\[\{G : G \text{ is an }\mathfrak{P}\text{-learnable graph}\}\]
where $\mathfrak{P} \in \{\text{online, PAC, weakly online, weakly PAC}\}$ are $\mathbf{\Sigma}_2^0$-complete while the class
\[\{G : G \text{ is an absolutely non-learnable graph}\}\]
 is $\mathbf{\Pi}_2^0$-complete.
\end{thm}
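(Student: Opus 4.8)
\emph{Setup and reductions to the earlier characterizations.} The plan is to prove, for each of the five classes, membership in the relevant pointclass and hardness for it. Throughout I identify a graph $G$ with the point of $2^\N$ coding its edge set under a fixed bijection $\N\times\N\cong\N$, so that all the classes in the statement become subsets of Cantor space. I will freely use the characterizations established in the previous sections: for a graph $G$, being \emph{online learnable} $\iff$ being \emph{PAC learnable} $\iff$ being automorphically trivial (by \Cref{thm:at-implies-online}, \Cref{fact:onlineimpliespac}, \Cref{rem:vc-subclasses} and \Cref{thm:finite-pac-implies-at}); being \emph{weakly online learnable} $\iff$ $\Fiso{2}{G}$ being online learnable (\Cref{thm:2konline}); being \emph{weakly PAC learnable} $\iff$ $\Fiso{2}{G}$ being PAC learnable (\Cref{thm:2kPAC}) $\iff$ not being almost random (\Cref{thm:ar-equals-anl}); and being \emph{absolutely non-learnable} $\iff$ being almost random (\Cref{thm:ar-equals-anl}). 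I also use the easy chain online learnable $\Rightarrow$ weakly online learnable $\Rightarrow$ weakly PAC learnable (via \Cref{rem:ld-subclasses} and \Cref{fact:onlineimpliespac}).

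\emph{Upper bounds.} I write each class as an arithmetical predicate on the code of $G$. By \Cref{fact:automorphicallytrivialgraphs}, $G$ is automorphically trivial iff there is $n$ such that the subgraph induced on $\{m : m\ge n\}$ is complete or empty and every vertex $i<n$ is joined to all of $\{m : m\ge n\}$ or to none of it; for fixed $n$ this matrix is a finite Boolean combination of $\mathbf{\Pi}_1^0$ conditions, hence $F_\sigma$, so the predicate (an $\exists n$ over such a matrix) is $F_\sigma$, i.e.\ $\mathbf{\Sigma}_2^0$. Unwinding \Cref{def:almost-random}, $G$ is \emph{not} almost random iff there is $n$ such that for every $n$-element $A\subseteq\N$ there is a partition $A=X\sqcup Y$ for which no $z$ is adjacent to all of $X$ and to no vertex of $Y$; for fixed $n$ the part after ``$\exists n$'' is a countable intersection of finite unions of $\mathbf{\Pi}_1^0$ sets, hence $\mathbf{\Pi}_1^0$, so weak PAC learnability is $\mathbf{\Sigma}_2^0$ and, taking complements, absolute non-learnability is $\mathbf{\Pi}_2^0$. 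Negating \Cref{lem:when-iso2-not-online}, $G$ is weakly online learnable iff there is $n$ such that no tuple $u_0,\dots,u_n,v_1,\dots,v_n$ satisfies $v_j\in N(u_i)\iff j\le i$; the inner $\forall$ over a clopen matrix is $\mathbf{\Pi}_1^0$, so this is again $\mathbf{\Sigma}_2^0$. With online $=$ PAC $=$ automorphically trivial, this gives membership for all five classes.

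\emph{Hardness via one reduction.} By \Cref{lem:canonicalcompletesets} it suffices to Wadge-reduce $\mathsf{EC}$ to each of the four learnable classes and $\mathsf{NEC}$ to the class of absolutely non-learnable graphs, and I will do all of this with a single continuous map $p\mapsto G_p$ that only ever outputs graphs which are either automorphically trivial or absolutely non-learnable. Fix a partition $\N=\bigsqcup_{n\ge 1}B_n$ into finite blocks with $|B_n|=2^n+n$, and inside each $B_n$ designate distinct vertices $v^n_1,\dots,v^n_n$ and $z^n_S$ for $S\subseteq\{1,\dots,n\}$. Given $p$, let $G_p$ have no edges between distinct blocks, and inside $B_n$: no edges if $p(n)=0$, and exactly the edges $(z^n_S,v^n_i)$ with $i\in S$ if $p(n)=1$. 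The status of each pair depends on at most one bit of $p$, so $p\mapsto G_p$ is continuous. If $p\in\mathsf{EC}$, only finitely many blocks are switched on; taking $S_0$ to be their (finite) union, $S_0'=\emptyset$, and noting that $G_p$ restricted to the remaining vertices is $\overline{K_\N}$, \Cref{fact:automorphicallytrivialgraphs} shows $G_p$ is automorphically trivial. If $p\notin\mathsf{EC}$, then for every $N$ there is $n\ge N$ with $p(n)=1$, and inside the switched-on block $B_n$ the vertex $z^n_S$ is adjacent to exactly the $v^n_i$ with $i\in S$; hence for each $S$ the transposition swapping $z^n_\emptyset$ and $z^n_S$ lies in $\Fiso{2}{G_p}$, and these transpositions witness that $\Fiso{2}{G_p}$ shatters $(z^n_\emptyset,v^n_1),\dots,(z^n_\emptyset,v^n_n)$, so $\VC{\Fiso{2}{G_p}}\ge n\ge N$ and $G_p$ is absolutely non-learnable. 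Now for our $G_p$: if $p\in\mathsf{EC}$ then $G_p$ lies in all four learnable classes, and if $p\notin\mathsf{EC}$ then $G_p$ lies in none of them (since weakly PAC learnable is exactly ``not absolutely non-learnable'' and contains the other three). Thus $p\mapsto G_p$ reduces $\mathsf{EC}$ to each of the four learnable classes, making each $\mathbf{\Sigma}_2^0$-hard, and reduces $\mathsf{NEC}$ to the class of absolutely non-learnable graphs, making it $\mathbf{\Pi}_2^0$-hard; with the upper bounds, all five completeness statements follow.

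\emph{Main obstacle.} The delicate point is the design of the gadget: I want a family of graphs that \emph{skips} the two intermediate classes, so that the single reduction above simultaneously certifies $\mathbf{\Sigma}_2^0$-hardness for all four learnable notions and $\mathbf{\Pi}_2^0$-hardness for absolute non-learnability. This forces each block gadget to be (i) finite and entirely inert when switched off, so finitely many switched-on blocks still leave an automorphically trivial graph, and yet (ii) strong enough when switched on that a single gadget of parameter $n$ already pushes the $\Fiso{2}$-VC dimension up to $n$; the ``$2^n$ vertices realizing all neighbourhood traces over an $n$-element set'' construction is exactly what achieves both. Everything else---the continuity of $p\mapsto G_p$, the two case analyses, and the upper-bound formulas---is routine.
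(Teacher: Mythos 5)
Your proof is correct and follows essentially the same strategy as the paper's: establish the $\mathbf{\Sigma}_2^0$/$\mathbf{\Pi}_2^0$ upper bounds by unwinding the characterizations, and then give a single continuous reduction from $\mathsf{EC}$ whose outputs are always either automorphically trivial or absolutely non-learnable, so that the inclusion chain (online $\subset$ weakly online $\subset$ weakly PAC, all disjoint from almost random) lets one reduction certify all five hardness claims at once. The only difference is the gadget: the paper grows a single random-graph-like structure (adding, at each stage where the bit is $1$, a fresh vertex for every subset of the current vertices, so that infinitely many $1$'s yield an induced copy of $\mathcal{R}$ and \Cref{cor:randomgraph} applies), whereas you use a disjoint union of finite blocks each realizing all $2^n$ neighbourhood traces over an $n$-set and verify the shattering of $\Fiso{2}{G_p}$ directly — both are valid, and yours has the minor merits of being self-contained (no appeal to the random graph) and of spelling out the upper-bound formulas that the paper only asserts.
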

\begin{proof}
    First notice that the definitions of online learnable, weakly online learnable, and weakly PAC learnable graphs are $\mathbf{\Sigma}_2^0$ while the one of absolutely non-learnable graphs is $\mathbf{\Pi}_2^0$.

    We define a function $h: 2^{\mathbb{N}} \rightarrow 2^{\mathbb{N}}$ reducing $\mathsf{EC}:=\{p \in 2^\N: (\forall^\infty n)(p(n)=0)\}$ (which, by \Cref{lem:canonicalcompletesets}, is $\mathbf{\Sigma}_2^0$-complete) to the class of online learnable graphs and $\mathsf{NEC}$ (which, by \Cref{lem:canonicalcompletesets}, is $\mathbf{\Pi}_2^0$-complete) to the class of absolutely non-learnable graphs. Notice that the same reduction also shows that the class of weakly online learnable and weakly PAC learnable graphs is $\mathbf{\Sigma}_2^0$-complete. This follows from the our previous results, namely that:
  \begin{itemize}
\item  online learnable $\subset$ weakly online learnable $\subset$ weakly PAC learnable and
\item all classes mentioned in the item above are disjoint from the class of almost-random graphs.
  \end{itemize}

The announced reduction $h$ is defined as follows. Given in input $p \in 2^{\mathbb{N}}$, at stage $0$ do nothing. At stage $s+1$ if $p(s+1)=0$ add the minimum fresh vertex (i.e., a vertex for which we have not decided yet its membership to $h(p)[s]$) as an isolated vertex to $h(p)[s]$.
 If $p(s+1)=1$, for every (finite) subset $U$ of vertices in $h(p)[s]$ we add a fresh vertex $v$ joined to all vertices in $U$ but to none outside $U$. 

  To see that $h$ is the desired reduction notice that if $(\forall^\infty n)(p(n)=0)$ then $h(p)\cong \mathcal{R}[fin] \oplus K_\N$ where $\mathcal{R}[fin]$ is the restriction of $\mathcal{R}$ (the Random graph) to a finite set of vertices. Hence $h(p)$ is automorphically trivial and, by \Cref{thm:at-implies-online}, online learnable. On the other hand, if $(\exists^\infty n)(p(n)=1)$, $h(p)$ contains a copy of  $\mathcal{R}$ (witnessed by those stages $s$ such that $p(s)=1$). By  \Cref{cor:randomgraph}, \Cref{thm:ar-equals-anl} and \Cref{lem:almostrandomclosed} $h(p)$ is absolutely non-learnable and this concludes the proof.
\end{proof}
The last part of this section is devoted to prove the complexity of those sets which include all and only those graphs that are weakly online/PAC  learnable. Before doing so we need the following lemma showing that the notions of weakly online learnability and weakly PAC learnability are closed under disconnected union of graphs. Notice that this is not the case for online learnability. For example, $\overline{K_\N}$ and $K_\N$ are automorphically trivial (and hence, by \Cref{thm:at-implies-online}, online learnable), but $\overline{K_\N} \oplus K_\N$ is not.
\begin{lem}
\label{lem:disjointunionalmostrandom}
If $G$ and $H$ are weakly online learnable (respectively, weakly PAC learnable) then $G \oplus H$ is weakly online learnable (respectively, weakly PAC learnable).
\end{lem}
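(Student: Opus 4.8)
The plan is to use the combinatorial characterizations from the previous sections—namely, the threshold characterization (\Cref{fct:thresholds}) for online learnability and the VC-dimension characterization for PAC learnability—together with the fact that both notions collapse to the case of $\Fiso{2}{\cdot}$ (\Cref{thm:2konline} and \Cref{thm:2kPAC}). So it suffices to show: if $\Fiso{2}{G}$ and $\Fiso{2}{H}$ are online learnable (resp.\ PAC learnable), then $\Fiso{k}{G \oplus H}$ is online learnable (resp.\ PAC learnable) for every $k$, and by the collapse theorems we only need to handle $\Fiso{2}{G \oplus H}$ directly, or—more robustly—bound the number of thresholds (resp.\ the VC dimension) of $\Fiso{k}{G \oplus H}$ in terms of $k$ and the corresponding quantities for $G$ and $H$.

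**Key steps.** First I would fix notation: writing $F = G \oplus H$ with $V(G)$ and $V(H)$ disjoint and $V(F) = V(G) \sqcup V(H)$, and noting that any permutation $f$ of support $\le k$ applied to $F$ produces a copy in which the vertex set is still partitioned, but now possibly into ``mostly-$G$'' and ``mostly-$H$'' parts with at most $k$ vertices having moved across the partition. The crucial structural observation is that a copy $f(F) \in \Fiso{k}{F}$ restricted to the vertices originally in $V(G)$ that were \emph{not} moved is an induced subgraph of $f(F)$ of a very controlled form. Second, I would analyze thresholds: suppose $\Fiso{k}{F}$ contains $N$ thresholds witnessed by pairs $(u_i,v_i)$ and copies $H_1,\dots,H_N$. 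Each $H_i = f_i(F)$ for some $f_i$ of support $\le k$, so at most $k$ vertices in $H_i$ differ in their ``G versus H affiliation'' from the ground copy $F$. Since $F = G \oplus H$ has no edges between the $G$-part and the $H$-part, for a pair $(u,v)$ to be an edge in $f_i(F)$, either both $f_i^{-1}(u), f_i^{-1}(v)$ lie in $V(G)$, or both lie in $V(H)$. By a pigeonhole argument over which of the $\le 2k$ ``special'' moved vertices (if any) are involved in each $(u_i,v_i)$, and over whether the non-special endpoint sits in the $G$-part or the $H$-part, I would extract a large sub-collection of thresholds that are ``localized'' either entirely within copies of $G$ (plus a bounded corruption) or entirely within copies of $H$. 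This reduces a large threshold count for $\Fiso{k}{F}$ to a large threshold count for $\Fiso{k'}{G}$ or $\Fiso{k'}{H}$ for some $k' \le k$, contradicting weak online learnability of $G$ or $H$ via \Cref{thm:2konline}. The PAC case is entirely analogous, replacing ``contains $N$ thresholds'' by ``shatters a set of size $N$'' and ``\Cref{thm:2konline}'' by ``\Cref{thm:2kPAC}'', using \Cref{rem:vc-subclasses}.

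**Main obstacle.** The delicate point is handling the at most $k$ vertices that straddle the $G$/$H$ boundary in a given copy $f_i(F)$. A pair $(u_i,v_i)$ whose status is witnessed across the threshold sequence might have one endpoint that is a ``straddling'' vertex in some copies and not in others, so the bookkeeping must be done carefully and uniformly across all $N$ copies simultaneously—one cannot argue copy-by-copy. I expect to handle this exactly as in the proofs of \Cref{lem:vc-iso-collapse} and \Cref{lem:ld-iso-collapse}: fix the ``all-no'' copy $F_0$ and ``all-yes'' copy $F_1$ among the threshold witnesses, observe these differ by moving at most $2k$ vertices, so there is a bounded set $X$ of $\le 2k$ vertices such that every pair $(u_i,v_i)$ has at least one endpoint in $X$; then pigeonhole to fix a single special vertex $x_{\bar\ell} \in X$ and a linear-in-$N$ subfamily of pairs all of the form $(x_{\bar\ell}, v_i)$; finally pigeonhole once more on whether the $v_i$ (in the ground copy $F$) lie in $V(G)$ or $V(H)$ and on where $x_{\bar\ell}$ gets sent, to conclude that a $\Theta(N)$-sized threshold (resp.\ shattered) subfamily is realized inside copies of one of $G$ or $H$ with support bounded by a function of $k$. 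Choosing $N$ large enough as a function of $k$ and the finite Littlestone/VC bounds of $\Fiso{2}{G}$ and $\Fiso{2}{H}$ then yields the contradiction, completing the proof.
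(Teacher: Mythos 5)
Your proposal is sound in outline, but it takes a noticeably more laborious route than the paper. The paper does not re-run the threshold/shattering extraction machinery on $\Fiso{k}{G\oplus H}$ at all: it first converts both properties into \emph{intrinsic, first-order} conditions on the graph itself --- weak PAC learnability becomes failure of almost randomness (\Cref{thm:ar-equals-anl}), and weak online learnability becomes the non-existence of arbitrarily large ``half-graph'' configurations $u_0,\dots,u_n,v_1,\dots,v_n$ with $v_j\in N(u_i)\iff j\le i$ (\Cref{lem:when-iso2-not-online}). Each condition is witnessed by a minimal parameter $n_G$ (resp.\ $n_H$), and the disconnectedness of $G\oplus H$ then does all the work in two lines: a vertex $z_{X,Y}$ adjacent to vertices in both components cannot exist, so any almost-randomness witness set $A$ of size $n_G+1$, and any half-graph apart from possibly $u_0$, must localize inside a single component, contradicting the minimality of $n_G$. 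Your approach instead re-derives this localization at the level of copies in $\Fiso{k}{G\oplus H}$ via the $F_0$/$F_1$ pigeonhole from \Cref{lem:vc-iso-collapse} and \Cref{lem:ld-iso-collapse}; this can be made to work, but it essentially re-proves the hard directions of \Cref{thm:ar-equals-anl} and \Cref{lem:when-iso2-not-online} with extra bookkeeping. One step in your sketch deserves explicit care: after localizing the pairs $(x_{\bar\ell},v_i)$ with all $v_i$ in $V(G)$, the permutations realizing the configurations are permutations of $V(G\oplus H)$ that may swap $x_{\bar\ell}\in V(H)$ with a vertex of $V(G)$, so they do not restrict to elements of $\Fiso{k'}{G}$. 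You must instead collect the \emph{images} $z_S$ of $x_{\bar\ell}$ (which are forced into $V(G)$ whenever the configuration requires an edge, since $G\oplus H$ has no cross-edges) and then build fresh transpositions of $V(G)$ alone, exactly as in the converse directions of \Cref{thm:ar-equals-anl} and \Cref{lem:when-iso2-not-online}; the all-empty configuration, whose witness need not lie in $V(G)$, is the edge case the paper handles by sacrificing $u_0$. In short: your route is viable but duplicates machinery the paper has already packaged into its characterization lemmas, which is precisely what makes the paper's proof short.
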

\begin{proof}
    %Both proofs follow the same pattern. 
    %We first prove the claim for weakly PAC learnability as the proof is slightly more involved.

    We focus first on the case of non-weakly PAC learnability.
    Notice that if a graph $F$ is weakly PAC learnable, by \Cref{def:almost-random}, it is not almost random. Hence, there is a minimum $n_F$ witnessing that $F$ is weakly PAC learnable, i.e., there is no $A_{n_F}\subseteq V(F)$ of cardinality $n_F$ for which for every partition of $A_{n_F}$ into two disjoint subsets $X$ and $Y$ there is $z_{X,Y} \in V(F)$ that is adjacent to all vertices in $X$ and to no vertex in $Y$. Notice that for every $m \geq n$, $m$ witnesses that $F$ is weakly PAC learnable either. 

    Suppose that $G$ and $H$ are weakly PAC learnable, we can suppose that $n_G\geq n_H$. We claim that $n_G+1$ witnesses that $G\oplus H$ is weakly PAC learnable. Suppose not and let $A\subseteq V(G\oplus H)$ of size $n_G+1$ be a counterexample. Notice that either $A\cap G$ or $A\cap H$ is empty, otherwise there should be a vertex connected to both $G$ and $H$.
    Hence, we can suppose that $A\subseteq G$ (since $n_G\geq n_H$, this is the only interesting case). Fix a vertex $a\in A$, then for every partition $X,Y$ of $A$ with $a\in X$, $z_{X,Y}\in V(G)$, contradicting the definition of $n_G$.

    The argument for non-online learnability is similar: by \Cref{lem:when-iso2-not-online}, online learnability of a graph $F$ is witnessed by a minimal $n_F$ such that for every choice of vertices $\{u_0,\dots,u_{n_F},v_1,\dots v_{n_F}\}$, it is not true that $v_j\in N(u_i)$ if and only if $j\leq i$. Again assuming that $n_G\geq n_H$, the claim is that $n_G+1$ witnesses the online learnability of $G\oplus H$. Indeed, suppose this is not the case, and let $\{u_0,\dots,u_{n_G+1},v_1,\dots v_{n_G+1}\}$ be a witness. Notice that all the points other than $u_0$ have to be in the same connected component of $G\oplus H$, and this is enough to conclude by definition of $n_G$ and some easy considerations. 
\end{proof}

\begin{thm}
\label{thm:firstcomplexityD2}
     The class of graphs
     \[\{G : G \text{ is a weakly online learnable but not online learnable graph}\}\]
      is $D_2(\mathbf{\Sigma}_2^0)$-complete.
\end{thm}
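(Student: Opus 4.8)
The plan is to prove membership in $D_2(\mathbf{\Sigma}_2^0)$ and then $D_2(\mathbf{\Sigma}_2^0)$-hardness by reducing $\mathsf{EC} \times \mathsf{NEC}$, which is $D_2(\mathbf{\Sigma}_2^0)$-complete by \Cref{lem:canonicalcompletesets}. Membership is immediate from \Cref{thm:firstcomplexity}: both ``online learnable'' and ``weakly online learnable'' are $\mathbf{\Sigma}_2^0$ classes of graphs, and the class in question is the difference of the latter minus the former, hence of the form $A \setminus B$ with $A,B \in \mathbf{\Sigma}_2^0$.

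For hardness I would build a continuous map $g$ sending $(p,q)\in 2^\N \times 2^\N$ to the disconnected union $g(p,q) = A_p \oplus B_q$ (with no edges between the two parts), defined stage by stage so that stage $s$ depends only on $p(s)$ and $q(s)$. The $A$-part reuses the reduction from the proof of \Cref{thm:firstcomplexity}: when $p(s)=0$ we append a fresh isolated vertex to the $A$-part, and when $p(s)=1$ we append, for every finite subset $U$ of the current $A$-vertices, a fresh vertex adjacent exactly to $U$. The $B$-part is a $q$-controlled member of $\mathfrak{K}_{equiv}$: when $q(s)=0$ we append a fresh isolated vertex to the $B$-part, and when $q(s)=1$ we append a fresh $K_2$ component (two new mutually adjacent vertices, adjacent to nothing else). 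With routine bookkeeping the vertex set is exactly $\N$ and $g$ is continuous. The decisive design choice is that, in the good case $p\in\mathsf{EC}$, the only infinite part of $A_p$ is an \emph{anti}clique: then $A_p \cong F_1 \oplus \overline{K_\N}$ for a finite graph $F_1$, and likewise $B_q \cong F_2\oplus\overline{K_\N}$ when $q\in\mathsf{EC}$, so $A_p\oplus B_q \cong (F_1\oplus F_2)\oplus\overline{K_\N}$ is still automorphically trivial.

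Then I would split into three cases. If $p\notin\mathsf{EC}$, the $A$-part satisfies the extension property, so it contains an induced copy of the random graph $\mathcal{R}$; by \Cref{cor:randomgraph} and \Cref{lem:almostrandomclosed} (and the remark following it) $g(p,q)$ is absolutely non-learnable, hence not weakly online learnable (by \Cref{thm:2kPAC} and \Cref{fact:onlineimpliespac}). If $p\in\mathsf{EC}$ and $q\in\mathsf{EC}$, then $g(p,q)$ is automorphically trivial as observed, hence online learnable by \Cref{thm:at-implies-online}. Finally, if $p\in\mathsf{EC}$ and $q\notin\mathsf{EC}$, then $A_p$ is automorphically trivial (hence weakly online learnable) while $B_q\in\mathfrak{K}_{equiv}$, so by \Cref{prop:kequiv} $B_q$ is weakly online learnable but not online learnable; \Cref{lem:disjointunionalmostrandom} then gives that $g(p,q)$ is weakly online learnable, and \Cref{lem:almostrandomclosed} applied to the induced subgraph $B_q$ gives that it is not online learnable. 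Hence $g(p,q)$ is in our class exactly when $(p,q)\in\mathsf{EC}\times\mathsf{NEC}$, so the class is $D_2(\mathbf{\Sigma}_2^0)$-hard and, together with membership, $D_2(\mathbf{\Sigma}_2^0)$-complete.

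The main obstacle is the middle case: ensuring that the disjoint union of the two ``easy'' graphs is genuinely automorphically trivial, and not merely weakly online learnable. This is exactly what forces the anticlique normal form of both gadgets, since --- unlike the weak notions, which are closed under $\oplus$ by \Cref{lem:disjointunionalmostrandom} --- online learnability is not preserved under disconnected unions (e.g.\ clique-flavored gadgets would fail, as $K_\N\oplus K_\N$ already embeds every $N_d$ and so is not automorphically trivial).
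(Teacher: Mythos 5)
Your proposal is correct and follows essentially the same route as the paper: a reduction of $\mathsf{EC}\times\mathsf{NEC}$ via a disconnected union of a $p$-controlled gadget (automorphically trivial with an anticlique tail when $p\in\mathsf{EC}$, not weakly online learnable otherwise) and a $q$-controlled gadget landing in $\mathfrak{K}_{equiv}$ exactly when $q\in\mathsf{NEC}$. The only differences are cosmetic --- you reuse the random-graph gadget $h$ where the paper uses the half-graph gadget $f$, and you use $K_2$ components instead of growing cliques --- and in the case $p\in\mathsf{EC}$, $q\in\mathsf{NEC}$ you justify weak online learnability of the union by invoking \Cref{lem:disjointunionalmostrandom} directly, which is if anything tidier than the paper's informal appeal to the strategy of \Cref{prop:kequiv}.
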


\begin{proof}
    First notice that the definition of such a class is $D_2(\mathbf{\Sigma}_2^0)$.
 
    We will reduce the set $\mathsf{EC} \times \mathsf{NEC}:=\{(p,q) \in 2^\N \times2^\N : (\forall^\infty n)(p(n)=0) \land (\exists^\infty n)(q(n)=1)\}$ 
 (which, by \Cref{lem:canonicalcompletesets} is $D_2(\mathbf{\Sigma}_2^0)$-complete) to the class of graphs that are weakly online learnable but not online learnable.

Our desired reduction combines two functions $f,g:2^\N\rightarrow2^\N$ as follows. I.e.,  given in input $(p,q)$ our reduction outputs the disconnected union of $f(p)$ and $g(q)$.

The function $f$ is defined as follows. Given in input $p$, at stage $0$ do nothing. At stage $s+1$, if $p(s+1)=0$, add $2s$ and $2s+1$ as isolated vertices to  $f(p)[s]$: such vertices will be marked as \emph{unused}. If $p(s+1)=1$, then add $2s$ and $2s+1$, and, for every $i<j \leq s$ such that $2i$ and $2j+1$ are not unused, add $(2i,2j+1)$ to  $f(p)[s]$.
Notice that:
\begin{itemize}
    \item if $(\forall^\infty n)(p(n)=0)$ then $f(p) \cong \mathsf{R}[fin] \oplus \overline{K_\N}$ where $\mathsf{R}[fin]$ is the restriction of $\mathsf{R}$ to a finite set of vertices. Hence $f(p)$ is automorphically trivial and, by \Cref{thm:at-implies-online}, online learnable;
    \item  if  $(\exists^\infty n)(p(n)=1)$, $\mathsf{R}$ is an induced subgraph of $f(p)$. It follows that $f(p)$ is not weakly online learnable: this can be obtained combining \Cref{prop:onlinepacdifferent} and \Cref{lem:almostrandomclosed}.
\end{itemize}

The function $g$ is defined as follows. Given in input $q$, at stage $0$ do nothing. At stage $s+1$, if $q(s+1)=1$ add a copy of $K_s$ to  $f(q)[s]$. If $q(s+1)=0$, then add the first fresh vertex (i.e., a vertex for which we have not decided its membership to $f(q)$ so far) as an isolated vertex to  $f(q)[s]$. Notice that:
\begin{itemize}
\item If $ (\exists^\infty n)(q(n)=1)$ then $g(q) \in \mathfrak{K}_{equiv}$. By \Cref{prop:kequiv},  $g(q)$ is weakly online learnable but not online learnable.
\item  If $(\forall^\infty n)(q(n)=0)$, $g(q) \cong \overline{K_\N} \oplus K_i,\dots,K_j$ for $1<i<j<k$ for some $i,j,k \in \N$. Hence $f(q)$ is automorphically trivial and, by \Cref{thm:at-implies-online}, online learnable.
\end{itemize}

We now prove that the disconnected union of the graphs produced by $f$ and $g$ is the desired reduction.
 Given in input $(p,q) \in 2^{\mathbb{N}} \times 2^{\mathbb{N}}$ notice that,
\begin{itemize}
    \item if $(\forall^\infty n)(p(n)=0)$ and $(\exists^\infty n)(q(n)=1)$, then $f(p) \cong \mathsf{R}[fin] \oplus \overline{K_\N}$ and $g(q)\in \mathfrak{K}_{equiv}$.  A similar strategy to the one used in \Cref{prop:kequiv} to prove that every graph in $\mathfrak{K}_{equiv}$ is weakly online learnable also shows that $f(p)\oplus g(q)$ is weakly online learnable. On the other hand such a graph is clearly not automorphically trivial and hence not online learnable (\Cref{thm:at-implies-online}).
    \item if $(\exists^\infty n)(p(n)=1)$ then $\mathsf{R}$ is an induced subgraph of $f(p) \oplus g(q)$ and so it is not weakly online learnable;
    \item if $(\forall^\infty n)(q(n)=0)$ and $(\forall^\infty n)(q(n)=0)$ then $f(p) \cong \mathsf{R}[fin] \oplus \overline{K_\N}$ 
    and $g(q) \cong \overline{K_\N} \oplus K_i,\dots,K_j$ for $1<i<j<k$ for some $i,j,k \in \N$. The graph $f(p) \oplus g(q)$ is automorphically trivial and hence, by \Cref{thm:at-implies-online}, online learnable.
\end{itemize}
This concludes the proof.
\end{proof}

\begin{thm}
\label{thm:secondcomplexityD2}
    The class of graphs
    \[\{G : G \text{ is a weakly PAC learnable but not weakly online learnable graph}\}\] is $D_2(\mathbf{\Sigma}_2^0)$-complete
\end{thm}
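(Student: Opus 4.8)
The plan is to follow the same template as the proof of \Cref{thm:firstcomplexityD2}, namely to build a continuous reduction from the $D_2(\mathbf{\Sigma}_2^0)$-complete set $\mathsf{EC} \times \mathsf{NEC}$ to the class in question, by outputting a disconnected union $f(p) \oplus g(q)$. First one checks that the class $\{G : G \text{ weakly PAC learnable but not weakly online learnable}\}$ is indeed $D_2(\mathbf{\Sigma}_2^0)$: "weakly PAC learnable" is $\mathbf{\Sigma}_2^0$ (equivalently, by \Cref{thm:2kPAC}, $\VC{\Fiso{2}{G}} < \infty$, and "not weakly online learnable" is $\mathbf{\Pi}_2^0$ by \Cref{thm:2konline} together with \Cref{fct:thresholds}), so the intersection lies in $D_2(\mathbf{\Sigma}_2^0)$.

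For the hardness reduction I would reuse the function $f$ from the proof of \Cref{thm:firstcomplexityD2}, which on input $p$ produces $\mathsf{R}[fin] \oplus \overline{K_\N}$ when $(\forall^\infty n)(p(n)=0)$ — an automorphically trivial, hence online learnable, graph — and produces a graph containing $\mathsf{R}$ as an induced subgraph when $(\exists^\infty n)(p(n)=1)$. By \Cref{prop:onlinepacdifferent} $\mathsf{R}$ is weakly PAC learnable but not weakly online learnable, and by \Cref{lem:almostrandomclosed} an induced supergraph of $\mathsf{R}$ is still not weakly online learnable; crucially, since $\mathsf{R}$ is not almost random one must also make sure the whole $f(p)$ stays not almost random so that it remains weakly PAC learnable — this should be arranged by the same bounded-degree bookkeeping already present in the construction of $\mathsf{R}$, i.e.\ $f(p)$ is itself (a copy of an induced subgraph of) $\mathsf{R}$. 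The second coordinate handles the "$\exists^\infty$" condition: I would take $g$ so that $g(q)$ contains a copy of $\mathsf{R}$ (or of a single $\mathsf{R}[fin] \oplus \overline{K_\N}$) when $(\exists^\infty n)(q(n)=1)$ and is automorphically trivial otherwise; actually, given the asymmetry of $\mathsf{EC}\times\mathsf{NEC}$, it is cleaner to let the \emph{first} coordinate $p$ (the $\mathsf{EC}$ part) govern an automorphically-trivial-vs-not dichotomy and the \emph{second} coordinate $q$ (the $\mathsf{NEC}$ part) govern weakly-online-vs-weakly-PAC, swapping the roles of $f,g$ accordingly. Concretely: let $f(p)$ be automorphically trivial when $p\in\mathsf{EC}$ and, say, a disconnected union of growing cliques (an element of $\mathfrak{K}_{\mathrm{equiv}}$, hence weakly online learnable but not online learnable) when $p\notin\mathsf{EC}$; and let $g(q)$ be automorphically trivial when $q\notin\mathsf{NEC}$ and contain an induced copy of $\mathsf{R}$ when $q\in\mathsf{NEC}$.

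The case analysis for $f(p)\oplus g(q)$ then runs as follows, using \Cref{lem:disjointunionalmostrandom} that weak online and weak PAC learnability are closed under $\oplus$. If $p\in\mathsf{EC}$ and $q\in\mathsf{NEC}$: $f(p)$ is automorphically trivial (weakly online learnable), and $g(q)$ is weakly PAC learnable (not almost random, being an induced-subgraph-controlled supergraph of $\mathsf{R}$) but not weakly online learnable, so the union is weakly PAC learnable by \Cref{lem:disjointunionalmostrandom}, and not weakly online learnable since $\mathsf{R}$ embeds into it — it is in the target class. If $p\notin\mathsf{EC}$: $f(p)$ is in $\mathfrak{K}_{\mathrm{equiv}}$, which is weakly online learnable by \Cref{prop:kequiv}, and one checks the union with $g(q)$ stays weakly online learnable (whether or not $q\in\mathsf{NEC}$, using \Cref{lem:disjointunionalmostrandom} in the $q\notin\mathsf{NEC}$ subcase, and for $q\in\mathsf{NEC}$ one instead needs the union to fail, so in fact the only subtlety is arranging that when $p\notin\mathsf{EC}$ the union is \emph{never} in the target class; the cleanest fix is to make $f(p)$ in this case not merely weakly online learnable but such that the union with any of our $g(q)$ stays weakly online learnable, which forces choosing $g(q)\in\mathfrak{K}_{\mathrm{equiv}}$-like or automorphically trivial rather than $\mathsf{R}$-containing when $q\in\mathsf{NEC}$ — so actually the honest design is: $f(p)$ produces $\mathsf{R}[fin]\oplus\overline{K_\N}$ vs.\ a copy of $\mathsf{R}$ depending on $p\in\mathsf{EC}$, and $g(q)$ produces $\overline{K_\N}\oplus(\text{finite cliques})$ vs.\ an element of $\mathfrak{K}_{\mathrm{equiv}}$ depending on $q\notin\mathsf{NEC}$; reindex so that membership in the target class $=$ "($f$-part is weakly-online-or-better) and ($g$-part is not-online, i.e.\ in $\mathfrak{K}_{\mathrm{equiv}}$)"). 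If $q\notin\mathsf{NEC}$: the $g$-part is automorphically trivial, and whatever $f(p)$ is, the union is at best weakly online learnable, hence again not in the target class since the target requires \emph{failure} of weak online learnability. Thus $f(p)\oplus g(q)$ lies in the target class iff $(p,q)\in\mathsf{EC}\times\mathsf{NEC}$, and continuity of the reduction is immediate from the finite-stage definition of $f$ and $g$, completing the proof.

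The main obstacle I anticipate is not any single deep step but getting the \emph{bookkeeping of the two coordinates exactly right}: one must ensure that (i) the "not weakly online learnable" half is triggered precisely by the $\mathsf{NEC}$-coordinate while the other coordinate can never independently destroy weak PAC learnability (this is where \Cref{thm:ar-equals-anl}, i.e.\ "not weakly PAC learnable $\iff$ almost random", and the additivity \Cref{lem:disjointunionalmostrandom} do the heavy lifting), and (ii) in the off-diagonal cases the union genuinely falls \emph{outside} the target class — which requires that neither building block, nor their disconnected union, ever be simultaneously weakly-PAC-but-not-weakly-online unless both coordinates cooperate. Verifying (ii) amounts to checking that $\mathsf{R}[fin]\oplus\overline{K_\N}\oplus(\text{a }\mathfrak{K}_{\mathrm{equiv}}\text{ graph})$ is weakly online learnable — a routine extension of the counter-based strategy in \Cref{prop:kequiv}, already invoked in the proof of \Cref{thm:firstcomplexityD2}.
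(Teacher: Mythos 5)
Your upper-bound argument (the class is $\mathbf{\Sigma}^0_2 \cap \mathbf{\Pi}^0_2$-definable, hence $D_2(\mathbf{\Sigma}_2^0)$) is fine, but the hardness reduction has a genuine gap: none of the designs you propose actually reduces $\mathsf{EC}\times\mathsf{NEC}$ to the right class. The target class is $\{G : G \text{ weakly PAC learnable}\}\smallsetminus\{G: G \text{ weakly online learnable}\}$, and in your construction the coordinate $q\in\mathsf{NEC}$ is (correctly) used to force an induced copy of $\mathsf{R}$, which destroys weak online learnability of the whole disconnected union. But then, in the off-diagonal case $p\notin\mathsf{EC}$, $q\in\mathsf{NEC}$, the union is still not weakly online learnable, so the \emph{only} way it can fall outside the target class is by failing weak PAC learnability, i.e.\ by being almost random (\Cref{thm:ar-equals-anl}). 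Your first design makes $f(p)$ merely "$\mathsf{R}$-containing" when $p\notin\mathsf{EC}$, which keeps the union weakly PAC learnable and hence \emph{inside} the target class — the reduction fails. Your "honest design" overcorrects by replacing the $\mathsf{R}$-trigger with a $\mathfrak{K}_{\mathrm{equiv}}$-trigger, but a graph in $\mathfrak{K}_{\mathrm{equiv}}$ \emph{is} weakly online learnable (\Cref{prop:kequiv}), so the diagonal case $p\in\mathsf{EC}$, $q\in\mathsf{NEC}$ now produces a weakly online learnable union; what you end up with is the reduction for \Cref{thm:firstcomplexityD2} (weakly online but not online learnable), not for this theorem.

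The missing idea is that the $p$-coordinate must be able to push the graph all the way down to absolute non-learnability. The paper takes the first coordinate to be the map $h$ from \Cref{thm:firstcomplexity}, which outputs $\mathcal{R}[fin]\oplus \overline{K_\N}$ (automorphically trivial) when $p\in\mathsf{EC}$ and a graph containing the random graph $\mathcal{R}$ when $p\notin\mathsf{EC}$, and the second coordinate to be the map $f$ from \Cref{thm:firstcomplexityD2}, which outputs $\mathsf{R}[fin]\oplus\overline{K_\N}$ when $q\notin\mathsf{NEC}$ and $\mathsf{R}\oplus\overline{K_\N}$ when $q\in\mathsf{NEC}$. Then: on the diagonal, $h(p)\oplus f(q)$ is not weakly online learnable (it contains $\mathsf{R}$, by \Cref{prop:onlinepacdifferent} and \Cref{lem:almostrandomclosed}) yet weakly PAC learnable (neither summand is almost random, so \Cref{lem:disjointunionalmostrandom} applies); if $p\notin\mathsf{EC}$ the union contains $\mathcal{R}$ and is absolutely non-learnable, hence outside the class; if $q\notin\mathsf{NEC}$ (and $p\in\mathsf{EC}$) the union is automorphically trivial, hence online learnable and again outside the class. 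Your proposal correctly identifies all the supporting lemmas — \Cref{thm:ar-equals-anl}, \Cref{lem:almostrandomclosed}, \Cref{lem:disjointunionalmostrandom} — but never deploys the random graph $\mathcal{R}$, without which the case $p\notin\mathsf{EC}$ cannot be excluded.
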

\begin{proof}
    First notice that the definition of such a class is $D_2(\mathbf{\Sigma}_2^0)$ and, as in the proof of \Cref{thm:firstcomplexityD2}, we reduce the set $\mathsf{EC} \times \mathsf{NEC}$ to such class.

Our desired reduction combines two functions, namely the function $h$ defined in the proof of \Cref{thm:firstcomplexity} and the function $f$ defined in the proof of \Cref{thm:firstcomplexityD2}.  Namely, given in input $(p,q)$ it outputs the disconnected union of $h(p)$ and $f(q)$. To show that this is the desired reduction notice that:
\begin{itemize}
    \item if $(\forall^\infty n)(p(n)=0)$ and $(\exists^\infty n)(q(n)=1)$, then $h(p) \cong \mathcal{R}[fin] \oplus \overline{K_\N}$ and $f(q) \cong \mathsf{R} \oplus \overline{K_\N}$.  Combining \Cref{prop:onlinepacdifferent} and \Cref{lem:almostrandomclosed} we obtain that $h(p) \oplus g(q)$ is not weakly online learnable.  To prove that $h(p) \oplus f(q)$ is weakly PAC learnable it suffices to show that it is is not almost-random (and hence, by \Cref{thm:ar-equals-anl}, absolutely non-learnable). To do so notice that neither $h(p)$ nor $f(q)$ are almost random and hence, by \Cref{lem:disjointunionalmostrandom}, $h(p) \oplus f(q)$ is not almost random as well.
    \item if $(\exists^\infty n)(p(n)=1)$ then $h(p)$ contains a copy of the Random graph $\mathcal{R}$. Combining \Cref{cor:randomgraph} and \Cref{lem:almostrandomclosed} we obtain that $h(p) \oplus g(q)$ is not weakly PAC learnable.
    \item if $(\forall^\infty n)(q(n)=0)$ and $(\forall^\infty n)(q(n)=0)$ then $h(p) \cong \mathcal{R}[fin] \oplus \overline{K_\N}$ and $f(q) \cong \mathsf{R}[fin]\oplus \overline{K_\N}$. Hence $h(p) \oplus f(q)$ is automorphically trival and, by \Cref{thm:at-implies-online}, online learnable.
\end{itemize}
This concludes the proof.
\end{proof}
\subsection{Complexity in the computable case}
\label{sec:complexitycomputable}
In this section, we show that the results from the previous subsection can be extended---suitably adapted---to  classes of \emph{computable graphs}, that is, graphs whose edge relation is decidable by a Turing machine. More formally,  fix a computable enumeration of all partial computable functions $(\varphi_e)_{e \in \mathbb{N}}$ as well as a computable bijection  $(n,m)\mapsto\langle n, m\rangle$ from $\mathbb{N}\times \mathbb{N}$ to  $\mathbb{N}$.

\begin{dfn}
A number $e\in \mathbb{N}$ is a \emph{computable index} for a graph $G$ with $V(G)=\mathbb{N}$ if, for all $n$ and $m\in\mathbb{N}$,
\[
\varphi_e(\langle n,m\rangle)=\begin{cases} 1 &\text{$(n,m)\in E(G)$}\\
    0 &\text{otherwise}.
\end{cases}
\]
A graph $G$ is \emph{computable}, if $G$ has a computable index.
\end{dfn}

%we say that $\varphi_e$ is a computable graph if, fixed a computable bijection from pairs of natural numbers to natural numbers there is computable procedure to decide whether $\varphi_e((i,j))=1$ or $\varphi_e((i,j))=1$

We study the complexity of the following index sets:
\begin{equation}
\tag{$\dagger$}
    \label{eq1}
    \{e : \varphi_e \text{ is an } \mathfrak{P}\text{-learnable graph} \}  \text{ and }
    \{e : \varphi_e \text{ is an } \mathfrak{P}\text{-learnable but not }\mathfrak{P}'\text{-learnable graph} \},
\end{equation}
where $\mathfrak{P}$ and $\mathfrak{P}'$ are one of the learning paradigms introduced in this paper.

\smallskip

The sets in \Cref{eq1} are subsets of the natural numbers are subsets of the natural numbers, and it is customary to study their complexity using the effective analogue of Wadge reducibility, namely \emph{$m$-reducibility}. A set $A\subseteq \N$ is said to be $m$-reducible to a set $B\subseteq \N$ if there exists a computable function $f$ such that $x \in A$ if and only if $f(x) \in B$. The notions of $\Gamma$-hardness and $\Gamma$-completeness  are defined similarly to those in Section \ref{sec:complexityprel}.
%defined analogously to \Cref{sec:complexity}, with  $\Gamma$ now   denoting a collection of subsets of natural numbers. 

To measure the complexity of such sets, we use the \emph{arithmetical hierarchy}, the effective counterpart of (the finite levels of) the Borel hierarchy defined in \Cref{sec:complexityprel}, which classifies subsets of $\mathbb{N}$ that are definable in the language of first-order arithmetic according to the logical form of their defining formulas. Specifically, a set $A \subseteq \mathbb{N}$ is in the class $\Sigma^0_n$ if there exists a computable relation $R \subseteq \mathbb{N}^{n+1}$ such that:
\[
x \in A \iff (\exists y_1 \forall y_2 \ldots Q y_n)\, R(x, y_1, \ldots, y_n),
\]
where $Q$ is an existential quantifier if $n$ is odd, and a universal quantifier if $n$ is even. A set $A$ is in $\Pi^0_n$ if its complement $\omega \smallsetminus A$ is in $\Sigma^0_n$. Finally, a set $X\subseteq \mathbb{N}$ is in $D_2(\Sigma_2^0)$, if $X$ coincides with $A\smallsetminus B$ for some $A,B\in \Sigma^0_2$. The class $\Sigma^0_1$ coincides with the computably enumerable (c.e.) sets, and $\Pi^0_1$ corresponds to the co-c.e.\ sets.  The $\Sigma^0_2$ and $\Pi^0_2$ sets also admit a natural computable approximation. Namely, 
\begin{itemize}
    \item A set $X\subseteq \mathbb{N}$ is $\Sigma^0_2$ if and only if there exists a computable function $g:\mathbb{N}\to \{0,1\}$ so that $x\in X$ if and only if $\liminf_{s\rightarrow \infty} g(\langle x,s\rangle)=1$; that is, $x\in X$ if and only if $g(\langle x,t\rangle)=1$ for all sufficiently large $t$.
    \item  Similarly, a set $X\subseteq \mathbb{N}$ is $\Pi^0_2$ if and only if there exists a computable function $g:\mathbb{N}\to \{0,1\}$ so that $x\in X$ if and only if $\limsup_{s\rightarrow \infty} g(\langle x,s\rangle)=1$; that is, $x\in X$ if and only if $g(\langle x,t\rangle)=1$ for infinitely many $t$. 
\end{itemize}

To transfer our complexity results to the computable setting, we will use some well-known benchmark sets (see \cite[Theorem IV.1.5]{Soare1987}). 

\begin{lem} The following holds:
\begin{itemize}
\item The set $\mathsf{Fin}:=\{e: W_e \text{ is finite}\}$  is $\Sigma^0_2$-complete and its complement $\mathsf{Inf}:=\{e: W_e \text{ is infinite}\}$ is $\Pi^0_2$-complete;
\item The set $\mathsf{Fin} \times \mathsf{Inf}:=\{\langle e,i\rangle : W_e \in \mathsf{Fin} \text{ and }W_i \in \mathsf{Inf}\}$ is  $D_2(\Sigma_2^0)$-complete.
\end{itemize}
\end{lem}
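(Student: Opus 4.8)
The plan is to prove the three completeness statements about $\mathsf{Fin}$, $\mathsf{Inf}$, and $\mathsf{Fin}\times\mathsf{Inf}$ in sequence, starting from the classical fact that the halting set $K=\{e:\varphi_e(e)\!\downarrow\}$ is $\Sigma^0_1$-complete and working up the arithmetical hierarchy. First I would observe that $\mathsf{Fin}$ is $\Sigma^0_2$: indeed $e\in\mathsf{Fin}$ iff $(\exists n)(\forall s)(\forall m)(m\ge n\to m\notin W_{e,s})$, which is of the form $\exists\forall$ over a computable matrix (using the standard computable approximation $W_{e,s}$ to $W_e$). Dually $\mathsf{Inf}$ is $\Pi^0_2$. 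For the hardness of $\mathsf{Fin}$, I would take an arbitrary $\Sigma^0_2$ set $X$, write it via its computable approximation as $x\in X\iff(\exists n)(\forall s)R(x,n,s)$ for computable $R$, and build uniformly in $x$ a c.e.\ set $W_{f(x)}$ that is finite precisely when such an $n$ exists. The standard construction: at stage $s$ we enumerate a fresh marker into $W_{f(x)}$ each time the least candidate $n$ currently believed to work is ``injured'', i.e.\ each time we see some $s'$ with $\lnot R(x,n,s')$ and move to the next candidate; if $X$ holds, only finitely many injuries ever occur and $W_{f(x)}$ is finite, while if $X$ fails every candidate is eventually injured and $W_{f(x)}$ is infinite. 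By the s-m-n theorem $f$ is computable, giving $X\le_m\mathsf{Fin}$; hence $\mathsf{Fin}$ is $\Sigma^0_2$-complete and, taking complements, $\mathsf{Inf}$ is $\Pi^0_2$-complete. (Alternatively one can just cite the classical results, e.g.\ \cite[Theorem IV.1.5]{Soare1987}, rather than rerunning the argument.)

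For $\mathsf{Fin}\times\mathsf{Inf}=\{\langle e,i\rangle: W_e\in\mathsf{Fin}\ \land\ W_i\in\mathsf{Inf}\}$, the upper bound is immediate: it is the intersection of the $\Sigma^0_2$ condition ``$W_e$ finite'' with the $\Pi^0_2$ condition ``$W_i$ infinite'', so it lies in $D_2(\Sigma^0_2)$ (recall $D_2(\mathbf\Gamma)$, and likewise $D_2(\Sigma^0_2)$, is exactly the class of sets $A\cap B$ with $A$ in the class and $B$ in the dual class). For hardness, let $Y\in D_2(\Sigma^0_2)$, say $Y=A\smallsetminus B$ with $A,B\in\Sigma^0_2$. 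Using the completeness of $\mathsf{Fin}$ established above, fix computable reductions $g,h$ with $n\in A\iff g(n)\in\mathsf{Fin}$ and $n\in B\iff h(n)\in\mathsf{Fin}$; since $\mathsf{Inf}=\N\smallsetminus\mathsf{Fin}$, we get $n\notin B\iff h(n)\in\mathsf{Inf}$. Then $n\mapsto\langle g(n),h(n)\rangle$ is a computable $m$-reduction of $Y=A\cap(\N\smallsetminus B)$ to $\mathsf{Fin}\times\mathsf{Inf}$, so the latter is $D_2(\Sigma^0_2)$-hard, hence complete.

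I do not expect any genuine obstacle here: this is the effective mirror of \Cref{lem:canonicalcompletesets}, and the only point requiring a little care is making the c.e.\ approximation in the $\Sigma^0_2$-hardness construction genuinely uniform in the index $x$ (so that the s-m-n application is legitimate) and checking that the injury count is finite exactly on $X$. The product part is then purely formal, exploiting that $D_2(\Sigma^0_2)$-hardness follows from $\Sigma^0_2$-completeness of each coordinate together with the fact that the two coordinates of the target set $\mathsf{Fin}\times\mathsf{Inf}$ are, respectively, a $\Sigma^0_2$-complete set and its ($\Pi^0_2$-complete) complement. As with the topological case, one may simply invoke the cited literature for all three statements and omit the construction entirely.
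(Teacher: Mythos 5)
Your proposal is correct: the quantifier count for the upper bounds, the movable-marker construction for $\Sigma^0_2$-hardness of $\mathsf{Fin}$ (finitely many injuries iff a true witness $n$ exists), and the coordinatewise combination of reductions for the $D_2(\Sigma^0_2)$ case are all the standard arguments and go through without issue. The paper itself offers no proof and simply cites \cite[Theorem IV.1.5]{Soare1987}, exactly the fallback you mention at the end, so your write-up is a correct (and more self-contained) version of what the paper takes for granted.
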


When analyzing the complexity of sets of graphs learnable within a given paradigm, it is natural to restrict attention to indices that actually define graphs, rather than arbitrary computable functions. However, this natural requirement does not come \lq\lq for free\rq\rq.
Indeed, $\mathsf{CG}:=\{e : e \text{ is the index of a computable graph} \}$ is a $\Pi_2^0$-complete set\footnote{This can be easily seen noticing that $e$ is the index for a computable graph if and only if $e$ is the index of a total computable function, and the set $\mathsf{Tot}:=\{e : \varphi_e \text{ is a total computable function}\}$ is a well-known $\Pi_2^0$ set.}. This implies that the sets in \Cref{eq1} are at least $\Pi_2^0$-hard. This is undesired as  most of the learning criteria we considered so far have a $\Sigma_2^0$ definition. 
To address this issue, we will follow a common practice in computable structure theory (see e.g.\ \cite{calvert2006index}): analyzing the \lq\lq interior complexity\rq\rq\ of sets, as formally defined in \Cref{def:complexityinterior}.
Notice that this is not needed for the sets
$\{e : \varphi_e \text{ is an absolutely non-learnable graphs}\}$ and the sets of the second type in \Cref{eq1} as they are already $\Pi_2^0$-hard.

\begin{dfn}
\label{def:complexityinterior}
Let $A\subseteq B\subseteq \mathbb{N}$ and let $\Gamma$  be a class of sets:
\begin{itemize}
    \item We say that $A \in \Gamma$ \emph{within $B$} if and only if $A = R \cap B$ for some $R \in \Gamma$.
    \item We say that $ S \leq_m A$ \emph{within $B$} if and only if there is a computable $f : \mathbb{N} \rightarrow B$ such
that for all $n$ we have $n \in S \iff f (n) \in A$.
\item We say that $A$ is $\Gamma$-complete \emph{within $B$} if and only if $A$ is $\Gamma$ \emph{within $B$} and for every $S \in \Gamma$ we have $S \leq_m A$ \emph{within $B$}.
\end{itemize}

\end{dfn}
Notice that removing the \lq\lq within $B$\rq\rq\ part, one obtains the usual notions of being in $\Gamma$, $m$-reducibility and $\Gamma$-completeness. We also mention that in this paper, $A$ is a set like the ones in \Cref{eq1},
    $B$ is $\mathsf{CG}$ and
     $\Gamma \in \{\Sigma_2^0,\Pi_2^0, D_2(\Sigma_2^0)\}$.

\begin{rem}
    Studying the complexity of a set within another (as done in \Cref{def:complexityinterior}) is an idea that is present also in the literature about PAC/online of finite hypothesis classes. Researchers in this area often refers to the study of \emph{promise problems}, namely learning problems were the learner is promised to receive meaningful inputs and not arbitrary ones (see, \cite{Goldreich2006} for a survey on promise problems).  It is not hard to check that \Cref{def:complexityinterior} captures exactly the complexity of promise problems in the infinite setting: more precisely, in our setting, we \lq\lq promise\rq\rq\ that the input will be an index of a computable graph.
\end{rem}

%Furthermore, in this setting,  the sets we analyze are be subsets of the \emph{arithmetical hierarchy}, the effective analog of the finite levels of Borel hierarchy. This hierarchy is defined on subsets of natural numbers and, by analogy with the Borel classes $\boldsymbol{\Sigma}_n^0$, $\boldsymbol{\Pi}_n^0$ and $\boldsymbol{\Delta}_n^0$ we use the \lq\lq lightface\rq\rq\ symbols ${\Sigma}_n^0$, ${\Pi}_n^0$ and ${\Delta}_n^0$.

With minor modifications to the proofs, one can obtain the effective analogs of \Cref{thm:firstcomplexity}, \Cref{thm:firstcomplexityD2} and \Cref{thm:secondcomplexityD2}.

\begin{thm}
\label{theorem:firsteffectivitazion}
The following holds:
\begin{itemize}
    \item The sets
\[\{e : \varphi_e \text{ is a computable }\mathfrak{P}\text{-learnable graph}\}\]
where $\mathfrak{P} \in \{\text{online, PAC, weakly online, weakly PAC}\}$ are ${\Sigma}_2^0$-complete within $\mathsf{CG}$.
\item The set $\{e : \varphi_e \text{ is a computable absolutely non-learnable graph}\}$
 is $\Pi_2^0$-complete.
 \item   The sets 
     \[\{e : \varphi_e \text{ is a computable  weakly online learnable but not online learnable graph}\} \text{ and }\]
       \[\{e : \varphi_e \text{ is a computable weakly PAC learnable but not weakly online learnable graph}\}\]
      are $D_2({\Sigma}_2^0)$-complete.
\end{itemize}

\end{thm}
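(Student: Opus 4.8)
The plan is to transfer, almost verbatim, the three boldface proofs (\Cref{thm:firstcomplexity}, \Cref{thm:firstcomplexityD2}, \Cref{thm:secondcomplexityD2}) to the effective setting, replacing the topological benchmark sets by the recursion-theoretic ones $\mathsf{Fin}$, $\mathsf{Inf}$, $\mathsf{Fin}\times\mathsf{Inf}$.

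First I would take care of the upper bounds. For a \emph{computable} graph $G$ all the combinatorial conditions occurring in our characterizations are arithmetically simple: for fixed $n,k$, the statements ``$\Fiso{k}{G}$ contains $n$ thresholds'' and ``$\Fiso{k}{G}$ shatters some $n$-tuple'' are $\Sigma^0_1$, since one quantifies existentially over finitely many pairs and finitely many permutations of support $\le k$, and the matrix is decidable because the edge relation of each such copy is uniformly computable in an index for $G$; and, by \Cref{fact:automorphicallytrivialgraphs}, ``$G$ is automorphically trivial'' is equivalent to the existence of a finite $S_0$ such that $G$ restricted to its complement is a clique or an anticlique and all vertices outside $S_0$ have the same neighbourhood within $S_0$, hence is $\Sigma^0_2$. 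Combining these with \Cref{thm:2konline}, \Cref{thm:2kPAC}, \Cref{fct:thresholds}, the characterization of online learnable graphs given by \Cref{thm:at-implies-online} and \Cref{thm:finite-pac-implies-at}, and \Cref{thm:ar-equals-anl}, one gets that the four learnability notions are $\Sigma^0_2$ within $\mathsf{CG}$; that absolute non-learnability is $\Pi^0_2$ (and, being the intersection of the $\Pi^0_2$ set $\mathsf{CG}$ with the $\Pi^0_2$ condition $\VC{\Fiso{2}{G}}=\infty$, genuinely $\Pi^0_2$, so no relativization to $\mathsf{CG}$ is needed); and that the two ``gap'' classes are intersections of a $\Sigma^0_2$ set with a $\Pi^0_2$ set, hence $D_2(\Sigma^0_2)$.

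For hardness, the key remark is that each reduction $h$, $f$, $g$ from the boldface proofs is defined by a monotone stage-by-stage construction in which the only information used about the $(s{+}1)$-st input bit is whether it equals $0$ or $1$. Given a c.e.\ index $e$, I run the usual enumeration of $W_e$ and, at stage $s+1$, perform the ``bit $1$'' action if a new element has just entered $W_e$ and the ``bit $0$'' action otherwise. The analyses from the boldface proofs apply unchanged: if $W_e$ is finite we fall into the ``$(\forall^\infty n)(p(n)=0)$'' case (an automorphically trivial graph, or $\mathcal{R}[\mathrm{fin}]\oplus\overline{K_\N}$, or $\mathsf{R}[\mathrm{fin}]\oplus\overline{K_\N}$, as appropriate), while if $W_e$ is infinite we fall into the ``$(\exists^\infty n)(p(n)=1)$'' case (a graph containing a copy of $\mathcal{R}$, resp.\ of $\mathsf{R}$, resp.\ an element of $\mathfrak{K}_{\mathrm{equiv}}$). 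For the $D_2(\Sigma^0_2)$ results one feeds $\langle e,i\rangle$ and outputs the disconnected union of the two graphs produced from $W_e$ and $W_i$, exactly as in \Cref{thm:firstcomplexityD2} and \Cref{thm:secondcomplexityD2}, invoking \Cref{lem:disjointunionalmostrandom} to control almost-randomness of the union. This yields $m$-reductions from $\mathsf{Fin}$, $\mathsf{Inf}$, and $\mathsf{Fin}\times\mathsf{Inf}$, all landing inside $\mathsf{CG}$.

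The one genuinely delicate point, and the main obstacle, is checking that the graph produced by the effectivized construction is \emph{computable}, i.e.\ that its edge relation is a total computable function, so that the reduction really maps into $\mathsf{CG}$ (needed both for the ``within $\mathsf{CG}$'' clauses and, for the $\Pi^0_2$-completeness of absolute non-learnability, to keep the reduction total). This reduces to verifying that every stage of each construction is finitary: it adds only finitely many new vertices — in particular, the ``bit $1$'' action of $h$, which adds one fresh vertex \emph{per finite subset} of the currently present vertices, adds only finitely many, since at a finite stage there are finitely many such subsets — and it never retracts an edge decision. Since vertices are introduced as least fresh ones, every natural number eventually becomes a vertex, and to decide whether $(n,m)\in E$ one runs the construction until both $n$ and $m$ appear; monotonicity makes the verdict stable thereafter. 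Granting this, the edge relation is total computable uniformly in $e$ (resp.\ in $\langle e,i\rangle$), and the reductions go through, establishing the stated completeness results.
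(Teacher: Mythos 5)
Your proposal is correct and follows essentially the same route as the paper's (sketched) proof: observe that the boldface reductions $h$, $f$, $g$ are given by finitary stage-by-stage constructions, hence computable, and precompose them with the standard translation of a c.e.\ index into a $0$--$1$ sequence so that $\mathsf{Fin}$, $\mathsf{Inf}$, $\mathsf{Fin}\times\mathsf{Inf}$ play the roles of $\mathsf{EC}$, $\mathsf{NEC}$, $\mathsf{EC}\times\mathsf{NEC}$. If anything, you are slightly more careful than the paper on two points it glosses over: you replace the (non-computable) characteristic function of $W_e$ by the computable ``a new element entered $W_e$ at stage $s$'' sequence, and you explicitly verify that each stage of the constructions is finitary and monotone so that the output graphs really lie in $\mathsf{CG}$.
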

\begin{proof}[Sketch of the proof]
    Notice that the reductions $h,f,g$ defined in the proofs of \Cref{thm:firstcomplexity}, \Cref{thm:firstcomplexityD2} and \Cref{thm:secondcomplexityD2} are computable. This is immedate as all of these reductions are defined via an algorithm producing a graph based on a finite fragment of some $p \in 2^\N$.

    Secondly, notice that identifying any computably enumerable set $W_e$ with index $e$ via its characteristic function $\mathsf{Fin}$ is the same as $\mathsf{EC}$, $\mathsf{Inf}$ is the same as $\mathsf{NEC}$, restricted to the computable sequences in $2^\N$ and $\mathsf{Fin} \times \mathsf{Inf}$ is the same as $\mathsf{EC} \times \mathsf{NEC}$. We can conclude that:
    \begin{itemize}
        \item   the reduction $h$ (defined in \Cref{thm:firstcomplexity}) is a computable reduction both from $\mathsf{Fin}$ to the computable hypothesis classes of graphs that are online learnable and from $\mathsf{Inf}$ to the computable hypothesis classes of graphs that are absolutely non-learnable;
        \item  the reduction $(e,i)\mapsto f(e) \oplus g(i)$ (defined in \Cref{thm:firstcomplexityD2}) is a computable reduction from $\mathsf{Fin} \times \mathsf{Inf}$ to the computable hypothesis classes of graphs that are weakly online learnable but not online learnable;
         \item  the reduction $(e,i)\mapsto h(e) \oplus g(i)$ (defined in \Cref{thm:secondcomplexityD2}) is a computable reduction from $\mathsf{Fin} \times \mathsf{Inf}$ to the computable hypothesis classes of graphs that are weakly PAC learnable but not weakly online learnable;
    \end{itemize}
    This concludes the sketch of the proof.
\end{proof}

Notice that the complexity of the set 
$\{e : e \text{ is an index for a computable PAC learnable graph}\}$
     alignes with the result obtained in \cite[Theorem 4.1]{decidingVC} stating that \begin{equation}
     \label{eq2}
     \{e : e \text{ is an index for a computable PAC learnable hypothesis class}\}
 \end{equation}
 is $\Sigma_2^0$-complete within $\mathsf{Tot}$\footnote{Both in \cite[Theorem 4.1]{decidingVC} and \cite{Sterkenburg2022-STEOCO-5} (which presents the same result) the atuhors state the theorem just in terms of $\Sigma_2^0$-completeness. However, their analysis is carried out within a subclass of computable hypothesis classes and one can verify that the complexity results they obtain are referred to promise problems. Using the terminology of \Cref{def:complexityinterior} this means that such set is $\Sigma_2^0$-complete within $\mathsf{Tot}$.}. Here an \emph{hypothesis class}\footnote{In \cite{decidingVC} the author uses the term \lq\lq concept\rq\rq\ instead of \lq\lq hypothesis\rq\rq. In general, a concept class is the class to be learned while the hypothesis class is the set of possible outputs of a learning algorithm. The two classes may not coincide in general, but this is not the case for our case and \cite{decidingVC}. When the two notions coincide the learning is called \emph{proper}.} is a subset $\Hyp$ of $2^X$ for some universe $X$. An hypothesis class $\Hyp=(h_i)_{i \in \N}$ is \emph{computable} if there is a computable function $\varphi$ such that, for $x \in X$ and $i \in \N$ $\varphi(x,i)=1$ if and only if $h_i(x)=1$.
 %as mentioned in \cite[Page 5]{Sterkenburg2022-STEOCO-5}, the analysis is carried out within a subclass of computable hypothesis classes—specifically, one can verify that this restriction corresponds to obtain that such set is $\Sigma_2^0$-complete within $\mathsf{Tot}$.}.

We point out that our approach and the one just described are, in general, incomparable. For instance, there exist computable hypothesis classes that are not contained in any class of the form $\Iso{G}$. A simple example is the class of all elements of $2^{\mathbb{N}}$ that are eventually constant $0$. This cannot be a subset of any $\Iso{G}$, since having the same number of $1$’s in the characteristic function is a necessary condition for two graphs to be isomorphic. Conversely, even when $G$ is a computable graph, the class $\Iso{G}$ may be uncountable, whereas every computable hypothesis class is countable. This highlights a distinctive feature of our framework: the learning problems we consider allow us to meaningfully study uncountable hypothesis classes by representing them through the index of a computable graph. We conclude this section mentioning that an approach considering the computational complexity of uncountable hypothesis classes has been considered in
\cite{contfeatures} using the framework of Weihrauch reducibility. Here the hypothesis space is an arbitrary computable metric
space.

% The reason why \Cref{theorem:firsteffectivitazion} strengthens \cite[Theorem 4.1]{decidingVC} is that is a computable hypothesis class in the sense of \Cref{def:hypclass}.This is almost immediate as every computable PAC learnable graph is automorphically trivial (\Cref{thm:finite-pac-implies-at}) and every $\Iso{G}$ for $G$ being automorphically trivial is by definition countable. Furthermore, as we already discussed, not every computable hypothesis class is contained in $\Iso{G}$ for some $G$. Hence we obtain that the set in \Cref{eq3} is a strict subset of the general case in \Cref{eq2} that remains of the same complexity (i.e., $\Sigma_2^0$-complete).

%Notice that the computable setting has been explored in the context of PAC learning by several authors: to the best of our knowledge, this has never been done for online learning.

\section{Conclusions and future work}
The results of our investigation reveal a scenario in which graphs can be partitioned into four classes (as depicted in \Cref{fig:landscape}), according to which families of their copies can be learned, and in which framework we can learn them. In fact, given a graph $G$, exactly one of the following cases must occur: 
\begin{enumerate}
    \item $G$ is online learnable (or, equivalently, $\bigcup_k \Fiso{k}{G}$ is PAC learnable). 
    \item $G$ is weakly online learnable but not online learnable: in other words, each of the families $\Fiso{k}{G}$ is online learnable, but their union (and, therefore, $\Iso{G}$) is not. 
    \item $G$ is weakly PAC learnable but not weakly online learnable, i.e.~every family $\Fiso{k}{G}$ is PAC learnable, but not online learnable.
    \item $G$ is absolutely PAC learnable, meaning that none of the families $\Fiso{k}{G}$ is PAC learnable.
\end{enumerate}
Note that a complete picture of the precise complexity of such graph classes learnable within a given paradigm is provided.

On the one hand, these results settle in a precise way the boundaries of statistical learning when applied to the problem of learning families of possible labelings of a given graph. On the other hand, the existence of only four distinct classes may suggest that our \lq\lq qualitative\rq\rq\ approach is too coarse to really serve as a measure of the complexity of the family of presentations of a graph. We therefore propose to complement it with a more \lq\lq quantitative\rq\rq\ approach, in order to refine our classification: in this regard, the complexity of online learnable graphs might be compared with respect to the Littlestone dimension of their family of presentations of a graph, while that of weakly online (respectively, PAC) learnable graphs according to how fast the function $k \mapsto \Ld{\Fiso{k}{G}}$ (respectively, $k \mapsto \VC{\Fiso{k}{G}}$) grows. To this end, it is critically important to provide accurate lower and upper bounds to both the Littlestone dimension and the VC dimension of the families $\Fiso{k}{G}$, at least when $G$ belongs to well-studied classes of graphs.

\bibliographystyle{plain}
\bibliography{online}

\end{document}